\newcommand{\naturals}{\mathbb{N}}
\newcommand{\wholes}{\mathbb{N}_0}
\newcommand{\x}{\mathbf{x}}
\newcommand{\y}{\mathbf{y}}
\newcommand{\z}{\mathbf{z}}
\newcommand{\w}{\mathbf{w}}
\newcommand{\uvec}{\mathbf{u}}
\newcommand{\vvec}{\mathbf{v}}
\newcommand{\A}{\mathbf{A}}
\newcommand{\X}{\mathbf{X}}
\newcommand{\I}{\mathbf{I}}
\newcommand{\fancyF}{\mathcal{F}}
\newcommand{\fancyQ}{\mathcal{Q}}
\newcommand{\indi}{{(i)}}
\newcommand{\expText}[1]{\exp\left(#1\right)}
\newcommand{\abs}[1]{\left\lvert#1\right\rvert}
\newcommand{\norm}[2]{\left\lVert#2\right\rVert_{#1}}
\newcommand{\esqnorm}[1]{\left\lVert#1\right\rVert_2^2}
\newcommand{\enorm}[1]{\left\lVert#1\right\rVert_2}
\DeclareMathOperator*{\argmax}{arg\,max}
\newcommand{\Real}{\mathbb{R}}
\newcommand{\inner}[1]{\left\langle#1\right\rangle}
\newcommand{\roundbrack}[1]{\left( #1 \right)}
\newcommand{\curlybrack}[1]{\left\lbrace #1 \right\rbrace}
\newcommand{\squarebrack}[1]{\left\lbrack #1 \right\rbrack}
\newcommand{\textequal}[1]{\stackrel{\text{#1}}{=}}
\newcommand{\textleq}[1]{\stackrel{\text{#1}}{\leq}}
\newcommand{\diffx}{\x_0~-~\Pi_{\X^*}(\x_0)}
\newcommand{\pathlength}{\zeta}
\newcommand{\vel}[1]{\dot{#1}}
\newcommand{\lyapunov}{\varepsilon}
\newcommand{\quasiconvexity}{\text{QC}\xspace}
\newcommand{\LG}{\text{LG}\xspace}
\newcommand{\PKL}{\text{PKL}\xspace} 
\newcommand{\KL}{\text{KL}\xspace}
\newcommand{\GF}{\text{GF}\xspace} 
\newcommand{\HB}{\text{HB}\xspace}
\newcommand{\GD}{\text{GD}\xspace} 
\newcommand{\SGD}{\text{SGD}\xspace} 
\newcommand{\PGD}{\text{PGD}\xspace}
\newcommand{\dist}[2]{\mathrm{dist}\roundbrack{#1, #2}}
\newcommand{\convexSet}{\Omega}
\newcommand{\convexHull}{K}
\newcommand{\futurePath}{\Gamma}
\newcommand{\width}{W}
\newcommand{\shell}{\mathbb{S}}
\newcommand{\convexClosure}{\Omega}
\newcommand{\increment}{\epsilon}
\newcommand{\placeholder}{28^{2d^2}}
\newcommand{\improvedPlaceholder}{2^{(4d\log d) - 1}}
\newcommand{\ceil}[1]{\left \lceil{#1}\right \rceil}
\newcommand{\BlackBox}{\rule{1.5ex}{1.5ex}}  
    \renewenvironment{proof}{\par\noindent{\bf Proof\ }}{\hfill\BlackBox\\[2mm]}
    \newenvironment{proof}{\par\noindent{\bf Proof\ }}{\hfill\BlackBox\\[2mm]}
\newtheorem{theorem}{Theorem}
\newtheorem{lemma}[theorem]{Lemma} 
\newtheorem{remark}[theorem]{Remark}
\newtheorem{corollary}[theorem]{Corollary}
\newtheorem{definition}[theorem]{Definition}
\long\def\acks#1{\vskip 0.3in\noindent{\large\bf Acknowledgments}\vskip 0.2in
\noindent #1}
\newenvironment{proofsketch}{\par\noindent{\bf Proof sketch\ }}{\hfill\BlackBox\\[2mm]}
\newcommand{\rev}[1]{#1}
\newcommand{\revminor}[1]{#1}
\newcommand{\cg}[1]{{\color{red}{[#1]}}}
\begin{document}
\doparttoc 
\faketableofcontents 

\title{Path Length Bounds for Gradient Descent and Flow}

\author{
  Chirag Gupta$^1$, Sivaraman Balakrishnan$^2$ and Aaditya Ramdas$^{12}$  \\
  Machine Learning Department$^1$\\
  Department of Statistics and Data Science$^2$\\
  Carnegie Mellon University\\
  \texttt{chiragg@cmu.edu, \{siva,aramdas\}@stat.cmu.edu} 
}

\maketitle 
\begin{abstract}%
We derive bounds on the path length $\zeta$ of gradient descent (GD) and gradient flow (GF) curves for various classes of smooth convex and nonconvex functions. Among other results, we prove that: (a) if the iterates are linearly convergent with factor $(1-c)$, then $\zeta$ is at most $\mathcal{O}(1/c)$; (b) under the Polyak-Kurdyka-\L ojasiewicz (PKL) condition, $\zeta$ is at most $\mathcal{O}(\sqrt{\kappa})$, where $\kappa$ is the condition number, and at least $\widetilde\Omega(\sqrt{d} \wedge \kappa^{1/4})$; (c) for quadratics, $\zeta$ is $\Theta(\min\{\sqrt{d},\sqrt{\log \kappa}\})$ and in some cases can be independent of $\kappa$; (d) assuming just convexity, $\zeta$ can be at most $2^{4d\log d}$; (e) for separable quasiconvex functions, $\zeta$ is ${\Theta}(\sqrt{d})$. Thus, we advance current understanding of the properties of \GD and \GF curves beyond rates of convergence. We expect our techniques to facilitate future studies for other algorithms. 
\end{abstract}
\part{}
\parttoc

\section{Introduction}
Recent work in machine learning has sought to understand the empirical success of gradient descent (\GD) through trajectory analysis~\citep{ge2015escaping, lee2017first, li2018visualizing, oymak2019overparameterized}. Trajectory analysis techniques aim to discover desirable geometric properties satisfied by the \GD curve that lead to good convergence behavior. 
One such property is an upper bound on the path length of the \GD curve. Path length bounds have been used in recent convergence analyses for deep neural networks~\citep{du2018gradient, du2019gradient,allen2019convergence}. These papers argue that fast convergence can be achieved even if desirable curvature properties 
are not satisfied globally, provided they hold within a local region around initialization. A path length bound is used to ensure that the iterates stay within this fast convergence region. 
In this work, we study the path length of \GD and gradient flow (\GF) curves as an independent object of interest. This problem has been considered from slightly different perspectives in machine learning \citep{oymak2019overparameterized} and convex analysis \citep{bolte2010characterizations, manselli1991maximum, daniilidis2015rectifiability}. We draw from both these areas and further the study by establishing novel upper and lower path length bounds in a variety of settings. 


Formally, consider the problem of minimizing an objective function 
$f: \Real^d \to \Real$ using an iterative update rule. 
An optimization curve refers to the sequence of iterates of the update rule and is denoted by a mapping 
$g: S \to \Real^d$, where $S$ is either $\Real^+_0$ or $\naturals_0$. 
$S = \Real^+_0$ corresponds to the continuous time setting, where the iterative update rule is specified using an ordinary differential equation. Here, we typically denote an element of $S$ as $t$, to be thought of as `time'. On the other hand, $S = \naturals_0$ corresponds to discrete update rules, where we denote an element of $S$ as $k$, to be thought of as an iterate count. In both cases, we use $\x_s$ to denote $g(s)$. Iterative optimization techniques construct this mapping $g$ using local update rules based on the gradient of $f$ at $\x_s$, starting at some initial point $\x_0$. If $f$ is differentiable, one such update rule takes the following form: 
\begin{equation*} 
  \text{Gradient flow (\GF):}\qquad \vel{\x}_t = -\nabla f(\x_t).
\end{equation*}
A forward Euler discretization of the above ordinary differential equation with a fixed step-size $\eta$ yields
\begin{equation*}
    \text{Gradient descent (\GD):} \qquad
    \x_{k+1} = \x_k - \eta \nabla f(\x_k).
\end{equation*}
In this paper, we bound the path lengths of the aforementioned update rules:
\begin{subequations}
\begin{equation*}
\text{(continuous) }\qquad \pathlength(f, \x_0) := \int_0^\infty \enorm{\vel{\x}_t} dt
 = \int_0^\infty \enorm{\nabla f(\x_t)} dt. \quad\ \ \ \ \ \ \ \ 
\end{equation*}
\begin{equation*}
\text{(discrete) }\qquad \pathlength_\eta(f, \x_0) := \sum_{k = 0}^\infty \enorm{\x_k - \x_{k + 1}} = \sum_{k=0}^\infty \enorm{\eta \nabla f(\x_k)}. \qquad\ \ 
\end{equation*}
\end{subequations}
Under the assumptions specified in the next section, the above integrals and sums are well defined---our results implicitly show that they converge to a finite value.

We use the terms \emph{path} and \emph{curve} to refer to the same mathematical object, but we make the following stylistic choices for their usage. The mathematical object itself is typically referred to as the `\GD curve' or `\GF curve', while when discussing its length we always use the term `path length' instead of `curve length'. 

\subsection{Notation and Assumptions}
\label{subsec:notation}
Throughout this paper, we assume that the objective function $f$ is continuously differentiable on $\Real^d$ , and that the minimum of the function $f$ is achieved by at least one point $\x^*$ with $\|\x^*\|_2 < \infty$. Denote the minimum value of the objective as 
\begin{align*}
f^* := \min_{\x \in {\Real^d}} f(\x) ~=~ f(\x^*), 
\end{align*}
and the set of all minimizers as $\X^* := \{\x \in \Real^d : f(\x) = f^*\}$.  By first order optimality, $\nabla f(\x^*) = 0$ is a necessary (but not sufficient) condition for $\x^*$ to be an element of $\X^*$. Under the standard curvature assumptions that we consider later in this paper 
$\X^*$ will be a convex set. Note that $\X^*$ is closed since $f$ is continuous. Hence the projection of the initial point $\x_0$ on the optimal set $\X^*$ is uniquely defined. We denote this projection as $\Pi_{\X^*}(\x_0)$. We then denote the minimum distance between the initialization point $\x_0$ and the optimal set $\X^*$ as
\begin{align*}
\text{dist}(\x_0, \X^*) := \min_{\x^* \in \X^*}\enorm{\x_0 - \x^*} = \enorm{\diffx}, 
\end{align*}
which is finite since $\text{dist}(\x_0, \X^*) \leq \text{dist}(\x_0, \x^*) < \infty$.
Many of our path length guarantees are a product of $\text{dist}(\x_0, \X^*)$ and a factor ($>1$) that depends on curvature or smoothness properties of the function (defined shortly) or the dimension $d$. When we write path length bounds using $\mathcal{O}(\cdot)$ or $\Omega(\cdot)$ notation, we absorb the factor $\text{dist}(\x_0, \X^*)$ as a constant. 

\newcolumntype{H}{>{\setbox0=\hbox\bgroup}c<{\egroup}@{}}

\begin{table}
\centering 
\rev{
 \begin{tabular}{|c|c | c | c|} 
 \hline
 {\bf Theorem} & {\bf Assumption} &  {\bf Upper bound} & {\bf Lower  bound}  \\ 
\hhline{|=|=|=|=|} 
 Theorem~\ref{thm:LCGD}, \ref{thm:LCGext}, \ref{thm:PLlowerBound} & \begin{tabular}{@{}c@{}}\LG, $(1,c)$-linear \\  convergence\end{tabular} & $\mathcal{O}(1/c)$ & $\widetilde\Omega(\sqrt{1/c})$ \\
\hline 
 {Theorem~\ref{thm:PKLLGGF}$^*$, \ref{thm:PLlowerBound}} & \PKL, \LG & $\mathcal{O}(\sqrt{\kappa})$  & $\widetilde{\Omega}(\min \{\kappa^{1/4}, \sqrt{d}\})\ $\\
\hline 
 {Theorem~\ref{thm:QUADGF}, \ref{thm:PLlowerBoundQuadratic}}& Quadratic objective & $\mathcal{O}(\min\{\sqrt{\log \kappa}, \sqrt{d}\})$ & $\Omega(\min \{\sqrt{\log \kappa}, \sqrt{d} \})\ $\\
\hline 
{Theorem~\ref{thm:QC}$^*$}, \ref{thm:PLlowerBoundQuadratic}& Convexity, \LG & $e^{\mathcal{O}(d\log d)} $ & $\Omega(\sqrt{d})$\\
\hline 
 {Theorem~\ref{thm:decomposable}, \ref{thm:PLlowerBoundQuadratic}}& \begin{tabular}{@{}c@{}} Quasiconvexity, \LG, \\ separability\end{tabular} 
 & $\mathcal{O}(\sqrt{d})\ $& $\Omega(\sqrt{d}) $\\
\hline 
\end{tabular}
\caption{\rev{Summary of path length bounds for \GF and \GD. \LG refers to Lipschitz Gradients (Definition~\ref{def:lg}) and \PKL is the Polyak-Kurdyka-\L ojasiewicz condition (Definition~\ref{def:pkl}). Other terms are also defined in Section~\ref{subsec:notation}. 
$^*$The \GF version of Theorem~\ref{thm:PKLLGGF} is due to \citet{bolte2010characterizations}. The \GF version of Theorem~\ref{thm:QC} is due to \citet{manselli1991maximum}. Further relationships to prior work are discussed in Section~\ref{subsec:prior}.} }
\label{table:summary}}
\end{table}

Most of our results are under the following standard smoothness assumption on $f$ (for instance, see \citet[Section 1.2.2]{nesterov2013introductory}). 
\begin{definition}[Lipschitz Gradients (\LG)] For all $\x, \y \in \Real^d$ and some $L > 0$, 
\begin{align*}
\enorm{\nabla f(\x) - \nabla f(\y)} \leq L\enorm{\x - \y}.
\end{align*} 
The above condition implies $f(\y) - f(\x) \leq \inner{\nabla f(\x), \y - \x} + \frac{L}{2}\esqnorm{\y - \x}$.
\label{def:lg}
\end{definition}

We next introduce the curvature conditions under which path length bounds are studied in this paper.

\begin{definition}[Convexity and quasiconvexity (\quasiconvexity)] $f$ is convex if $f(\y) \geq f(\x) + \inner{\nabla f(\x), \y - \x}$ for all $\x, \y \in \Real^d$.
$f$ is quasiconvex if $f(\y) \leq f(\x) \implies \inner{\nabla f(\x), \y - \x} \leq 0$ for all $\x, \y \in \Real^d$. \label{def:qc}
\end{definition}
Since $f$ is differentiable, this condition for quasiconvexity is equivalent to saying that all sub-level sets of $f$ are convex \citep[Chapter 3]{avriel2010generalized}. It is also equivalent to the condition $f(t\x + (1-t)\y) \leq \max\{f(\x),f(\y)\}$ for all  $t \in [0,1]$ \citep{fenchel53}. It is clear from the definition that convex functions are quasiconvex. 

\begin{definition}[Polyak-Kurdyka-\L ojasiewicz condition (\PKL)] For all $\x \in \Real^d$, and some $\mu > 0$, 
\begin{align*}
\esqnorm{\nabla f(\x)} \geq 2\mu \roundbrack{f(\x) - f^*}.
\end{align*}
The PKL condition implies quadratic growth: $f(\x) - f^* \geq \frac{\mu}{2}\text{dist}^2(\x,\X^*)$ \citep{karimi2016linear}, and hence also linear growth for the gradient: $\|\nabla f(\x)\| \geq \mu \cdot \text{dist}(\x,\X^*)$.
\label{def:pkl}
\end{definition}
\rev{This inequality was introduced by \citet{polyak1963gradient} to show linear convergence for gradient descent. Independently, \citet{lojasiewicz1963propriete} showed that a generalized version of this inequality is true locally around a critical point for any real-analytic function. This result was further extended by \citet{kurdyka1998gradients}. The generalized inequality is now referred to as the Kurdyka-\L ojasiewicz (KL) inequality---we refer the reader to \citet{bolte2007lojasiewicz, bolte2010characterizations} for more background.}

The \PKL inequality is weaker than strong convexity:
\begin{align*}
\text{for all $\x, \y \in \Real^d,$ }f(\y) \geq f(\x) + \inner{\nabla f(\x), \y - \x} + \frac{\mu}{2}\esqnorm{\y - \x},
\end{align*}
in the sense that $\mu$-strongly convex functions are also $\mu$-\PKL. Linear convergence can be shown for \GD if $f$ satisfies $\mu$-\PKL (instead of the stronger $\mu$-strong convexity) and has Lipschitz gradients \citep{polyak1963gradient}. Yet the \PKL condition incorporates a significantly larger class of functions; in particular \PKL functions can even be nonconvex. Nonconvex \PKL function arise naturally in some machine learning optimization problems (for instance, see Lemma 26 \citep{soltanolkotabi2019theoretical}, Section 3~\citep{fazel2018global}, Equation (3.4) \citep{balakrishnan2017statistical} for nonconvex examples and Section 2.3 \citep{karimi2016linear} for convex but non-strongly convex examples). 

Convergence of \GD is often studied with respect to its dependence on the ratio of $L$ and $\mu$, called as the condition number (for instance, see \citet[Section 2.1.3]{nesterov2013introductory}).
\begin{definition}[Condition number]
For a $\mu$-\PKL function $f$ with $L$-Lipschitz gradients, we define the condition number $\kappa$ as $L/\mu$. 
\end{definition}
Convex quadratic objective functions satisfy \PKL and \LG, and the $\kappa$ here turns out to be the ratio of the largest and smallest non-zero singular values of the Hessian.
We also consider path length bounds under a general linear convergence assumption, formalized next. The following definition is for any discrete-time iterative updates $\{\x_k\}_{k \in \naturals_0}$ or any continuous-time dynamics $\{\x_t\}_{t \geq 0}$ (not restricted to \GD or \GF).
\begin{definition}[$(A, c)$-Linear convergence]
\label{def:linConvOptimal}
We say that a procedure is linearly convergent with constants $c \in (0, 1)$ and $A \geq 1$ if for every initial point $\x_0 \in \Real^d$ and any $s \geq 0$,
\begin{equation}
    \dist{\x_s}{\X^*}  \leq A(1-c)^s\dist{\x_0}{\X^*}. \label{eqn:linConv}
\end{equation}
\end{definition}
In the discrete case, we have $s \in \wholes$ and in the continuous case we have $s \in \Real_0$. In the discrete case $c$, should really be thought of as $c_\eta$ since the linear convergence rate depends on the step-size $\eta$. However, we use $c$ in both cases to simplify exposition. 

Table~\ref{table:summary} summarizes our results in these various settings. Figure~\ref{fig:vennClasses} shows the containment relationships between various conditions introduced here (these are standard known facts). Finally, we note that the smoothness and curvature assumptions we make are global. However, since path length bounds imply that the iterates always stay within a ball, these assumptions can always be restricted to that ball (as is done formally by \citet{oymak2019overparameterized}). 

\begin{figure}[t]
    \centering
    \includegraphics[width=\textwidth,trim=0cm 2cm 0cm 1cm, clip]{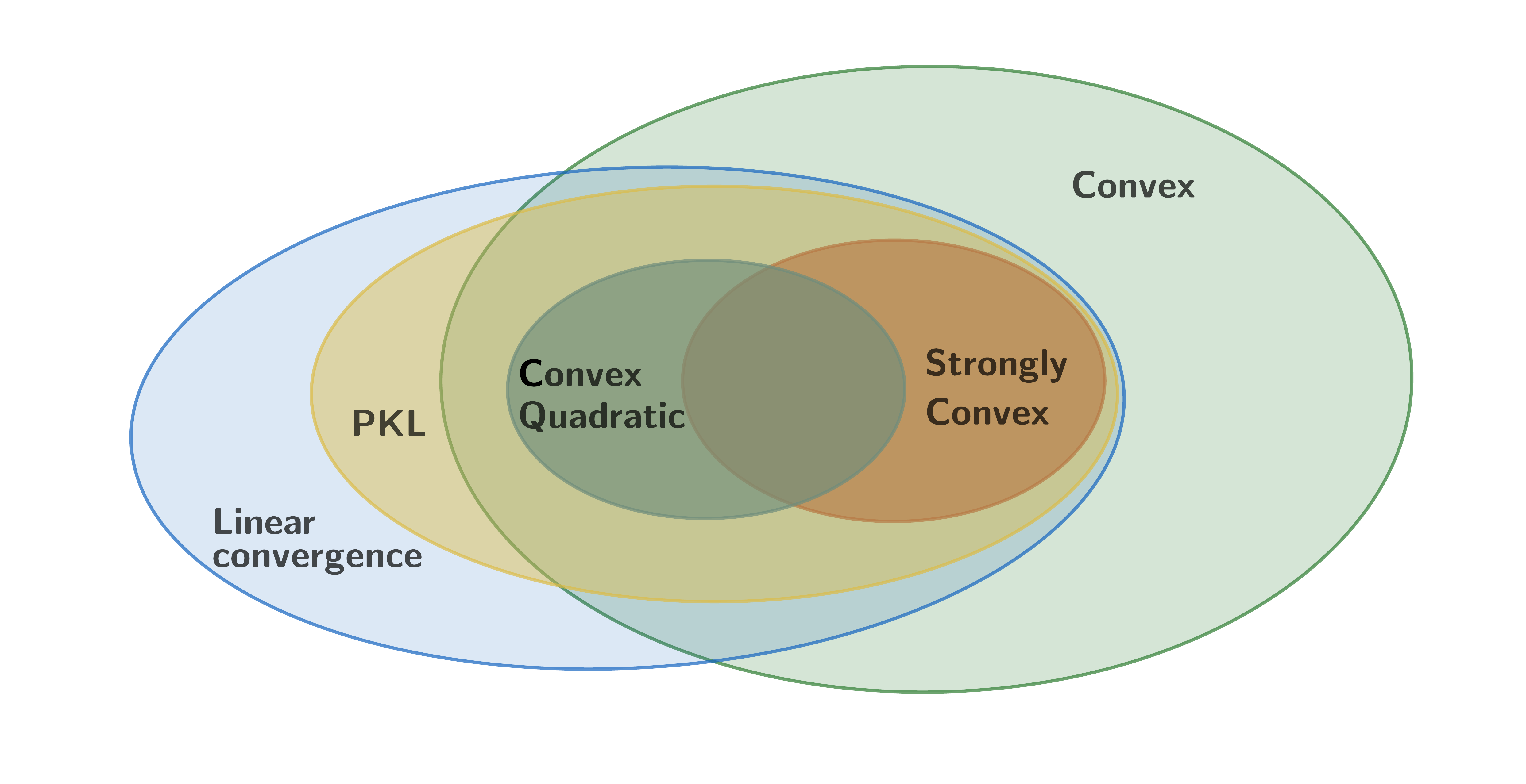}
    \caption{Venn diagram showing different curvature conditions (defined in Section~\ref{subsec:notation}) and their relationships. ``Linear convergence" refers to functions for which \GD or \GF is linearly convergent. The quadratic losses we consider may have Hessians with zero singular values, in which case the loss is not strongly convex. }
    \label{fig:vennClasses}
\end{figure}

\ifx false 
\rev{\subsection{Problem Motivation}
Recent work in machine learning has sought to expand the study of \GD and \SGD from convergence rates to properties of the specific trajectory or curve followed by the optimization iterates~\citep{oymak2019overparameterized, li2018visualizing, ge2015escaping, lee2017first}. The length of the trajectory is one such property that we study in this work. Path length bounds have been used implicitly in recent papers that have provided convergence and generalization guarantees for deep neural networks~\citep{du2018gradient, du2019gradient, oymak2019towards, soltanolkotabi2019theoretical,allen2019convergence}, in order to prove that the parameters stay within a local optimization region in which desirable smoothness properties are satisfied. \cg{\sout{The proof technique used to provide these guarantees is often as follows: (1) at initialization, the function is locally strongly convex due to overparameterization; (2) during optimization the parameters don't change too much because of a path length bound; 
(3) this fact ensures that the we don't leave the local optimization region in which strong convexity is satisfied; (4) thus we can use strong convexity to guarantee linear convergence rates.}}

While the focus of this paper is on path length bounds for \GD and \GF, we also note other problems for which path length bounds have been considered. Bounds on the $\ell_1$ path length of lasso and forward stagewise regression have been studied by~\citet{hastie2007forward}. Adaptive regret bounds for bandits have been shown that depend on the total path length of the observed losses at each step~\citep{wei2018more, pmlr-v99-bubeck19b}. \citet{argue2019nearly} showed a result for nested convex body chasing using a bound on the path length of self-contracted curves~\citep{manselli1991maximum}, which was originally developed for proving the convergence of \GF for any quasiconvex function.}
\fi

\subsection{Prior Work}
\label{subsec:prior}
The problem of bounding the length of \GF curves of quasiconvex functions was first considered by~\citet{manselli1991maximum}. Their analysis is via a reduction to Lipschitz continuous curves (meaning that the curve map $g: \Real \to \Real^d$ is Lipschitz continuous) that exhibit the self-contracted property (see Definition~\ref{def:selfcontracted}). Under an a priori assumption that the \GF curve has finite length, they prove that the length can be at most $2^{\mathcal{O}(d \log d)}$.~\citet[Corollary~2.4]{daniilidis2015rectifiability} show that the finiteness assumption can be dispensed. Further while Manselli and Pucci require the \GF curve to be smooth, Daniilidis et al. provide a new analysis that includes non-smooth self-contracted curves (with a weaker $2^{10d^2}$ bound). This fact enables us to extend the analysis from \GF curves to \GD curves in the convex+\LG case (Theorem~\ref{thm:QC}), leading to the first bound in this setting. Path length bounds for self-contracted curves have been studied more generally in non-Euclidean settings~\citep{stepanov2017self, daniilidis2018self}. It has also been shown that self-contracted curves that satisfy certain smoothness properties must be \GF curves of convex functions~\citep{durand2019self}. The papers in this line of work use geometric properties of self-contracted curves without referencing the specifics of the \GF dynamics.   

While these papers are concerned about fundamental questions about \GF curves under the mild assumption of quasiconvexity, we know that quasiconvexity, or even convexity, is not enough to obtain the fast convergence rates that optimization algorithms exhibit empirically in many machine learning settings. Indeed, we may hope for path length bounds that are potentially independent of $d$ when assumptions such as strong convexity or \PKL hold. ~\citet{bolte2010characterizations} showed a dimension independent path length bound for \GF curves whenever $f$ satisfies a Kurdyka-{\L}ojasiewicz (\KL) condition, a larger class that subsumes the \PKL condition. 
\rev{Their result provides the best known upper bound of $\mathcal{O}(\sqrt{\kappa})$ for \GF curves of \PKL functions (reproduced in Theorem~\ref{thm:PKLLGGF}). However, for quadratic objective functions, we  show an exponentially improved $\mathcal{O}(\sqrt{\log \kappa})$ bound (Theorem~\ref{thm:QUADGF}). The best known bound for strongly convex functions is the $\mathcal{O}(\sqrt{\kappa})$ bound for \PKL functions, but our findings for quadratics suggests that this bound may be loose. We also show two novel lower bounds in these settings: $\widetilde{\Omega}(\kappa^{1/4})$ for \PKL functions (Theorem~\ref{thm:PLlowerBound}) and $\Omega(\sqrt{\log \kappa})$ for quadratics (Theorem~\ref{thm:PLlowerBoundQuadratic}). Thus, although the \KL condition holds very generally, an all purpose bound depending on the KL constant may not be tight. \citet{oymak2019overparameterized} analyzed \GD and \SGD path lengths for nonlinear least square objectives, under spectral assumptions on the Jacobian of the non-linear mapping. 
Apart from least squares objectives, 
they also showed an $\mathcal{O}(\kappa)$ bound for PKL functions. We improve their result by showing an $\mathcal{O}(\sqrt{\kappa})$ bound for \GD (Theorem~\ref{thm:PKLLGGF}).}

\subsection{Organization} 
The rest of the paper is organized as follows: 
\begin{enumerate} 
    \item In Section~\ref{sec:linconv}, we prove a general purpose path length bound for \GD and \GF curves that is applicable under any set of curvature conditions for which linear convergence can be established. \rev{Using the technique of Section~\ref{sec:linconv}, in Appendix~\ref{appsec:additionalLC} we show path length bounds for Polyak's heavy ball method and projected gradient descent.}
    \item In Section~\ref{sec:PKL} we show path length bounds for \GD and \GF curves of \PKL functions. In Section~\ref{sec:PLlowerBound}, we present a worst case lower bound in the same setting. 
    \item In Section~\ref{sec:quadratics}, we provide the tightest known path length bound for \GD and \GF curves of convex quadratic objective functions (potentially overparameterized). In Section~\ref{sec:QuadraticLowerBound}, we provide a a matching lower bound construction. 
    \item In Section~\ref{sec:quasiconvex} we derive explicit dimension dependent path length bounds for quasiconvex functions (\GF) and convex functions (\GD and \GF). This leads to the first known bounds for GD curves in this setting.
\end{enumerate}

Table~\ref{table:summary} summarizes the results in this paper. 
We remark that proofs in the \GF case are often more straightforward, but lead to conclusions that continue to hold in the \GD case with appropriate step-size restrictions.

\section{Dimension Independent Path Length Bounds}
\label{sec:dimBounds}
In this section, we provide dimension independent bounds on the length \GD and \GF curves. All the bounds in this section hold for functions for which \GD and \GF exhibit linear convergence towards the optimal set. In particular we discuss PKL objectives, strongly convex objectives, and convex quadratic objectives. More generally if the function does not belong to one of the aforementioned function classes, but linear convergence is known to hold, we can prove path length bounds that depend on the constant of convergence. We first prove this meta-theorem and then discuss specific function classes (where the meta-theorem bound can be improved). 

\subsection{General Bounds Under Linear Convergence}
\label{sec:linconv}
Linear convergence (Definition~\ref{def:linConvOptimal}) is known for \GD (with appropriate step-size) if the function has Lipschitz gradients and is strongly convex with $A = 1$ and $c = 1/\kappa$. For \PKL functions we have such a linear convergence result in function values instead of distance to the optimal set with $A = 1$ and $c = 1/\kappa$~\citep[Theorem~1]{karimi2016linear}. Using the LG condition and a quadratic growth result due to~\citet[Theorem~2]{karimi2016linear}, this convergence in function value can be converted to parametric linear convergence in the sense of Definition~\ref{def:linConvOptimal} with $A = \kappa$ and $c = 1/\kappa$. All of these convergence results are also known for \GF with $c = 1 - e^{-\mu}$ in each  case. \GD is also known to be linearly convergent when performing maximum likelihood estimation in logistic regression with unseparable data~\citep[Theorem~3.3]{freund2018condition}. In the theorem to follow, we guarantee a path length bound in each of these settings. 

\begin{theorem}
\label{thm:LCGD}
	Suppose $f$ has $L$-Lipschitz gradients. If the \GF dynamics for $f$ exhibits linear convergence with constants $(A, c)$, then its path length is bounded as: 
    \begin{equation}
        \label{eqn:LCGF}
        \pathlength \leq (AL/\log\roundbrack{1/(1-c)})\ \dist{\x_0}{\X^*},
    \end{equation}
and if the \GD iterates with step-size $\eta$ exhibit linear convergence with constants $(A, c)$, then their path length is bounded as: 
    \begin{equation}
        \label{eqn:LCGD1}
        \pathlength_\eta \leq (\eta AL/c)\ \dist{\x_0}{\X^*}.    
    \end{equation}
\end{theorem}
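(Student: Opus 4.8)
The plan is to convert the assumed \emph{rate} of convergence, stated in terms of $\dist{\x_s}{\X^*}$, into a bound on the \emph{speed} $\enorm{\nabla f(\x_s)}$ at each point of the curve, and then simply integrate (continuous case) or sum (discrete case) the resulting geometrically decaying bound. The one structural fact I would use is that the gradient vanishes on $\X^*$: for any $\x^*\in\X^*$, first-order optimality gives $\nabla f(\x^*)=0$. Applying the \LG inequality to the pair $(\x_s,\Pi_{\X^*}(\x_s))$ then yields the pointwise estimate
\begin{equation*}
\enorm{\nabla f(\x_s)} = \enorm{\nabla f(\x_s) - \nabla f(\Pi_{\X^*}(\x_s))} \leq L\,\enorm{\x_s - \Pi_{\X^*}(\x_s)} = L\,\dist{\x_s}{\X^*},
\end{equation*}
valid for every $s$ (the closest point exists since $\X^*$ is nonempty and closed).

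\textbf{Continuous case.} Write $1-c = e^{-\gamma}$ with $\gamma = \log(1/(1-c))>0$, so that the hypothesis $\dist{\x_t}{\X^*}\le A(1-c)^t\dist{\x_0}{\X^*}$ reads $\dist{\x_t}{\X^*}\le A e^{-\gamma t}\dist{\x_0}{\X^*}$. Substituting the pointwise estimate into the definition of $\pathlength$ and integrating,
\begin{equation*}
\pathlength = \int_0^\infty \enorm{\nabla f(\x_t)}\,dt \;\le\; L A\,\dist{\x_0}{\X^*}\int_0^\infty e^{-\gamma t}\,dt \;=\; \frac{L A}{\log(1/(1-c))}\,\dist{\x_0}{\X^*},
\end{equation*}
which is exactly \eqref{eqn:LCGF}; in particular the integral is finite. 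The discrete case is identical with a geometric series in place of the integral: the pointwise estimate gives $\enorm{\eta\nabla f(\x_k)}\le \eta L A(1-c)^k\dist{\x_0}{\X^*}$, and summing over $k\ge 0$ produces $\pathlength_\eta \le (\eta L A/c)\,\dist{\x_0}{\X^*}$, which is \eqref{eqn:LCGD1}.

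\textbf{Main obstacle.} Honestly there is no real obstacle: the whole argument rests on the elementary observation above that \LG lets one bound the gradient at a curve point by $L$ times its distance to $\X^*$ (comparing to the zero gradient at the projection), after which the linear-convergence rate does all the work. The only points requiring a little care are (i) ensuring $\X^*$ is nonempty and closed so that a closest point exists, which is guaranteed by the standing assumptions, and (ii) translating the discrete contraction constant $c$ correctly into the continuous-time decay rate $\log(1/(1-c))$ rather than $c$, which is where the two bounds differ. Well-definedness of the \GF solution on all of $[0,\infty)$ follows from Picard--Lindel\"of (the field $-\nabla f$ is Lipschitz) together with the a priori boundedness of $\dist{\x_t}{\X^*}$ implied by the hypothesis, so no finite-time blowup occurs.
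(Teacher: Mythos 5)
Your proposal is correct and follows essentially the same route as the paper's proof (given in Appendix A as Theorem~\ref{thm:LCGDextended}): use $\nabla f = 0$ on $\X^*$ together with \LG to bound $\enorm{\nabla f(\x_s)}$ by $L\,\dist{\x_s}{\X^*}$, then plug in the linear-convergence rate and integrate the resulting exponential (continuous case) or sum the geometric series (discrete case). The only cosmetic difference is that the paper states the result in slightly greater generality, allowing convergence to any stationary convex set $\widehat{\X}$ rather than only $\X^*$.
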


\begin{proofsketch}
The detailed proof of a more general result can be found in Theorem~\ref{thm:LCGDextended}, Appendix~\ref{appsec:additionalLC}. As a consequence of linear convergence, as the distance to the optimal set decreases geometrically, so does the contribution of consecutive iterates to the path. As illustrated in Figure~\ref{fig:linConv}, we then have the bound
\begin{equation}
 \pathlength_\eta \leq \dist{\x_0}{\X^*}\roundbrack{\eta AL + (1-c)\eta AL + (1-c)^2 \eta AL + \ldots} = (\eta AL / c)\ \dist{\x_0}{\X^*}, \label{eqn:linConvRecurrence}
\end{equation}
which proves claim~\eqref{eqn:LCGD1}. 
\end{proofsketch}
\begin{figure}[t!]
    \centering
    \includegraphics[width=0.8\linewidth,trim=3cm 2cm 3.5cm 6.5cm]{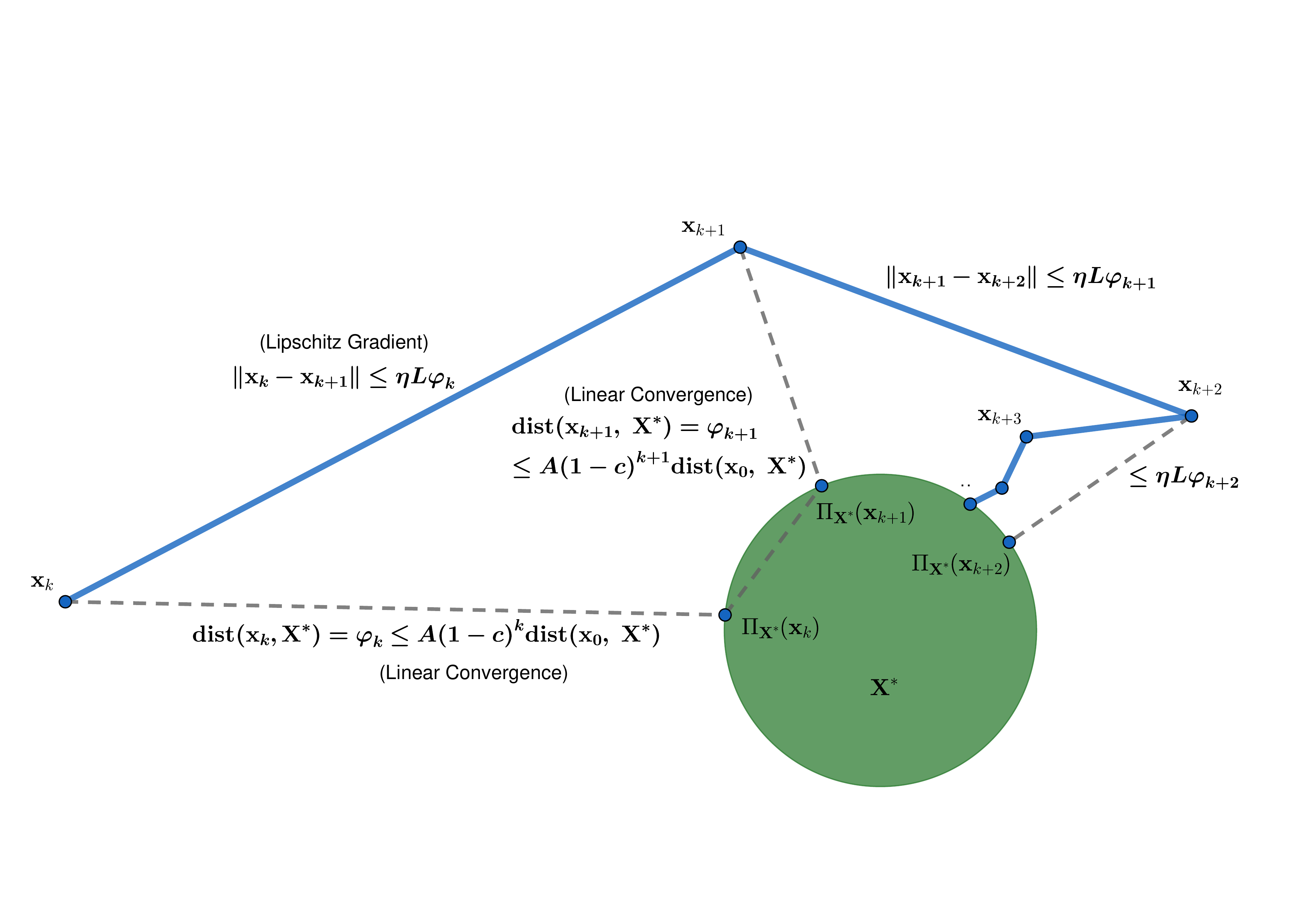}
    \caption{Path length bound under linear convergence. The \GD curve is shown with solid blue lines. The inequalities in bold upper bound individual terms in the path length sum, as noted in Equation~\eqref{eqn:linConvRecurrence}.}
    \label{fig:linConv}
\end{figure}
\begin{remark}\label{remark:linConv}
Variants of this theorem can be derived in more general settings:
\begin{enumerate}[label=(\alph*)]
    \item 
    Theorem~\ref{thm:LCGDextended} in Appendix~\ref{appsec:additionalLC} extends the result from linear convergence towards the globally optimal set $\X^*$ to linear convergence towards any convex set of points that satisfy first order optimality conditions.
    \item  \rev{Theorem~\ref{thm:LCPGD} in Appendix~\ref{appsec:additionalLC} extends the result to projected gradient descent, with a larger multiplicative constant.}
    \item Theorem~\ref{thm:LCGext} extends the result to any iterative update rule not limited to \GD, if descent can be established with respect to all minimizers in $\X^*$.
\end{enumerate}

\end{remark} 
\rev{For \GD with a $\mu$-strongly convex function that has $L$-Lipschitz gradients, it is a standard result that linear convergence holds for $\eta \leq 1/L$, with $A = 1$ and $c = \eta \mu$. This leads to the bound $\pathlength_\eta \leq \kappa\enorm{\x_0 - \x^*}$. For \GF in the same setting, linear convergence holds with $A = 1$ and $c = 1 - e^{-\mu}$. Since $\log(1/(1-c)) = \log(e^{\mu}) = \mu$, the bound also resolves to $\pathlength \leq \kappa\enorm{\x_0 - \x^*}$.} However, the $\mathcal{O}(\kappa)$ dependence can be improved. In Section~\ref{sec:PKL}, following the work of~\citet{bolte2010characterizations}, we show that in fact the $\kappa$ bound for $\mu$-strongly convex functions can be improved to $\mathcal{O}(\sqrt{\kappa})$ with a weaker $\mu$-\PKL assumption. In Section~\ref{sec:quadratics} we study a specific strongly convex problem, namely convex quadratic objective functions. In this case we show that the above bound can be  improved to $\mathcal{O}(\sqrt{\log\kappa})$. 

Theorem~\ref{thm:LCGD} can be generalized to include any iterative update rule (not limited to \GD), and any function class for which linear convergence holds, as long as descent holds with respect to all minimizers in $\X^*$. In the following, we denote the update for a point $\x$ as $\x^+$. 

\rev{
\begin{theorem}
    If the iterates for a discrete update rule satisfy the following conditions: 
    \begin{enumerate}[label=(\alph*)]
    \item linear convergence with constants $(A, c)$, and 
    \item descent towards all minimizers: for all $\x^* \in \X^*$, $\x \in \Real^d$, $\enorm{\x^+ - \x^*} \leq \enorm{\x - \x^*}$, 
\end{enumerate}     
then their path length is bounded as: 
    \label{thm:LCGext}
\begin{equation}
        \label{eqn:LCGD2}
        \pathlength_\eta \leq \roundbrack{\frac{2A}{c}} \dist{\x_0}{\X^*}.
    \end{equation}
\end{theorem}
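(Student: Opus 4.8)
The plan is to bypass the gradient-magnitude estimate used in Theorem~\ref{thm:LCGD} and instead bound each step length $\enorm{\x_k - \x_{k+1}}$ directly by the distance to $\X^*$, using only the descent hypothesis (b) together with the triangle inequality. Fix $k$, write $\x_{k+1} = \x_k^+$, and pick any $\x^\star \in \X^*$. Applying hypothesis (b) with this $\x^\star$ gives $\enorm{\x_{k+1} - \x^\star} \le \enorm{\x_k - \x^\star}$, so
\[
\enorm{\x_k - \x_{k+1}} \le \enorm{\x_k - \x^\star} + \enorm{\x^\star - \x_{k+1}} \le 2\enorm{\x_k - \x^\star}.
\]
Since this holds for every $\x^\star \in \X^*$, taking the infimum over $\X^*$ gives $\enorm{\x_k - \x_{k+1}} \le 2\,\dist{\x_k}{\X^*}$; equivalently one may take $\x^\star$ to be a nearest point of $\x_k$ in $\X^*$, which exists since $\X^*$ is nonempty and closed.

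Next I would invoke hypothesis (a), the bound $\dist{\x_k}{\X^*} \le A(1-c)^k \dist{\x_0}{\X^*}$, and sum the per-step estimate. Using $\sum_{k \ge 0}(1-c)^k = 1/c$ (finite since $c \in (0,1)$),
\[
\pathlength_\eta = \sum_{k=0}^\infty \enorm{\x_k - \x_{k+1}} \le 2\sum_{k=0}^\infty \dist{\x_k}{\X^*} \le 2A\,\dist{\x_0}{\X^*}\sum_{k=0}^\infty (1-c)^k = \frac{2A}{c}\,\dist{\x_0}{\X^*},
\]
which is exactly~\eqref{eqn:LCGD2}. No a priori finiteness of the path length is required, since the partial sums are nondecreasing and bounded above.

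There is no real obstacle here; the content of the result is precisely that hypothesis (b) lets one trade the potentially large factor $\eta L$ appearing in Theorem~\ref{thm:LCGD} for the universal constant $2$. The only points needing care are (i) invoking (b) with the \emph{same} minimizer $\x^\star$ in both terms of the triangle inequality before passing to the infimum over $\X^*$, and (ii) if one prefers to phrase things through the projection $\Pi_{\X^*}$, noting that closedness of $\X^*$ already guarantees existence of a nearest point, which is all that is used (uniqueness, hence convexity of $\X^*$, is not needed). A continuous-time analogue would replace the geometric sum by an integral, but since the statement is given for discrete updates only, this is not needed.
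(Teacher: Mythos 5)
Your proof is correct and matches the paper's argument essentially line for line: the paper also bounds each step by the triangle inequality through a nearest point of $\X^*$, applies hypothesis (b) to replace $\enorm{\x_{k+1} - \Pi_{\X^*}(\x_k)}$ with $\enorm{\x_k - \Pi_{\X^*}(\x_k)}$, and then sums the geometric series from hypothesis (a). Your remark that only existence (not uniqueness) of a nearest point is needed is a fine observation but does not change the route.
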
}
\ifx false 
\begin{theorem}
\label{thm:LCGext}
    Given any $f$, if the iterates for a discrete update rule ($\x \to \x^+$) exhibit linear convergence with constants $(A, c)$ to a fixed minimum $\x^*$, then their path length is bounded as: 
    \begin{equation}
        \label{eqn:LCGD2}
        \pathlength_\eta \leq ((A + A^2(1 - c))/c) \enorm{\x_0 - \x^*}.    
    \end{equation}
\end{theorem}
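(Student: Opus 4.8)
The plan is to estimate each segment $\enorm{\x_k - \x_{k+1}}$ of the trajectory by a geometrically decaying quantity and then sum the resulting geometric series, exactly in the spirit of the proof sketch of Theorem~\ref{thm:LCGD}. The only ingredients are the triangle inequality and the linear‑convergence hypothesis. A key observation I will use first: since the procedure is a fixed discrete update rule $\x \mapsto \x^+$, it is time‑homogeneous, so the subsequence $\x_k, \x_{k+1}, \x_{k+2}, \dots$ issued from $\x_k$ is again a trajectory of the rule; hence Definition~\ref{def:linConvOptimal} applies verbatim with $\x_k$ as the initial point, giving $\enorm{\x_{k+s} - \x^*} \le A(1-c)^s \enorm{\x_k - \x^*}$ for every $k$ and $s$.

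Next I would write, for every $k \ge 0$, $\enorm{\x_k - \x_{k+1}} \le \enorm{\x_k - \x^*} + \enorm{\x_{k+1} - \x^*}$ by the triangle inequality, and then apply the "restarted" linear‑convergence bound with $s = 1$ at $\x_k$ to get $\enorm{\x_{k+1} - \x^*} \le A(1-c)\enorm{\x_k - \x^*}$. This yields the per‑segment estimate $\enorm{\x_k - \x_{k+1}} \le \roundbrack{1 + A(1-c)}\enorm{\x_k - \x^*}$. Applying linear convergence along the original trajectory ($s = k$ from $\x_0$) gives $\enorm{\x_k - \x^*} \le A(1-c)^k \enorm{\x_0 - \x^*}$ — for $k=0$ this is simply $\enorm{\x_0 - \x^*} \le A\enorm{\x_0 - \x^*}$, which holds because $A \ge 1$. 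Combining the two estimates and summing,
\begin{equation*}
\pathlength_\eta = \sum_{k=0}^{\infty} \enorm{\x_k - \x_{k+1}} \le \roundbrack{1 + A(1-c)}\, A\, \enorm{\x_0 - \x^*} \sum_{k=0}^{\infty} (1-c)^k = \frac{A + A^2(1-c)}{c}\,\enorm{\x_0 - \x^*},
\end{equation*}
using $\sum_{k\ge 0}(1-c)^k = 1/c$ since $c \in (0,1)$. This is exactly the claimed bound, and the convergence of the geometric series simultaneously certifies that $\pathlength_\eta < \infty$.

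There is no genuine obstacle here; the argument is a two‑line triangle‑inequality‑plus‑geometric‑series estimate. The single point worth a sentence of care is the legitimacy of restarting the linear‑convergence bound at an arbitrary iterate $\x_k$ — this is what lets us replace $\enorm{\x_{k+1} - \x^*}$ by $A(1-c)\enorm{\x_k - \x^*}$ rather than the weaker $A(1-c)^{k+1}\enorm{\x_0 - \x^*}$ — and it is valid precisely because a discrete update rule $\x \mapsto \x^+$ is memoryless, so Definition~\ref{def:linConvOptimal}, which quantifies over \emph{every} initial point, applies to the shifted sequence. (If one used only the non‑restarted bound, one would still obtain a finite bound of order $\mathcal{O}(A/c)\,\enorm{\x_0-\x^*}$, but with a constant that does not match $(A+A^2(1-c))/c$ on the nose.)
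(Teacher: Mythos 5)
Your proof is correct and matches the paper's argument essentially line for line: the triangle inequality $\enorm{\x_k - \x_{k+1}} \leq \enorm{\x_k - \x^*} + \enorm{\x_{k+1} - \x^*}$, the restarted linear-convergence bound with $s=1$ to get $\enorm{\x_{k+1}-\x^*} \leq A(1-c)\enorm{\x_k-\x^*}$, the decay bound $\enorm{\x_k-\x^*}\leq A(1-c)^k\enorm{\x_0-\x^*}$, and summing the geometric series. Your explicit justification for restarting the linear-convergence bound at $\x_k$ (time-homogeneity of the update rule plus Definition~\ref{def:linConvOptimal} quantifying over all initial points) is a point the paper leaves implicit, but the proof is the same.
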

 
\begin{proofsketch}
The proof of a more general result can be found in Appendix~\ref{appsec:additionalLC} (see Theorem~\ref{thm:LCGD2extended}). 
Observe that 
\begin{align*}
\enorm{\x_k - \x_{k+1}} \leq \enorm{\x_k - \x^*} + \enorm{\x_{k+1} - \x^*} &\leq (1 + A(1 - c))\enorm{\x_k - \x^*} \\ &\leq (1 + A(1 - c)) A(1-c)^k \enorm{\x_0 - \x^*}.
\end{align*}
Consequently, we may bound the path length as: 
\begin{equation*}
 \pathlength_\eta \leq (A + A^2(1 - c))\enorm{\x_0 - \x^*}\roundbrack{1 + (1-c) + (1-c)^2 + \ldots} = (A + A^2(1 - c))\enorm{\x_0 - \x^*}/c,
\end{equation*}
 as required.
\end{proofsketch}
\fi
The proof of a more general result can be found in Appendix~\ref{appsec:additionalLC} (see Theorem~\ref{thm:LCGD2extended}). \rev{
Note that if $\X^*$ is singleton and $A = 1$, condition (b) is implied by condition (a). 
Using this observation, in Corollary~\ref{thm:HBSCLG} (Appendix~\ref{appsec:additionalLC}) we show a path length bound of $\sqrt{\kappa}\enorm{\x_0 - \x^*}$ for Polyak's heavy ball method~\citep{polyak1964some} with a twice continuously differentiable and strongly convex function with Lipschitz gradients. 

For \GD, condition (b) of Theorem~\ref{thm:LCGext} is known if $f$ is convex, has $L$-Lipschitz gradients and $\eta \leq 1/L$---in fact Lemma~\ref{lemma:QCGD} argues that the stronger property of self-contractedness (Definition~\ref{def:selfcontracted}) holds. However, there exist scenarios where condition (a) is known but not condition (b), such as nonconvex \PKL functions. In such cases, Theorem~\ref{thm:LCGD} can still be applied, as long as $f$ has $L$-Lipschitz gradients. 
}

\subsection{Path Length Under the \PKL Condition is $\boldsymbol{\mathcal{O}(\sqrt{\kappa})}$}
The bound in the \PKL case can be improved by analyzing a particular potential/Lyapunov function:
\label{sec:PKL}
\[
    \lyapunov_t = \sqrt{f(\x_t) - f^*}.
\]
It can be shown  by differentiating $\lyapunov_t$ and applying the \PKL condition that, 
\[
    -\vel{\lyapunov}_t \geq \sqrt{\frac{\mu}{2}}\enorm{\nabla f(\x_t)}.
\]
Integrating the above with respect to $t$ and using \LG leads to the following theorem path length bound for \GF curves (the \GD bound also follows a similar approach).
\begin{theorem}
\label{thm:PKLLGGF}
For any $\mu$-\PKL function $f$ with $L$-Lipschitz gradients, the \GF dynamics have a path length bounded as:
\[
\pathlength \leq \sqrt{\kappa} \ \dist{\x_0}{\X^*},
\]
while the \GD iterates with $\eta \leq 1/L$ have a path length bounded as:
$\pathlength_\eta \leq 2\sqrt{\kappa} \ \dist{\x_0}{\X^*}.$
\end{theorem}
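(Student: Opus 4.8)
The plan is to turn the Lyapunov computation previewed above into a rigorous argument, first for \GF and then for \GD. For \GF, fix $\x^* = \Pi_{\X^*}(\x_0)$ and set $\lyapunov_t = \sqrt{f(\x_t) - f^*}$; this is well-defined and nonnegative because the \PKL inequality forces $f \geq f^*$ everywhere. On any (open) time interval where $f(\x_t) > f^*$, the chain rule together with $\vel{\x}_t = -\nabla f(\x_t)$ gives $\vel{\lyapunov}_t = -\esqnorm{\nabla f(\x_t)}/\bigl(2\sqrt{f(\x_t) - f^*}\bigr)$, and substituting the \PKL bound $\sqrt{f(\x_t) - f^*} \leq \enorm{\nabla f(\x_t)}/\sqrt{2\mu}$ into the denominator yields $-\vel{\lyapunov}_t \geq \sqrt{\mu/2}\,\enorm{\nabla f(\x_t)}$, which is exactly the displayed inequality. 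I would then integrate over $[0,\infty)$ and use that \GF under \PKL+\LG is linearly convergent in function value (so $\lyapunov_t \to 0$ and, since $\lyapunov$ is monotone, $\int_0^\infty(-\vel{\lyapunov}_t)\,dt = \lyapunov_0$), obtaining $\pathlength \leq \sqrt{2/\mu}\,\sqrt{f(\x_0) - f^*}$. Finally, applying the \LG consequence $f(\y) - f(\x) \leq \inner{\nabla f(\x), \y - \x} + \tfrac{L}{2}\esqnorm{\y - \x}$ with $\x = \x^*$ (where $\nabla f(\x^*) = 0$) and $\y = \x_0$ gives $f(\x_0) - f^* \leq \tfrac{L}{2}\dist{\x_0}{\X^*}^2$; combining the two yields $\pathlength \leq \sqrt{\kappa}\,\dist{\x_0}{\X^*}$.

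For \GD, I would replace the time derivative by a per-step decrement of the same Lyapunov function $\lyapunov_k = \sqrt{f(\x_k) - f^*}$. The \LG inequality gives the descent lemma, which for $\eta \leq 1/L$ reads $f(\x_{k+1}) \leq f(\x_k) - \tfrac{\eta}{2}\esqnorm{\nabla f(\x_k)}$. Rationalizing the difference of square roots,
\[
\lyapunov_k - \lyapunov_{k+1} = \frac{(f(\x_k) - f^*) - (f(\x_{k+1}) - f^*)}{\lyapunov_k + \lyapunov_{k+1}} \geq \frac{f(\x_k) - f(\x_{k+1})}{2\lyapunov_k} \geq \frac{\tfrac{\eta}{2}\esqnorm{\nabla f(\x_k)}}{2\lyapunov_k},
\]
and inserting the \PKL bound $\lyapunov_k \leq \enorm{\nabla f(\x_k)}/\sqrt{2\mu}$ into the denominator yields $\lyapunov_k - \lyapunov_{k+1} \geq \tfrac{\sqrt{\mu}}{2\sqrt 2}\,\eta\enorm{\nabla f(\x_k)} = \tfrac{\sqrt{\mu}}{2\sqrt 2}\,\enorm{\x_k - \x_{k+1}}$. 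Summing over $k \geq 0$, telescoping (using that \GD under \PKL+\LG is linearly convergent in function value, so $\lyapunov_k \to 0$), and then applying the same \LG bound $f(\x_0) - f^* \leq \tfrac{L}{2}\dist{\x_0}{\X^*}^2$ gives $\pathlength_\eta \leq \tfrac{2\sqrt 2}{\sqrt{\mu}}\sqrt{f(\x_0) - f^*} \leq 2\sqrt{\kappa}\,\dist{\x_0}{\X^*}$.

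The hard part will not be the algebra but the edge cases around $\lyapunov$ vanishing: I must justify that $\sqrt{f(\x_\cdot) - f^*}$ may be differentiated (resp.\ that the rationalization step is valid) and that the telescoping genuinely sums to $\lyapunov_0$ rather than more. Both are resolved by first invoking the known linear convergence in function value under \PKL+\LG (for \GD, \citet[Theorem~1]{karimi2016linear}; for \GF the continuous-time analogue), which ensures $f(\x_s) \to f^*$; and if $f(\x_s) = f^*$ at some finite $s$ then $\nabla f(\x_s) = 0$, the iterate/trajectory is stationary from then on, so the remaining path length is zero and the argument harmlessly restricts to the time set where $f > f^*$. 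Note that no curvature is used beyond \PKL and \LG, so the bound covers the nonconvex \PKL functions in the paper's scope; this is the improvement over the $\mathcal{O}(\kappa)$ bound implied by Theorem~\ref{thm:LCGD}.
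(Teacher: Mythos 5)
Your proof is correct and follows essentially the same route as the paper's: the same Lyapunov function $\sqrt{f(\x_t) - f^*}$, the same differentiation/PKL step for \GF, and for \GD the same descent-lemma decrement combined with a rationalization of $\sqrt{a} - \sqrt{a-b}$ (the paper writes this as $\sqrt{a-b} \leq \sqrt{a} - b/(2\sqrt{a})$, which is the identical algebraic trick), yielding identical constants. Your explicit treatment of the vanishing edge cases is a welcome clarification but does not change the argument.
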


The proof can be found in Appendix~\ref{appsec:proofsPKL}. \rev{The \GF version of this bound is a special case of a general statement shown by \citet[Theorem 27]{bolte2010characterizations} for a larger class of functions, namely those that satisfy the Kurdyka-{\L}ojasiewicz inequality. 
The \GD version is new. \citet[Corollary~5.3]{oymak2019overparameterized} obtained an $\mathcal{O}(\kappa)$ bound while we show an $\mathcal{O}(\sqrt{\kappa})$ bound. Our result relies on Theorem 5.2 of their paper, where they show a path length bound in terms of $(f(\x_0) - f^*)$. However their final result in terms of $\dist{\x_0}{\X^*}$ is weaker than ours.} 

\subsection{Path Length for Convex Quadratic Objectives is $\boldsymbol{\mathcal{O}(\sqrt{\log{\kappa}})}$}
\label{sec:quadratics}

For convex quadratic objective functions we can explicitly write down the \GD or \GF iterates. This allows us to significantly improve the general linear convergence result. To make use of some standard notation, we write a general convex quadratic objective function as a linear regression problem specified by a matrix $\A$ of dimensions $n \times d$, and an output vector $\y \in \Real^n$. The objective is
\begin{equation}
    f(\x) = \frac{1}{2n}\esqnorm{\y - \A\x}. \label{eqn:quadratics}
\end{equation}

The columns of $\A$ may be linearly dependent (which is necessarily true for the overparameterized setting, when $d > n$). In this case the solution set $\X^*$ has more than one element. However, it is possible to show that both \GF and \GD converge to $\Pi_{\X^*}(\x_0)$, 
 given by 
 \begin{align*}
 \Pi_{\X^*}(\x_0) := (\I_{n\times n} - \A^T(\A^T)^\dagger)\x_0 + (\A^T\A)^\dagger \A^T\y, 
 \end{align*}
 where $B^\dagger$ denotes the Moore-Penrose inverse of a matrix $B$. If the columns of $\A$ are linearly independent, this reduces to the standard least squares solution $\Pi_{\X^*}(\x_0) = (\A^T\A)^{-1}\A^T\y$.

Define $\Sigma := (\A^T\A)/n$. The \GF curve can be computed in closed form as: 
\begin{align}
\x_t = \Pi_{\X^*}(\x_0)  - \Sigma^\dag\exp(-t \Sigma)\Sigma(\Pi_{\X^*}(\x_0) - \x_0), \label{eqn:quadraticFlow}
\end{align} 
so that $\x_\infty = \Pi_{\X^*}(\x_0)$. By definition, $\Sigma$ is positive semidefinite. If $\A$ has linearly dependent columns, $\Sigma$ would have zero singular values. In general, suppose the number of non-zeros singular values of $\Sigma$ is $d^+ \leq d$. We denote them as $\sigma_1 \geq \sigma_2 \cdots \geq \sigma_{d^+} > 0$. Then for any step-length $\eta \leq 1/\sigma_1$, the \GD iterates converge to $\Pi_{\X^*}(\x_0)$ via the following updates for $k \geq 1$:
\begin{equation}
    \x_{k} = \x_{k-1} - \frac{\eta \A^T(\A\x_{k-1} - \y)}{n}. \label{eqn:quadraticsGD}
\end{equation}
For $i \in [d^+-1]$, define $\kappa_i := \sigma_i/\sigma_{i+1}$. The overall condition number is $\kappa := \sigma_1/\sigma_{d^+}$. The following theorem shows a path length bound for quadratic objectives in terms of each of the quantities: $d^+$, the $\kappa_i$'s and $\kappa$. 

\begin{theorem}
    \label{thm:QUADGF} For convex quadratic objective functions \eqref{eqn:quadratics}, the \GF dynamics \eqref{eqn:quadraticFlow} have a path length bounded as:
    \begin{equation}
        \pathlength \leq \min{\curlybrack{\sqrt{d^+}, 1 + \sum_{j=1}^{d^+-1} \kappa_j^{-1/(\kappa_j-1)} (1-1/\kappa_j),  1+2.5\sqrt{\log\kappa}}\ \dist{\x_0}{\X^*}}, \label{eqn:QUADGF}
    \end{equation}
    while the \GD iterates \eqref{eqn:quadraticsGD} with $\eta \leq 1/\sigma_1$ have a path length bounded as: 
       \begin{equation}
        \pathlength_\eta \leq \ \dist{\x_0}{\X^*} + \pathlength. \label{eqn:QUADGD}
    \end{equation}
\end{theorem}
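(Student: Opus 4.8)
\ The plan is to reduce everything to a one-dimensional integral over the eigenspectrum of $\Sigma$ and estimate that integral three different ways. First I would set $\x_\infty:=\Pi_{\X^*}(\x_0)$ and $\vvec:=\x_0-\x_\infty$, so $\dist{\x_0}{\X^*}=\enorm{\vvec}$; since $\x_\infty$ carries the component of $\x_0$ in $\ker\A=\ker\Sigma$, the vector $\vvec$ lies in the range of $\Sigma$, and \eqref{eqn:quadraticFlow} simplifies to $\x_t-\x_\infty=\exp(-t\Sigma)\vvec$, $\vel{\x}_t=-\Sigma\exp(-t\Sigma)\vvec=-\nabla f(\x_t)$. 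Diagonalizing $\Sigma$ and expanding $\vvec=\sum_{i=1}^{d^+}v_i\uvec_i$ in an orthonormal eigenbasis (no kernel component), the path length becomes
\[
 \pathlength=\int_0^\infty\enorm{\vel{\x}_t}\,dt=\int_0^\infty\left(\sum_{i=1}^{d^+}\sigma_i^2 e^{-2t\sigma_i}v_i^2\right)^{1/2}dt,\qquad \enorm{\vvec}^2=\sum_{i=1}^{d^+}v_i^2 .
\]
All three bounds come from estimating this integral divided by $\enorm{\vvec}$, using only $\sum_i v_i^2=\enorm{\vvec}^2$, so the resulting factors are direction-independent, as claimed.

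The $\sqrt{d^+}$ bound is immediate: apply $\sqrt{\sum_i a_i}\le\sum_i\sqrt{a_i}$ under the integral, integrate each term via $\int_0^\infty\sigma_i e^{-t\sigma_i}\,dt=1$, and use Cauchy--Schwarz over the $d^+$ coordinates to get $\pathlength\le\sum_i|v_i|\le\sqrt{d^+}\,\enorm{\vvec}$. For the second bound I would bound the integrand pointwise by $M(t)\enorm{\vvec}$ with $M(t):=\max_{1\le i\le d^+}\sigma_i e^{-t\sigma_i}$ and compute $\int_0^\infty M(t)\,dt$ exactly. Writing $\sigma_i e^{-t\sigma_i}=\exp(\log\sigma_i-t\sigma_i)$ as log-linear functions of $t$ with intercepts $\log\sigma_i$ (strictly decreasing in $i$) and slopes $-\sigma_i$ (strictly increasing in $i$), their upper envelope $M$ is piecewise equal to $\sigma_j e^{-t\sigma_j}$, with every index $j=1,\dots,d^+$ appearing, in order, on a consecutive interval; the $j\!\to\!j{+}1$ switch occurs at $t^\star_j=\log\kappa_j/(\sigma_j-\sigma_{j+1})$. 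Splitting $\int_0^\infty M$ at the $t^\star_j$, the piece where $\sigma_j$ is active equals $e^{-\sigma_j t^\star_{j-1}}-e^{-\sigma_j t^\star_j}$ (with $t^\star_0:=0$, $t^\star_{d^+}:=\infty$); a telescoping rearrangement together with the identities $\sigma_{j+1}t^\star_j=\tfrac{\log\kappa_j}{\kappa_j-1}$ and $\sigma_j t^\star_j=\tfrac{\kappa_j\log\kappa_j}{\kappa_j-1}$ collapses the sum to exactly $1+\sum_{j=1}^{d^+-1}\kappa_j^{-1/(\kappa_j-1)}(1-1/\kappa_j)$.

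For the $\sqrt{\log\kappa}$ bound I would split $[0,\infty)$ at $t=1/\sigma_1$: on $[0,1/\sigma_1]$ the integrand is at most $\sigma_1\enorm{\vvec}$, contributing at most $\enorm{\vvec}$; on $[1/\sigma_1,\infty)$ I would apply Cauchy--Schwarz with the weight $w(t)=t^{1+\delta}$ for a small $\delta=\Theta(1/\log\kappa)$. Here $\int_0^\infty t^{1+\delta}\sum_i\sigma_i^2 e^{-2t\sigma_i}v_i^2\,dt=\tfrac{\Gamma(2+\delta)}{2^{2+\delta}}\sum_i\sigma_i^{-\delta}v_i^2\le\tfrac{\Gamma(2+\delta)}{2^{2+\delta}}\sigma_{d^+}^{-\delta}\enorm{\vvec}^2$ and $\int_{1/\sigma_1}^\infty t^{-1-\delta}\,dt=\sigma_1^{\delta}/\delta$; the spectral factors combine into $\kappa^{\delta}$, so choosing $\delta\asymp 1/\log\kappa$ makes $\kappa^\delta=O(1)$ while $1/\delta=O(\log\kappa)$, giving an $O(\sqrt{\log\kappa})\,\enorm{\vvec}$ bound for this piece; pinning down the constant $2.5$ is then bookkeeping, and for $\kappa$ too small for this argument to be informative I would instead invoke the second bound, which satisfies $1+\sum_j\kappa_j^{-1/(\kappa_j-1)}(1-1/\kappa_j)\le 1+\log\kappa/e\le 1+2.5\sqrt{\log\kappa}$. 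Taking the minimum of the three estimates gives \eqref{eqn:QUADGF}. The \GD bound \eqref{eqn:QUADGD} I would get by comparison: $\x_k-\x_\infty=(\I-\eta\Sigma)^k\vvec$ and $\nabla f(\x_k)=\Sigma(\I-\eta\Sigma)^k\vvec$, and since $0\le 1-\eta\sigma_i\le e^{-\eta\sigma_i}$ for $\eta\le 1/\sigma_1$, the \GD gradient at step $k$ has norm at most that of the \GF gradient $\Sigma\exp(-\eta k\Sigma)\vvec$ at time $\eta k$; writing $h(t):=\enorm{\nabla f(\x_t)}$ along \GF, differentiation of $h^2$ shows $h$ is nonincreasing, so $\eta h(\eta k)\le\int_{\eta(k-1)}^{\eta k}h(t)\,dt$ for $k\ge 1$ and hence $\pathlength_\eta=\sum_{k\ge0}\eta\enorm{\nabla f(\x_k)}\le\sum_{k\ge0}\eta h(\eta k)\le \eta h(0)+\int_0^\infty h(t)\,dt=\enorm{\x_0-\x_1}+\pathlength$, with $\enorm{\x_0-\x_1}=\eta\enorm{\Sigma\vvec}\le\eta\sigma_1\enorm{\vvec}\le\enorm{\vvec}=\dist{\x_0}{\X^*}$.

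The step I expect to be the main obstacle is the $\sqrt{\log\kappa}$ estimate: any pointwise ($L^\infty$) control of the velocity only yields $O(\log\kappa)$ (indeed $\int_0^\infty M(t)\,dt\le 1+\log\kappa/e$), so one genuinely needs the $L^2$/Cauchy--Schwarz argument with a tuned weight, plus a clean split between small and large $\kappa$ to extract an explicit constant. A secondary nuisance is verifying the envelope claim used for the second bound---that every eigenvalue appears, in order, on the upper envelope of the functions $\sigma_i e^{-t\sigma_i}$---which is a short convexity argument but must be stated carefully.
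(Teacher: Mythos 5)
Your proof is correct. Three of the four estimates run along the same lines as the paper's: the $\sqrt{d^+}$ bound (subadditivity of $\sqrt{\cdot}$ plus Cauchy--Schwarz over coordinates), the $\kappa_j$-envelope bound (pointwise bound by $M(t)=\max_j\sigma_j e^{-t\sigma_j}$ and integration of the envelope; the paper verifies the envelope structure via unimodality of $x\mapsto x^2e^{-2tx}$ rather than your log-linearity observation, but the crossover points and the telescoped value coincide), and the \GD-to-\GF reduction (the paper also compares $(1-\eta\sigma_i)^{2k}\le e^{-2k\eta\sigma_i}$ and uses a sum-to-integral comparison after peeling off the $k=0$ term). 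The one place you take a genuinely different route is the $\sqrt{\log\kappa}$ bound. The paper partitions the spectrum $[\sigma_{d^+},\sigma_1]$ into $r=\lceil\log_b\kappa\rceil$ geometric buckets, pulls $\sqrt{\cdot}$ out of the sum over buckets, integrates each bucket, and then applies Cauchy--Schwarz over the $r$ buckets; the constant $2.5 \approx 2\sqrt{\log_2 e}$ comes from optimizing the base $b$. You instead apply Cauchy--Schwarz directly in the time variable with a tuned power-law weight $w(t)=t^{1+\delta}$, $\delta\asymp 1/\log\kappa$, using the exact moment $\int_0^\infty t^{1+\delta}\sigma_i^2e^{-2t\sigma_i}\,dt = \sigma_i^{-\delta}\Gamma(2+\delta)/2^{2+\delta}$ so the $\kappa$-dependence appears only through $\kappa^\delta$. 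This is cleaner and, as a bonus, yields a better asymptotic constant (roughly $\sqrt{e}/2\approx 0.82$ in place of $2.5$), though, as you note, to certify the stated constant $2.5$ uniformly over all $\kappa$ you would fall back on $1+\log\kappa/e\le 1+2.5\sqrt{\log\kappa}$ for moderate $\kappa$. The point you flag as the ``secondary nuisance'' --- that every $\sigma_j$ appears on the envelope, in order --- does need a word, and your observation that the lines $t\mapsto\log\sigma_j - t\sigma_j$ have intercepts $\log(-\text{slope})$ on a strictly concave curve is the clean way to dispatch it.
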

To clarify, when $\kappa_j = 1$, $\kappa_j^{-1/(\kappa_j-1)} (1-1/\kappa_j)$ is defined as $\lim_{\kappa_j \to 1^+} \kappa_j^{-1/(\kappa_j-1)} (1-1/\kappa_j) = 0$. The proof of Theorem~\ref{thm:QUADGF} can be found in Appendix~\ref{appsec:proofsQuadratics}.

\begin{remark} We show in the proof that $\sum_{j=1}^{d^+-1} \kappa_j^{-1/(\kappa_j-1)} (1-1/\kappa_j) \leq \frac{\log \kappa}{e}$. This cannot be improved to $o(\log \kappa)$ as shown next. Suppose $\kappa_j = 6$ for every $j \in [d^+-1]$. It can be verified that $6^{-1/(6-1)} (1-1/6) \geq 0.5 $, and thus $\sum_{j=1}^{d^+-1}  \kappa_j^{-1/(\kappa_j-1)} (1-1/\kappa_j) \geq 0.5( d^+-1) \geq \frac{\log \kappa}{2\log 6}$, since $\kappa = 6^{d^+-1}$. This is true for any $d^+$ and consequently for all large $\kappa$. Thus the bound cannot be improved to $o(\log\kappa)$ in general, making the $\mathcal{O}(\sqrt{\log\kappa)}$ bound the best asymptotic result with respect to $\kappa$. However for special cases, such as if there is only one large singular value, this bound may even be independent of $\kappa$ and $d^+$. For instance, suppose $\kappa_j = 1$ for $j \in [d^+-2]$, and $\kappa = \kappa_{d^+-1} \in [1, \infty)$. Then, no matter the value of $d^+, \kappa$, we obtain the bound $\pathlength \leq 2\ \dist{\x_0}{\X^*}$.
\end{remark}


There exists a family of quadratic functions that also satisfy a matching $\Omega(\sqrt{\log \kappa})$ lower bound on their path length, as shown in Theorem~\ref{thm:PLlowerBoundQuadratic}. \rev{However in Appendix~\ref{appsec:quadratic-simulation} we present experimental evidence to suggest that the constant $2.5$ may not be sharp.} This bound 
fundamentally improves the $\sqrt{\kappa}$ dependence we expect via the best known bound for strongly convex functions (Section~\ref{sec:PKL}). Although we show a $\text{poly}(\kappa)$ lower bound for \PKL functions (Section~\ref{sec:PLlowerBound}), we are not aware of $\text{poly}(\kappa)$ lower bounds for strongly convex functions. In the absence of such lower bounds, the upper bound of this section suggests a potential improvement on the path length bound for strongly convex functions as well. To this end, we make preliminary progress on a special subclass of separable strongly convex functions. First, define a class of univariate functions
\[
\mathcal{G}_{\mu,L} := \{g: \text{$g$ is $\mu$-strongly convex, has $L$-Lipschitz gradients, and $g'''(x) \geq 0$ for all $x$}\}.
\]
Equivalently, the second derivative $g''$ satisfies $\mu \leq g''(x) \leq L$ for all $x$ and $g''$ is non-decreasing. Using the above as building blocks, define
\[
\mathcal{F}_{\mathrm{sep}, \mathcal{G}_{\mu,L}} := \{f : f(\x) = \sum_{i=1}^d g_\indi(\x_\indi) \text{ where } g_\indi \in \mathcal G_{\mu,L} \}.
\]

\rev{To motivate the above definition, consider the quadratic function $f_1$ given by $f_1(\x)= \sum_{i=1}^d i\x_{(i)}^2$ and the nearly quadratic function $f_2$ given by $f_2(\x)= \sum_{i=1}^d (i\x_{(i)}^2 + 0.1\x_{(i)}^4)$. $f_1$ has $\mu =2$ and  $L = 2d$. Note that $f_2$ also has $\mu  =2$, and if we restrict it to a bounded domain $[-B, B]^d$, it has $L = (2 + 1.2B^2)d$. We expect that the path length of $f_2$ behaves like $f_1$; however the only applicable path length bound we know is the $\mathcal{O}(\sqrt{\kappa})$ bound of Section~\ref{sec:PKL}. Note however that $f_2 \in \mathcal F_{\mathrm{sep}}$ and so Theorem~\ref{thm:upperSpecialSC} below shows an $\mathcal{O}(\log\kappa)$ bound on its path length.} 

\begin{theorem}
\label{thm:upperSpecialSC}
For any $f \in \mathcal F_{\mathrm{sep}, \mathcal{G}_{\mu,L}}$, the \GF path length is bounded as: 
\[\pathlength \leq (2 + \log \kappa)\ \dist{\x_0}{\X^*}.\]
\end{theorem}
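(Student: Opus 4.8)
The plan is to reduce the multivariate problem to a collection of $d$ one-dimensional problems, using separability of $f$ and the fact that both gradient flow and the norm computations split coordinatewise. Since $f(\x) = \sum_i g_\indi(\x_\indi)$, the \GF dynamics $\vel{\x}_t = -\nabla f(\x_t)$ decouple: each coordinate evolves as $\vel{(\x_t)}_\indi = -g_\indi'((\x_t)_\indi)$, independently of the others. Each $g_\indi$ is $\mu$-strongly convex, hence has a unique minimizer $x^*_\indi$, and the one-dimensional flow $(\x_t)_\indi$ converges monotonically to $x^*_\indi$ (because $g_\indi'$ has a single sign on each side of $x^*_\indi$, so $(\x_t)_\indi$ is monotone in $t$). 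Monotonicity is the key structural gain from the assumption $g_\indi''' \geq 0$ being irrelevant here — wait, monotonicity already holds from strong convexity alone; the third-derivative condition will be used to control the rate, as explained below.

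Next I would compute the one-dimensional path length exactly. For a monotone scalar trajectory, $\int_0^\infty |\vel{(\x_t)}_\indi|\,dt = |(\x_0)_\indi - x^*_\indi|$ — the path length of a monotone curve is just the distance traveled. Summing, $\pathlength = \int_0^\infty \enorm{\nabla f(\x_t)}\,dt \leq \sum_{i=1}^d \int_0^\infty |g_\indi'((\x_t)_\indi)|\,dt = \sum_{i=1}^d |(\x_0)_\indi - x^*_\indi| = \norm{1}{\x_0 - \x^*}$, using the triangle inequality $\enorm{\cdot} \leq \norm{1}{\cdot}$ pointwise in $t$. But this gives an $\ell_1$ versus $\ell_2$ gap of $\sqrt{d}$, i.e. $\pathlength \leq \sqrt{d}\,\dist{\x_0}{\X^*}$, not $(2+\log\kappa)\,\dist{\x_0}{\X^*}$. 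So the naive triangle-inequality bound is too lossy when $d \gg \kappa$, and the real work is to get the $\log\kappa$ factor. The idea must be to exploit that the coordinates with large curvature converge fast and contribute little residual gradient, mimicking the quadratic argument of Theorem~\ref{thm:QUADGF}: one sorts coordinates by effective curvature, and at any time $t$ only a logarithmic number of coordinates have appreciable gradient, because the scalar flow for a coordinate with curvature $\geq c$ contracts its gradient norm by a factor $e^{-ct}$ (here the condition $g_\indi'' \geq \mu$ and $g_\indi''$ non-decreasing lets one lower-bound the instantaneous contraction rate $g_\indi''((\x_t)_\indi)$ by its value at the current point, and as $(\x_t)_\indi \to x^*_\indi$ this value only grows, so $|g_\indi'((\x_t)_\indi)|$ decays at least as fast as in the worst constant-curvature case $g'' \equiv \mu$ near the optimum but is bounded by $g'' \leq L$ globally).

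Concretely, I would bound $|g_\indi'((\x_t)_\indi)|$ in two regimes: a ``fast'' phase where it decays like $e^{-Lt}$-ish initially is not right either — rather, I expect the clean statement is $|g_\indi'((\x_t)_\indi)| \leq \min\{L\,|(\x_0)_\indi - x^*_\indi|\,e^{-\mu t},\ \text{something}\}$, and then integrate $\int_0^\infty \enorm{\nabla f(\x_t)}\,dt$ by splitting $[0,\infty)$ at a threshold time, or equivalently by a layer-cake / summation-by-parts argument over coordinates ordered by their initial gradient magnitude, exactly as in the quadratic case where $\sqrt{d^+}$ improves to $\sqrt{\log\kappa}$ (though here without the square root since the separable structure is ``sharper''). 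The main obstacle I anticipate is making the nonlinear (non-quadratic) scalar estimate rigorous: for a quadratic $g_\indi(x) = \tfrac{a_i}{2}x^2$ one has $g_\indi'((\x_t)_\indi) = a_i (\x_0)_\indi e^{-a_i t}$ in closed form, but for general $g_\indi \in \mathcal{G}_{\mu,L}$ one only has the differential inequality $\tfrac{d}{dt}|g_\indi'| = -g_\indi''\cdot|g_\indi'| \leq -\mu |g_\indi'|$ and $|g_\indi'| \leq L\,|\text{distance}|$, $|\text{distance}|$ decaying at rate $\geq \mu$; the third-derivative sign ensures $g_\indi''$ does not \emph{decrease} along the trajectory (since $(\x_t)_\indi$ moves monotonically toward $x^*_\indi$ and... actually one must check on which side, but $g_\indi''$ non-decreasing means curvature is largest for large $x$), which is what lets the argument go through cleanly. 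Wrapping up, once the scalar bound $\int_0^\infty |g_\indi'((\x_t)_\indi)|\,dt$ and its decay profile are in hand, the final summation over coordinates with the $\ell_2$-to-effective-$\ell_1$ comparison truncated at $\log\kappa$ levels yields $\pathlength \leq (2 + \log\kappa)\,\dist{\x_0}{\X^*}$; I would double-check the constant $2$ comes from the $j=1$ term / boundary of the telescoping sum, analogous to the ``$1+$'' in \eqref{eqn:QUADGF}.
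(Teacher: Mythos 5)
Your high-level plan is on the right track — decouple coordinate-wise via separability, observe the scalar flow is monotone, note that the naive $\ell_1/\ell_2$ comparison only gives $\sqrt{d}$, and suspect that the $g'''\geq 0$ condition must be used to control decay rates in analogy with the quadratic case. But there is a genuine gap: you never pin down the concrete pointwise estimate that drives the proof, and you repeatedly acknowledge this (``I expect the clean statement is\ldots'', ``the main obstacle I anticipate is making the nonlinear scalar estimate rigorous''). The missing idea is the scalar potential $\phi_i(t) := g_\indi'((\x_t)_\indi)/((\x_t)_\indi - \x_\indi^*)$. The hypothesis $g_\indi'''\geq 0$ (equivalently, $g_\indi''$ non-decreasing) is exactly what forces $\phi_i$ to be non-increasing along the flow on either side of the minimizer, while strong convexity and Lipschitz gradients pin $\phi_i(t)\in[\mu,L]$. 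A one-line Lyapunov argument with $e^{2cs}\bigl((\x_s)_\indi - \x_\indi^*\bigr)^2$, using $\phi_i(s)\geq \phi_i(t)=:c$ for $s\leq t$, then gives the clean bound $\abs{g_\indi'((\x_t)_\indi)}\leq c\,e^{-ct}\abs{(\x_0)_\indi - \x_\indi^*}$ for some time-dependent $c = c_i(t)\in[\mu,L]$. Without this, your estimate $\abs{g_\indi'}\leq L\abs{(\x_0)_\indi - \x_\indi^*}e^{-\mu t}$ is too weak: it decouples the prefactor ($L$) from the decay rate ($\mu$), whereas the proof needs the \emph{same} constant $c$ in both places so that $\max_{c\geq 0} c\,e^{-ct} = 1/(te)$ can be applied.

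The second issue is your proposed summation structure. You envision ``summation over coordinates with the $\ell_2$-to-effective-$\ell_1$ comparison truncated at $\log\kappa$ levels,'' i.e.\ a coordinate-indexed truncation. The paper does something different: it bounds $\enorm{\nabla f(\x_t)}$ pointwise in $t$ by $\bigl(\sup_{c\in[\mu,L]} c\,e^{-ct}\bigr)\,\dist{\x_0}{\X^*}$ — getting the $\ell_2$-norm in one shot, no coordinate-by-coordinate accounting — and then the $\log\kappa$ factor comes from splitting the \emph{time} integral $\int_0^\infty$ into three pieces: $[0,1/L]$ (where the sup is $Le^{-Lt}$), $[1/\mu,\infty)$ (where it is $\mu e^{-\mu t}$), and the middle range $[1/L,1/\mu]$, which is cut into $\approx\log_2\kappa$ dyadic intervals $[2^{k-1}/L, 2^k/L]$, each contributing at most $1/e$. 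The ``$2$'' in $2+\log\kappa$ comes from the two boundary pieces (contributing $1 - 1/e$ and $1/e$, respectively) plus a slack from rounding $r = \lceil\log_2\kappa\rceil$, not from a $j=1$ term in a telescoping coordinate sum as you speculated. So you would need both the potential-function lemma and a reorientation of the combinatorial structure from coordinate-space to time-space before the argument closes.
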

\noindent The proof of this theorem can be found in Appendix~\ref{appsec:proofsQuadratics}. 

\section{Dimension Dependent Path Length Bounds Under Convexity}
\label{sec:quasiconvex}
If our function class does not exhibit linear convergence, path length bounds can still be provided that depend on the dimension $d$. In this section, we analyze path lengths of \GD and \GF under convexity. In fact, the results of this section hold for \GF under the weaker assumption of quasiconvexity (Definition~\ref{def:qc}). Under quasiconvexity or convexity, even finiteness of path length is a surprising result since there exist planar convex functions whose \GF curves spiral around infinitely many times while going arbitrarily close to the minimum \citep{daniilidis2010asymptotic}. \rev{In other words, \GF exhibits convergence but not the stronger notion of tangential convergence \citep[Section 5.5]{bolte2020curiosities}.} Since there is no natural notion of a condition number here, we look for bounds that depend on the dimension $d$. The analysis of path lengths of \GF and \GD in the convex case goes via a reduction to the notion of self-contracted curves. 

\begin{definition}[Self-contracted curve \citep{daniilidis2010asymptotic}]\label{def:selfcontracted}
A curve $g: S \to \Real^d$ is self-contracted if for all $s_1, s_2, s_3 \in S $ such that $s_1 \leq s_2 \leq s_3$, 
\begin{equation}
    \enorm{g(s_3) - g(s_2)} \leq \enorm{g(s_3) - g(s_1)}. \label{eq:descent-condition}
\end{equation}
\end{definition}
\begin{figure}[t]
    \centering
    \includegraphics[width=0.45\linewidth,trim=4.5cm 8cm 2.5cm 8cm,clip=true]{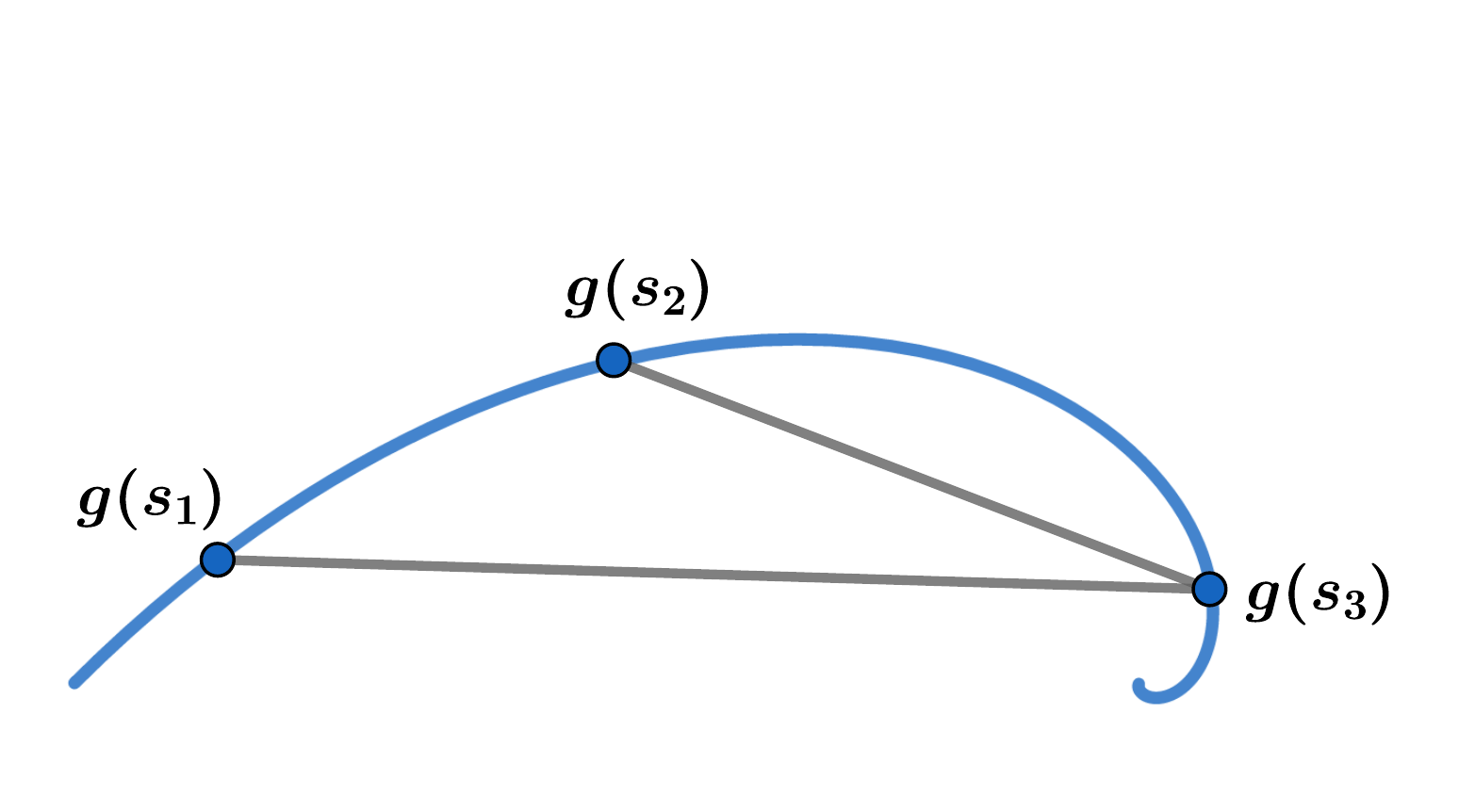}
    \includegraphics[width=0.45\linewidth,trim=2.5cm 9cm 4cm 8cm,clip=true]{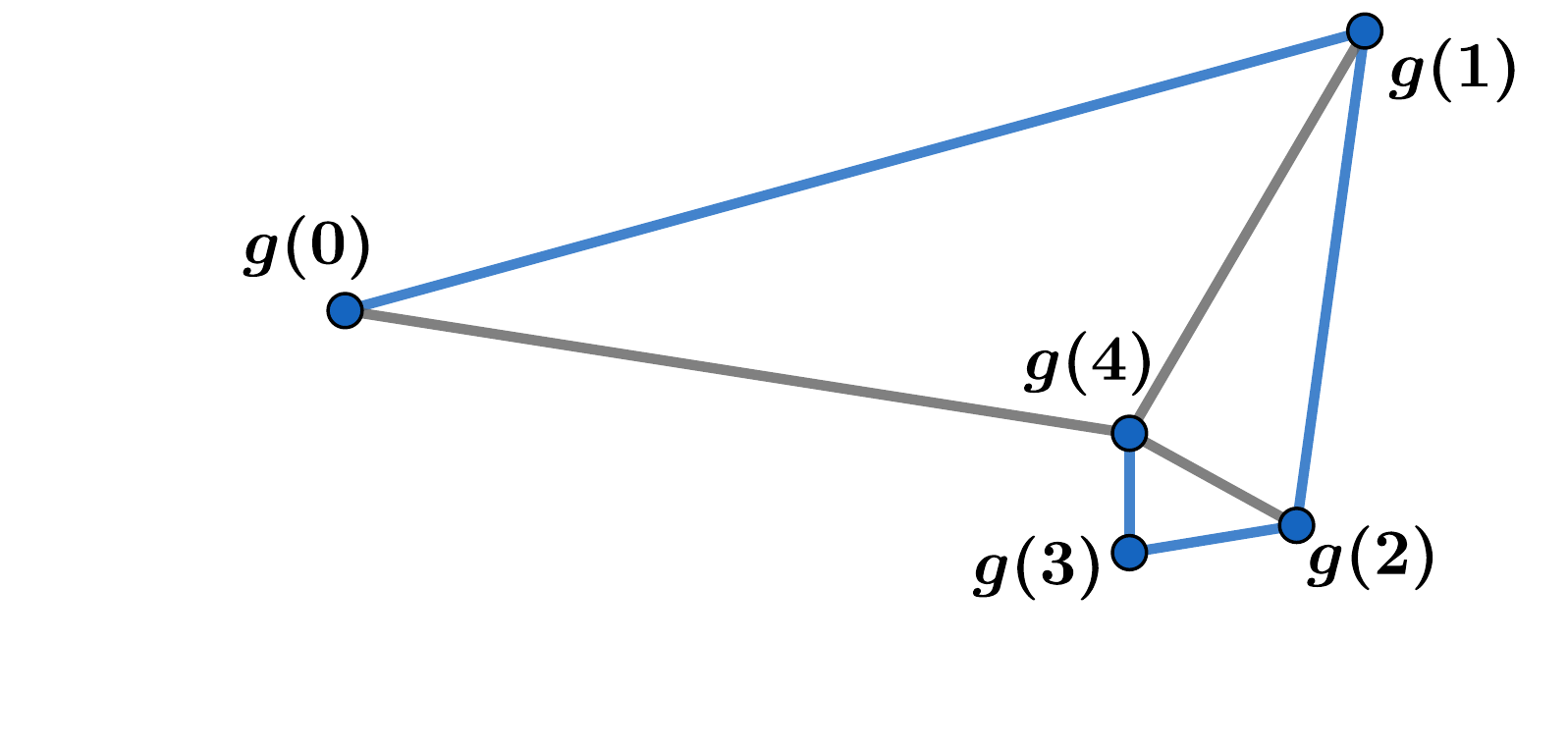}
    \caption{Self-contracted curves. In the continuous case (left), $S = \Real_0^+$; in the discrete case (right), $S = \naturals_0$.}
    \label{fig:selfcontracted}
\end{figure}
\rev{Setting $s_3$ such that $g(s_3) = \x^*$, we can see that self-contracted curves are descent curves, \revminor{in the sense that consecutive iterates cannot go farther away from $\x^*$}. However self-contracted curves require the descent condition \revminor{\eqref{eq:descent-condition}} to hold more generally for any $s_1 \leq s_2 \leq s_3$.} Figure~\ref{fig:selfcontracted} illustrates a self-contracted curve in two dimensions. 

It is well known that the \GF curve is a self-contracted curve \citep{manselli1991maximum, daniilidis2015rectifiability, daniilidis2010asymptotic} for quasiconvex functions. To see this, first note that for any~$t$, $\frac{d{f(\x_t)}}{dt} = -\esqnorm{\nabla f(\x_t)} \leq 0$, and thus for any $s \geq t$, $f(\x_s) \leq f(\x_t)$.  Now, fix $s$ and define the potential function $\lyapunov(t) = \esqnorm{\x_s - \x_t}$ for $t \leq s$. Then, 
\begin{align*}
    \vel{\lyapunov}(t) &= 2\inner{\x_t - \x_s, \vel{\x}_t}
    = 2\inner{\x_t - \x_s, -\nabla f({\x}_t)}
    = 2\inner{\nabla f({\x}_t), \x_s - \x_t} \leq~ 0,
\end{align*}
where the inequality follows by quasiconvexity since $f(\x_s) \leq f(\x_t)$. Thus, $\esqnorm{\x_s - \x_t}$ is a decreasing function of $t$ which is the same as self-contractedness for $s_3 = s$.

For \GD, we prove self-contractedness under the additional assumptions of convexity and Lipschitz gradients: 
\begin{lemma}\label{lemma:QCGD}For any convex function $f$ with $L$-Lipschitz gradients, the \GD curve with $\eta \leq 1/L$ is self-contracted. \end{lemma}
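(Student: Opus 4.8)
The plan is to fix the largest index and reduce self-contractedness to a one-step monotonicity statement. Concretely, fix $k_3$ and aim to show that $k \mapsto \enorm{\x_{k_3} - \x_k}$ is non-increasing on $\{0, 1, \dots, k_3\}$; since any triple $k_1 \le k_2 \le k_3$ then satisfies $\enorm{\x_{k_3} - \x_{k_2}} \le \enorm{\x_{k_3} - \x_{k_1}}$, this is exactly the self-contracted condition~\eqref{eq:descent-condition} (the case $k_1 = k_2$ being trivial). Because the index set is finite, it suffices to prove the single-step inequality $\enorm{\x_{k_3} - \x_{k+1}}^2 \le \enorm{\x_{k_3} - \x_k}^2$ for every $k < k_3$.

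Expanding the left-hand side via the update $\x_{k+1} = \x_k - \eta \nabla f(\x_k)$ gives
\[
\enorm{\x_{k_3} - \x_{k+1}}^2 = \enorm{\x_{k_3} - \x_k}^2 + 2\eta\inner{\nabla f(\x_k),\, \x_{k_3} - \x_k} + \eta^2 \esqnorm{\nabla f(\x_k)},
\]
so the goal becomes $2\inner{\nabla f(\x_k),\, \x_{k_3} - \x_k} + \eta\,\esqnorm{\nabla f(\x_k)} \le 0$. I would bound the inner product using two ingredients: (i) convexity, which yields $\inner{\nabla f(\x_k),\, \x_{k_3} - \x_k} \le f(\x_{k_3}) - f(\x_k)$; and (ii) the descent inequality for \GD with $\eta \le 1/L$, namely $f(\x_{j+1}) \le f(\x_j) - \tfrac{\eta}{2}\esqnorm{\nabla f(\x_j)}$, which follows from the \LG inequality $f(\x_{j+1}) - f(\x_j) \le -\eta(1 - L\eta/2)\esqnorm{\nabla f(\x_j)}$ together with $1 - L\eta/2 \ge 1/2$.

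Combining (ii) across the iterates shows that $f$ is non-increasing along the trajectory, so $f(\x_{k_3}) \le f(\x_{k+1}) \le f(\x_k) - \tfrac{\eta}{2}\esqnorm{\nabla f(\x_k)}$ (this is where $k+1 \le k_3$ enters). Chaining with (i) gives $\inner{\nabla f(\x_k),\, \x_{k_3} - \x_k} \le -\tfrac{\eta}{2}\esqnorm{\nabla f(\x_k)}$, that is $2\inner{\nabla f(\x_k),\, \x_{k_3} - \x_k} + \eta\,\esqnorm{\nabla f(\x_k)} \le 0$, completing the reduction. Intuitively, the gradient step at $\x_k$ always points ``into'' the ball centered at any fixed later iterate $\x_{k_3}$.

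The computation is short and I do not anticipate a genuine obstacle; the one point requiring care is the bookkeeping around the index $k_3$. The descent inequality must be iterated from $\x_{k+1}$ down to $\x_{k_3}$, which is valid only when $k + 1 \le k_3$, so it is essential to run the one-step comparison only for $k < k_3$ with $k_3$ held fixed as the endpoint, rather than trying to compare arbitrary pairs $\enorm{\x_{k_3}-\x_{k_2}}$ and $\enorm{\x_{k_3}-\x_{k_1}}$ directly.
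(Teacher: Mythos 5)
Your argument is correct and is essentially the paper's own proof: fix the terminal index, reduce to the one-step comparison by expanding $\enorm{\x_{k_3}-\x_{k+1}}^2$, bound the cross term by convexity, pass from $f(\x_{k_3})$ to $f(\x_{k+1})$ via monotonicity of $f$ along the trajectory, and then use the \LG descent inequality; the only cosmetic difference is that you absorb $1-L\eta/2$ into the constant $1/2$, whereas the paper keeps the exact $\eta^2(\eta L - 1)$ factor before invoking $\eta \le 1/L$, and both yield the same conclusion under the same step-size restriction.
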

The proof of Lemma~\ref{lemma:QCGD} is in Appendix~\ref{appsec:gd-self-contracted-proof}. We do not know if one can relax convexity to quasiconvexity. \rev{Also note the step-size restriction: for convex functions with $L$-Lipschitz gradients, \GD with $\eta \in (0, 2/L]$ is a descent method; however for \GD to be self-contracted, further restriction on step-size (such as $\eta \in (0, 1/L]$) is needed. This is discussed in Appendix~\ref{appsec:gd-self-contracted-proof} after the proof of the lemma. Thus while self-contracted curves are descent curves \revminor{(in terms of the iterates)}, there exist descent curves that are not self-contracted. }

The \GD curve in Lemma~\ref{lemma:QCGD} refers to the iterates $g(0), g(1), \ldots$, and not the affine extension of the iterates (obtained by connecting consecutive iterates by a line). It is unclear whether the affine extension itself is self-contracted. This precludes a direct application of path length bounds known for \GF curves~\citep{manselli1991maximum, daniilidis2015rectifiability} since these bounds require all points to be part of a self-contracted curve. Despite this limitation, we show that the self-contractedness guarantee provided by Lemma~\ref{lemma:QCGD} is enough to show a path length bound for \GD.

\begin{theorem}\label{thm:QC}
For any quasiconvex function $f$, the \GF  path length is bounded as: 
\[\pathlength \leq 2^{2d\log d} \enorm{\x_0 - \x_\infty}.\]
If $f$ is convex with $L$-Lipschitz gradients, then the \GD iterates with a step-size $\eta \leq 1/L$ admit a path length bound: 
\[\pathlength_\eta \leq 2^{10d^2} \enorm{\x_0 - \x_\infty},\]
while the \GD iterates with a step-size $\eta \leq 1/2L\sqrt{d}$ admit a path length bound: 
\[\pathlength_\eta \leq 2^{4d\log d} \enorm{\x_0 - \x_\infty}.\]
\end{theorem}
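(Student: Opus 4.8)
The plan is to reduce all three bounds to length estimates for \emph{self-contracted curves} (Definition~\ref{def:selfcontracted}), using the self-contractedness facts already established: the \GF curve of a quasiconvex $f$ is self-contracted (shown above), and by Lemma~\ref{lemma:QCGD} the \GD iterates of a convex $f$ with \LG and $\eta\le 1/L$ form a self-contracted sequence. One observation is common to all three parts: for any self-contracted curve with endpoint $\x_\infty$, taking $g(s_3)=\x_\infty$ in \eqref{eq:descent-condition} shows $\enorm{\x_\infty-g(s)}$ is nonincreasing in $s$, so the curve stays in the ball of radius $\enorm{\x_0-\x_\infty}$ about $\x_\infty$ and in particular has diameter at most $2\enorm{\x_0-\x_\infty}$; hence any bound of the form $C_d\cdot(\text{diameter})$ immediately becomes a bound of the same order in $\enorm{\x_0-\x_\infty}$.

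For the \GF statement, the \GF curve is the $C^1$ solution of $\vel{\x}_t=-\nabla f(\x_t)$, it is self-contracted, and (by the observation above) it remains bounded. I would then invoke the length bound of \citet{manselli1991maximum} for smooth self-contracted curves, whose a priori finiteness hypothesis was removed by \citet[Corollary~2.4]{daniilidis2015rectifiability}; in our normalization this gives length $\le 2^{2d\log d}\,\enorm{\x_0-\x_\infty}$. If one wants the explicit constant rather than citing it, it can be recovered by decomposing the bounding ball into dyadic annuli (``shells'') and bounding, in each annulus, how many consecutive nearly-radial chords a self-contracted curve can contain.

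For the two \GD statements I would work with the polygonal interpolation of the iterates and split on the step-size. When $\eta\le 1/L$, Lemma~\ref{lemma:QCGD} only gives self-contractedness at the vertices $\{\x_k\}$, and the polygonal path is not known to be self-contracted; so the sharp smooth bound is unavailable and instead one applies the more general estimate of \citet{daniilidis2015rectifiability}, valid for arbitrary (non-smooth, in particular discrete) self-contracted curves, to the curve $k\mapsto\x_k$, whose length is exactly $\pathlength_\eta=\sum_k\enorm{\x_k-\x_{k+1}}$; with the diameter bound this yields $\pathlength_\eta\le 2^{10d^2}\,\enorm{\x_0-\x_\infty}$. When $\eta\le 1/(2L\sqrt d)$, the key new step is to upgrade self-contractedness from the vertices to the entire polygonal path: for $u$ on a segment $[\x_i,\x_{i+1}]$, $v$ on a later segment (or later on the same one), and $w$ on a still later segment, one must check $\enorm{w-v}\le\enorm{w-u}$. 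Convexity of $z\mapsto\enorm{w-z}$ handles $v$ by reducing to the vertices of its segment, and the delicate case is $u,v$ on the same or an adjacent segment, where the inequality reduces, via a directional-derivative computation, to showing that the later point $w$ lies on the far side of a suitable supporting hyperplane of $f$ at $\x_i$ --- which follows from convexity, the descent lemma, and the step-size restriction, with the factor $\sqrt d$ supplying the needed slack. Once the polygonal path is known to be self-contracted it is a rectifiable self-contracted curve of length $\pathlength_\eta$, so the $2^{O(d\log d)}$ estimate applies (no finiteness assumption needed) and, after the diameter bound, gives $\pathlength_\eta\le 2^{4d\log d}\,\enorm{\x_0-\x_\infty}$.

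The main obstacle is this last step --- establishing that the polygonal \GD path is genuinely self-contracted when $\eta\le 1/(2L\sqrt d)$, i.e.\ closing the gap between \emph{self-contracted at the vertices} (Lemma~\ref{lemma:QCGD}) and \emph{self-contracted everywhere}; this is precisely what forces the dimension-dependent step-size. A secondary effort, needed only if one insists on the stated explicit constants rather than black-boxing \citet{manselli1991maximum} and \citet{daniilidis2015rectifiability}, is re-running the shell-decomposition argument for self-contracted curves with careful bookkeeping in both the smooth and the general case.
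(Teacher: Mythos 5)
Your GF bound and the observation that self-contractedness confines the path to a ball about $\x_\infty$ match the paper, but your plan for the two GD bounds diverges from the paper's and contains a concrete gap. The gap is in the $2^{4d\log d}$ bound: you propose to show the polygonal interpolation of the iterates is self-contracted when $\eta \le 1/(2L\sqrt d)$ and then invoke ``the $2^{O(d\log d)}$ estimate.'' But that estimate is the Manselli--Pucci bound, which --- as the paper's own prior-work discussion records --- is established for \emph{smooth} self-contracted curves; a polygonal curve has corners and so would fall only under the Daniilidis et al.\ analysis of non-smooth self-contracted curves, whose constant is the weaker $2^{10d^2}$. So even if you did establish self-contractedness of the polygonal path --- a claim the paper never makes, and which is not evident --- your plan as written would only recover the coarser constant, not $2^{4d\log d}$.

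The paper takes a different route, and it is worth seeing why. For both GD bounds it proves the mean-width recurrence $\width(\convexClosure(k+1)) + \increment\,\enorm{\x_k-\x_{k+1}} \le \width(\convexClosure(k))$ \emph{directly} from the discrete self-contractedness of the vertex set $\futurePath(k)=\{\x_k,\x_{k+1},\ldots\}$, and then telescopes. For $\eta\le 1/L$ the key ingredient is the packing lemma of Daniilidis et al.\ (their Lemma~3.2) applied to the unit directions $\xi(\y)$ towards later iterates, giving $\increment = 28^{-2d^2}$; the paper re-derives the recurrence rather than citing their length theorem precisely because that theorem needs self-contractedness at \emph{all} parameter values, not merely at the vertices --- this is the exact obstruction you acknowledge, so you cannot sidestep it by calling the theorem ``for the curve $k\mapsto\x_k$.'' For $\eta \le 1/(2L\sqrt{d})$, the smaller step-size is used for an entirely different purpose than upgrading to path self-contraction: it bounds the angle between consecutive gradients, $\cos\theta \ge \sqrt{1-1/(4d)}$, which is fed into Santal\'o's theorem on spherical caps to produce a direction $\w$ satisfying $\inner{\w,\xi(\y)} \ge 1/\sqrt d$ for all later $\y$ and $\inner{\w,\xi(\x_k)} \le -1/\sqrt{4d}$; the mean-width decrement is then computed directly with $\increment \ge 2^{1-4d\log d}$. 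The sharper GD constant comes from this Santal\'o-plus-gradient-angle argument, not from any self-contractedness of the polygonal extension.
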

The proof of Theorem~\ref{thm:QC} can be found in Appendix~\ref{appsec:proofsQuasiconvex}. \rev{The \GF bound is due to~\citet{manselli1991maximum}, while our contribution is the analysis for \GD curves (nevertheless, we include the \GF proof for completeness). } 
To the best of our knowledge, ours is the only path length bound known for \GD curves of convex \LG functions (without further assumptions). However the step-size restriction of $\eta \leq 1/2L\sqrt{d}$ for the $2^{\mathcal{O}(d\log d)}$ result is much smaller than the usual step-sizes required for convergence to hold. It would be interesting to study if the $2^{\mathcal{O}(d\log d)}$ can be obtained with $\eta = \mathcal{O}(1/L)$. 

Observe that this bound is with respect to $\enorm{\x_0 - \x_\infty}$ instead of $\dist{\x_0}{\X^*}$. For convex functions $f$ with $L$-Lipschitz gradients, it is known that \GD or \GF both converge to a point in the optimal set, that is, $\x_\infty \in \X^*$. Thus if $\X^*$ is singleton, $ \x_\infty = \x^*$. However in general,
$\x_\infty$ may be distinct from $\Pi_{\X^*}(\x_0)$ and $\enorm{\x_0 - \x_\infty}$ may be larger than $\dist{\x_0}{\X^*}$.

The exponential bound of Theorem~\ref{thm:QC} can be significantly improved if the quasiconvex function is separable, that is it exhibits the decomposition
\begin{equation}
f(\x) = \sum_{i=1}^d g_{(i)}(\x_{(i)}), 
\label{eqn:decomposable}    
\end{equation}
for some functions $g_{(i)}:\Real \to \Real$. Note that if $f$ is quasiconvex, each $g_{(i)}$ is quasiconvex. 
\begin{theorem}
\label{thm:decomposable}
Suppose $f$ is quasiconvex and exhibits the decomposition~\eqref{eqn:decomposable}. Then the path length of \GF is bounded as $\pathlength \leq \sqrt{d}\ \dist{\x_0}{\X^*}$. If $f$ has $L$-Lipschitz gradients then the path length of \GD with $\eta \leq 1/L$ also satisfies $\pathlength_\eta \leq \sqrt{d}\ \dist{\x_0}{\X^*}$.
\end{theorem}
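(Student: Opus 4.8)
The plan is to exploit separability to reduce the $d$-dimensional dynamics to $d$ independent one-dimensional ones, each of which is monotone, and then add up the one-dimensional path lengths. Since $f(\x) = \sum_{i=1}^d g_{(i)}(\x_{(i)})$, the \GF ODE decouples as $\vel\x_{(i)} = -g_{(i)}'(\x_{(i)})$ and the \GD update decouples as $\x_{(i)}^{k+1} = \x_{(i)}^{k} - \eta\, g_{(i)}'(\x_{(i)}^k)$, coordinatewise. Each $g_{(i)}$ is quasiconvex (the restriction of a quasiconvex function to a coordinate axis, up to an additive constant, is quasiconvex), each $g_{(i)}$ attains its minimum (an elementary consequence of $f$ attaining its minimum and of additive separability), and when $f$ has $L$-Lipschitz gradients so does each $g_{(i)}'$; moreover $\X^* = \prod_i \X^*_{(i)}$ with $\X^*_{(i)} := \argmin g_{(i)}$, so $\Pi_{\X^*}(\x_0)$ acts coordinatewise.

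The crux is to show that each one-dimensional trajectory is monotone and never overshoots $\X^*_{(i)}$. For a differentiable quasiconvex $g_{(i)}$, the derivative $g_{(i)}'$ is $\le 0$ to the left of $\X^*_{(i)}$, $\ge 0$ to its right, and identically $0$ on $\X^*_{(i)}$. In the \GF case this forces $t\mapsto \x_{(i)}(t)$ to be monotone and to move toward $\X^*_{(i)}$; since the velocity vanishes on $\X^*_{(i)}$ the trajectory cannot cross it, so the limit $\x_{\infty,(i)}$ lies in the closed interval between $\x_{0,(i)}$ and its projection, giving $\abs{\x_{0,(i)} - \x_{\infty,(i)}} \le \abs{\x_{0,(i)} - \Pi_{\X^*}(\x_0)_{(i)}}$. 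For \GD the step-size restriction enters exactly here: if $g_{(i)}'(\x_{(i)}^k) > 0$ and $c$ is the largest zero of $g_{(i)}'$ below $\x_{(i)}^k$ (which exists because $g_{(i)}$ attains its minimum), then $g_{(i)}'(\x_{(i)}^k) = g_{(i)}'(\x_{(i)}^k) - g_{(i)}'(c) \le L(\x_{(i)}^k - c)$, so $\x_{(i)}^{k+1} \ge \x_{(i)}^k - \eta L(\x_{(i)}^k - c) \ge c$ whenever $\eta \le 1/L$; together with the symmetric case, the iterates are monotone and never pass a critical point, yielding the same inequality $\abs{\x_{0,(i)} - \x_{\infty,(i)}} \le \abs{\x_{0,(i)} - \Pi_{\X^*}(\x_0)_{(i)}}$.

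Finally I would assemble the bound. By monotonicity, the arc length contributed by coordinate $i$ equals its total variation: $\int_0^\infty \abs{\vel\x_{(i)}(t)}\,dt = \abs{\x_{0,(i)} - \x_{\infty,(i)}}$ in the \GF case, and $\sum_{k\ge 0}\abs{\x_{(i)}^k - \x_{(i)}^{k+1}} = \abs{\x_{0,(i)} - \x_{\infty,(i)}}$ (a telescoping sum) in the \GD case. Using $\enorm{\vvec}\le\norm{1}{\vvec}$ on the velocity (resp. step) vector and swapping the sum over coordinates with the integral (resp. sum over $k$),
\[
\pathlength \;\le\; \int_0^\infty \sum_{i=1}^d \abs{\vel\x_{(i)}(t)}\,dt \;=\; \sum_{i=1}^d \abs{\x_{0,(i)} - \x_{\infty,(i)}} \;\le\; \norm{1}{\x_0 - \Pi_{\X^*}(\x_0)} \;\le\; \sqrt{d}\,\norm{2}{\x_0 - \Pi_{\X^*}(\x_0)} \;=\; \sqrt{d}\,\dist{\x_0}{\X^*},
\]
and identically for \GD with $\sum_k$ in place of the integral. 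The main obstacle is the second step: rigorously establishing monotonicity and the no-overshoot property while handling degenerate one-dimensional quasiconvex functions (minimizer sets that are nondegenerate or unbounded intervals, flat pieces of $g_{(i)}$ away from its minimum, and, absent a Lipschitz assumption in the \GF case, possible non-uniqueness of the scalar ODE near an equilibrium), and, for \GD, extracting precisely the condition $\eta\le 1/L$ from the Lipschitz bound on $g_{(i)}'$.
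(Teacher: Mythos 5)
Your proof takes essentially the same route as the paper: exploit separability to reduce to $d$ scalar problems, argue that each scalar trajectory is monotone and cannot overshoot (the paper invokes the one-dimensional self-contracted/descent property plus the same step-size Lipschitz bound; you obtain the same conclusion directly from the sign structure of $g_{(i)}'$ and a ``largest zero of $g_{(i)}'$ below the current iterate'' argument), and then conclude via $\norm{2}{\cdot} \le \norm{1}{\cdot} \le \sqrt{d}\,\norm{2}{\cdot}$. The bound and mechanism match the paper's proof, and your explicit attention to non-minimizing stationary points and degenerate one-dimensional cases is a modest sharpening of the paper's informal ``by continuity in one dimension it clearly cannot overshoot.''
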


The proof of this theorem can be found in Appendix~\ref{appsec:proofsQuasiconvex}. The decomposition~\eqref{eqn:decomposable} ensures that \GD/\GF always follows a descend direction in each of the components. The theorem generalizes easily to a larger class of functions that exhibit component-wise descent for any orthogonal basis (and not necessarily the canonical basis). It would be interesting to study if this can be shown for some standard class of functions larger than separable quasiconvex functions. Theorem~\ref{thm:PLlowerBoundQuadratic} shows a matching $\Omega(\sqrt{d})$ lower bound for separable quasiconvex functions. 

\section{Lower Bounds}
\label{sec:lower-bounds}
In this section we provide lower bounds on the path length for quadratic functions, \PKL functions, and separable quasiconvex functions. In each case, given problem parameters ($d$, $\kappa$), we construct a worst-case lower bound---that is we exhibit a function $f$ that satisfies the problem parameters and specify an initial point $\x_0$ for which the path length is lower bounded by some function of $d$ and $\kappa$, times the length of the shortest path $\dist{\x_0}{\X^*}$.

\subsection{An $\boldsymbol{\widetilde\Omega(\sqrt{d} \wedge \kappa^{1/4})}$ Lower Bound for \PKL Functions}
\label{sec:PLlowerBound}
In Section~\ref{sec:quadratics} we obtained an $\mathcal{O}(\sqrt{\log\kappa})$ dependence for the path length of quadratics objectives. Thus, a natural question is whether the $\mathcal{O}(\sqrt{\kappa})$ bound for the path length of PKL objectives can be improved to $\mathcal{O}(\text{polylog}(\kappa))$. In this section, we show that such a dependence is precluded for \PKL functions without further assumptions. Previously,~\citet[Theorem~5.4]{oymak2019overparameterized} have presented a lower bound in terms of $f(\x_0) - f^*$. However, this bound when translated in terms of $\dist{\x_0}{\X^*}$ leads to a trivial result. Theorem~\ref{thm:PLlowerBound} is the first non-trivial lower bound for functions that satisfy \PKL. The constructed function also satisfies linear convergence, leading to a lower bound for linearly convergent functions as well. Let $\fancyF_{\kappa}$ be the class of real-valued functions on $\Real^d$ such that every $f \in \fancyF_\kappa$ satisfies: 
\begin{itemize}
    \item $f$ is continuously differentiable.
    \item There exist constants $\mu, L > 0$ such that $\kappa \geq L/\mu$ and a) $f$ has $L$-Lipschitz gradients, b) $f$ satisfies the \PKL inequality with constant $\mu$. 
\end{itemize}
\begin{theorem}
\label{thm:PLlowerBound}
For every $d \geq 6$ and $\kappa \geq 216$, there exists a function $f \in \fancyF_{\kappa}$ and an initial point $\x_0$ such that the \GF dynamics on $f$ with the initial point $\x_0$ satisfies
\[
\pathlength \geq \min\curlybrack{\frac{\sqrt{d}}{6\log d}, \frac{{\kappa}^{1/4}}{6 \log \kappa}}\ \dist{\x_0}{\X^*}.
\]
Similarly, there exists a function $f \in \fancyF_{\kappa}$, an initial point $\x_0$, and some step-size $\eta \in [\nicefrac{1}{2L}, \nicefrac{1}{L}]$ such that the \GD iterates on $f$ with the initial point $\x_0$ satisfy 
\[
\pathlength_\eta \geq \min\curlybrack{\frac{\sqrt{d}}{16\log d}, \frac{{\kappa}^{1/4}}{16 \log \kappa}}\ \dist{\x_0}{\X^*}.
\]
\rev{The same construction guarantees the following: for any $c \in (0, 5.8\cdot 10^{-3})$, there exists a function $f \in \fancyF_\kappa$ and a point $\x_0$, such that $f$ satisfies $(1, c)$-linear convergence (for \GF and \GD) and the path length with the initial point $\x_0$ can be bounded as 
\[
\zeta \geq \roundbrack{\frac{\sqrt{1/c}}{12\log^{1.5}(1/c)}}\dist{\x_0}{\X^*}; \ \ \zeta_\eta \geq \roundbrack{\frac{\sqrt{1/c}}{64\log^{1.5}(1/c)}}\dist{\x_0}{\X^*},
\]
for $\GF$ and $\GD$ respectively. }
\end{theorem}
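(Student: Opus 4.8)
The plan is to exhibit, for each admissible pair $(d,\kappa)$ (with $d\ge 6$, $\kappa\ge 216$ providing just enough room), a single ``radially quadratic'' objective $f(\x)=\tfrac12\|\x\|^2\,h(\x/\|\x\|)$, where $h\in C^{1,1}(S^{d-1})$ is a carefully designed angular profile taking values in an interval $[a,b]\subset(0,\infty)$, together with an explicit initial point $\x_0$. Such an $f$ lies in $\fancyF_\kappa$ essentially for free: writing $\omega=\x/\|\x\|$ we have $\nabla f(\x)=\|\x\|\,h(\omega)\,\omega+\tfrac12\|\x\|\,\nabla_\omega h(\omega)$, so $\|\nabla f(\x)\|^2\ge \|\x\|^2 h(\omega)^2\ge a^2\|\x\|^2\ge 2\mu f(\x)$ with $\mu\asymp a^2/b$ (hence $f$ is $\mu$-\PKL, and $C^1$ at the origin with $\nabla f(\zero)=\zero$), while $\nabla^2 f$ is degree-$0$ homogeneous and bounded by $L\asymp\max\{b,\|\nabla_\omega h\|_\infty,\|\nabla_\omega^2 h\|_\infty\}$, so $f$ has $L$-Lipschitz gradients and $\kappa=L/\mu$ (this $f$ will be genuinely nonconvex, which is what lets its path length exceed the $O(\sqrt{\log\kappa})$ of quadratics). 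Two structural facts drive everything: $\X^*=\{\zero\}$, so $\dist{\x_0}{\X^*}=\|\x_0\|$; and the flow decouples --- the radius obeys $\dot r=-r\,h(\omega)\in[-rb,-ra]$ while, by scale invariance, the angular component $\omega_t$ is precisely the gradient flow of $\tfrac12 h$ on $S^{d-1}$, independent of $r$.

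Consequently the path length is essentially angular: $\pathlength=\int_0^\infty\sqrt{\dot r^2+r^2\|\dot\omega\|^2}\,dt$, the radial part integrating to $\approx r_0$ and the angular part to $\int_0^\infty r\|\dot\omega\|\,dt$. If the angular flow sweeps arc length $\Lambda:=\int\|\dot\omega\|\,dt$ on the sphere \emph{before} the radius shrinks appreciably, the angular contribution is $\Theta(r_0\Lambda)$, so the path-to-distance ratio is $\Theta(\Lambda)$; the design goal is thus to force the gradient flow of $h$ to sweep a long ``spiral valley'' of large $\Lambda$. Two constraints cap $\Lambda$. First, no radial collapse: along the angular flow $\tfrac{d}{dt}h=-\tfrac12\|\nabla_\omega h\|^2$ and $\int\|\nabla_\omega h\|^2\,dt\le 2(b-a)$, so Cauchy--Schwarz gives $T_{\mathrm{ang}}\gtrsim\Lambda^2/b$; requiring $a\,T_{\mathrm{ang}}=O(1)$ forces $\Lambda\lesssim\sqrt{b/a}$, which with $\mu\asymp a^2/b$ and $L\asymp b$ reads $\kappa\gtrsim\Lambda^4$, i.e.\ $\Lambda\lesssim\kappa^{1/4}$. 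Second, the valley must fit: for the flow not to jump between successive turns the valley must stay boundedly separated from itself, and a dimension/packing argument in the design of $h$ limits $\Lambda\lesssim\sqrt d$ (up to $\mathrm{polylog}$). Saturating both gives $\Lambda\asymp\min\{\sqrt d,\kappa^{1/4}\}/\mathrm{polylog}$; placing $\x_0$ at the start of the valley at radius $1$ and tracking constants yields $\pathlength\ge\min\{\sqrt d/(6\log d),\ \kappa^{1/4}/(6\log\kappa)\}\,\dist{\x_0}{\X^*}$ for \GF.

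For \GD I would take $\eta\in[1/2L,1/L]$ and show the forward-Euler iterates remain in an $O(\eta L)$-tube around the \GF curve, so they still track the spiral valley and accumulate $\Omega(\Lambda)$ length up to a constant (degrading $6$ to $16$). The monotone radial decay survives discretization: $\langle\x_k,\nabla f(\x_k)\rangle=r_k^2 h(\omega_k)>0$ together with $\eta\le 1/L$ gives $\|\x_{k+1}\|\le(1-c)\|\x_k\|$ with $c$ of order $a$ --- so in fact \emph{the same} $f$ and $\x_0$ certify $(1,c)$-linear convergence with $A=1$ exactly (for \GF, $c=1-e^{-a}$ since $r(s)\le r_0 e^{-as}$; for \GD, $c\asymp\eta a$). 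Re-expressing the lower bound in terms of $c$ is then just inversion: holding $b$ a fixed constant, $\mu\asymp c^2$, $\kappa\asymp 1/c^2$, $\Lambda\asymp\kappa^{1/4}\asymp\sqrt{1/c}$, and $\log\kappa\asymp\log(1/c)$, with an extra $\sqrt{\log(1/c)}$ lost in forcing the linear-convergence constant to be exactly $1$ (rather than the $A=\sqrt\kappa$ one gets automatically from \PKL$+$\LG); this produces $\pathlength\ge(\sqrt{1/c}/(12\log^{1.5}(1/c)))\,\dist{\x_0}{\X^*}$ for \GF and the analogous $\pathlength_\eta\ge(\sqrt{1/c}/(64\log^{1.5}(1/c)))\,\dist{\x_0}{\X^*}$ for \GD.

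The main obstacle is the explicit design of $h$ underlying the second paragraph: constructing a $C^{1,1}$ profile on $S^{d-1}$ whose negative gradient flow genuinely sweeps a valley of arc length $\Lambda\asymp\min\{\sqrt d,\kappa^{1/4}\}$ with \emph{no} spurious critical points en route (gradient flows readily stall at local minima), while simultaneously keeping $\|\nabla_\omega h\|_\infty$ and $\|\nabla_\omega^2 h\|_\infty$ small enough that $\kappa$ comes out as $\Theta(\Lambda^4)$ and not worse, and paying only a $\sqrt d$ price in dimension. Nailing the precise interplay between the valley length, the smoothness bound $\kappa\gtrsim\Lambda^4$, the ``no radial collapse'' balance, and the dimension bound $\Lambda\lesssim\sqrt d$ is the technical heart; by contrast the \GD tube-tracking and the monotone-radius estimates are routine perturbations of the \GF argument.
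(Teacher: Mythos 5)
Your approach --- a radial factorization $f(\x)=\tfrac12\|\x\|^2 h(\x/\|\x\|)$ whose angular gradient flow traces a long spiral on $S^{d-1}$ --- is genuinely different in spirit from the paper's construction, and the scaling calculus you run (radial/angular decoupling, $T_{\mathrm{ang}}\gtrsim\Lambda^2/(b-a)$ from Cauchy--Schwarz, $aT\lesssim 1$ to prevent collapse, hence $\Lambda\lesssim\sqrt{b/a}\asymp\kappa^{1/4}$) is internally consistent. But as you yourself flag, the argument has no construction of $h$ at its core, and that is not a technicality you can defer: the existence of a $C^{1,1}$ angular profile whose gradient flow on $S^{d-1}$ sweeps $\Theta(\min\{\sqrt d,\kappa^{1/4}\})$ arc length with no intermediate critical points, while keeping $\|\nabla_\omega h\|_\infty,\|\nabla_\omega^2 h\|_\infty$ small enough that $L\asymp b$, is itself as hard as the theorem you're trying to prove. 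The ``packing argument'' giving $\Lambda\lesssim\sqrt d$ is unmotivated (it is a constraint you \emph{want} saturated, not an upper bound you need), and without $h$ in hand the explicit constants $1/6$, $1/16$, $1/12$, $1/64$ and the $\sqrt{\log(1/c)}$ loss in the linear-convergence reformulation cannot be extracted --- you've guessed them to match the statement rather than derived them.

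The paper sidesteps all of this with a much more elementary, fully explicit separable construction: $f(\x)=\sum_{i=1}^d g(\x_{(i)})$ for a single piecewise one-dimensional $g$ (quadratic on $[0,0.5]$, concave on $[0.5,1-\delta]$, linear on $[1-\delta,\gamma]$, quadratic tail, with $\delta=1/d$) together with a \emph{staggered} initial point $\x_0$ whose coordinates are arithmetically spaced along the linear piece. The self-similar dynamics then force exactly one coordinate per time interval to drop through the quadratic region (from $\approx 0.5$ to $\approx\delta$), so the path length accumulates $\Omega(d)$ while $\dist{\x_0}{\X^*}=O(\sqrt d\log d)$; the \PKL constant is computed directly as $\mu=2/(3d^2)$ with $L=2$, giving $\kappa\leq 3d^2$, and the same construction's linear-convergence constant $c\asymp(d\log d)^{-1}$ is read off from $\nabla g(x)/x$. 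Every constant in the theorem drops out of this elementary bookkeeping. Your radial approach trades that explicitness for a cleaner conceptual picture (path-to-distance ratio $=$ swept arc length), but without constructing $h$ it remains a heuristic; if you want to pursue it you will need to produce $h$ explicitly, and it is not obvious it can be done more cheaply than the separable route.
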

\begin{proofsketch}The function $f$ that we construct decomposes as $f(\x) = \sum_{i=1}^d g(\x_{(i)})$, where the function $g$ (Figure~\ref{fig:PLlowerBoundMain}) is $L$-Lipschitz and $\mu$-\PKL (and thus so is $f$) with $\kappa \geq L/\mu$. $g$ is designed so that it is equal to $x^2$ in the interval $[0, 0.5]$, so that it is strongly convex in that region. In $[0.5, 1]$, $g$ is not strongly convex (or convex) and in some sense tapers off. However, $g$ continues to maintain the PKL curvature condition with some constant $\mu$ globally. Next we stagger the components of the initial point $\x_0$ so that at every consecutive time interval, a single component starts has value $0.5$ at the beginning of the time interval and decreases to almost $0.0$ at the end of the time interval (Figure~\ref{fig:PKLiterates}). In this way, at every time interval, a single additional component is captured. Loosely speaking, \GD while optimizing approximately follows the edges of a cube instead of the diagonal. This ensures that the path length is a factor $\approx 0.5\sqrt{d}$ larger than the shortest path. Then, we compute $\kappa$ and relate it to $d$ to obtain the final bound. Similarly, we show that the function satisfies $(1,c)$-linear convergence and we relate the linear convergence constant $c$ to the dimension $d$ to obtain the linear convergence result. See Appendix~\ref{appsec:proofsLowerBound} for details.
\end{proofsketch}

These lower bounds do not match the $O(\sqrt{\kappa})$ upper bound for \PKL functions and the $O(1/c)$ upper bound for linearly convergent functions. We do not know which of these bounds are tight. \rev{In Appendix~\ref{appsec:PL-simulation} we simulate the lower bound constructed in the proof of Theorem~\ref{thm:PLlowerBoundQuadratic} with \GD. This simulation allows us to verify that the dependence of the path length of the lower bound construction is indeed $\Omega(\kappa^{1/4}/\log\kappa)$ (and not something larger like $\Omega(\sqrt{\kappa})$). We observed that the dependence is $\approx 3\kappa^{1/4}/\log \kappa$. Thus the dependence on $\kappa$ is correct, but the constants are loose. } Observe that the function constructed above is not strongly convex. Thus proving a $\text{poly}(\kappa)$ lower bound for strongly convex functions remains an open problem. 

\begin{figure}[t]
    \centering
    \includegraphics[width=\linewidth,trim=0cm 5cm 0cm 6cm, clip]{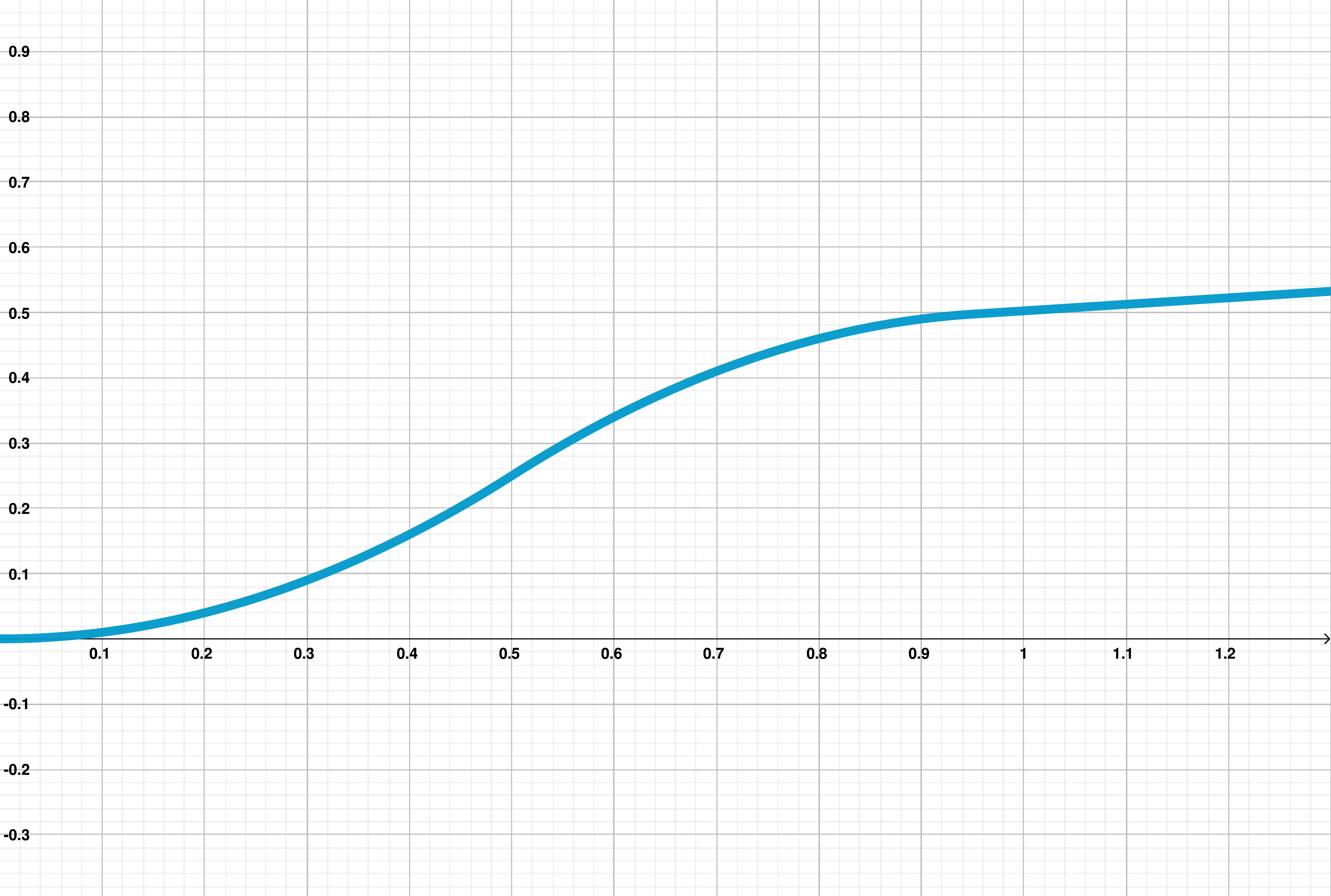}
    \caption{Component function $g$ for \PKL lower bound.}
    \label{fig:PLlowerBoundMain}
\end{figure}
\begin{figure}[t]
    \centering
    \begin{subfigure}[b]{0.32\linewidth}
    \includegraphics[width=\linewidth,trim=0cm 4.5cm 4cm 5.5cm, clip]{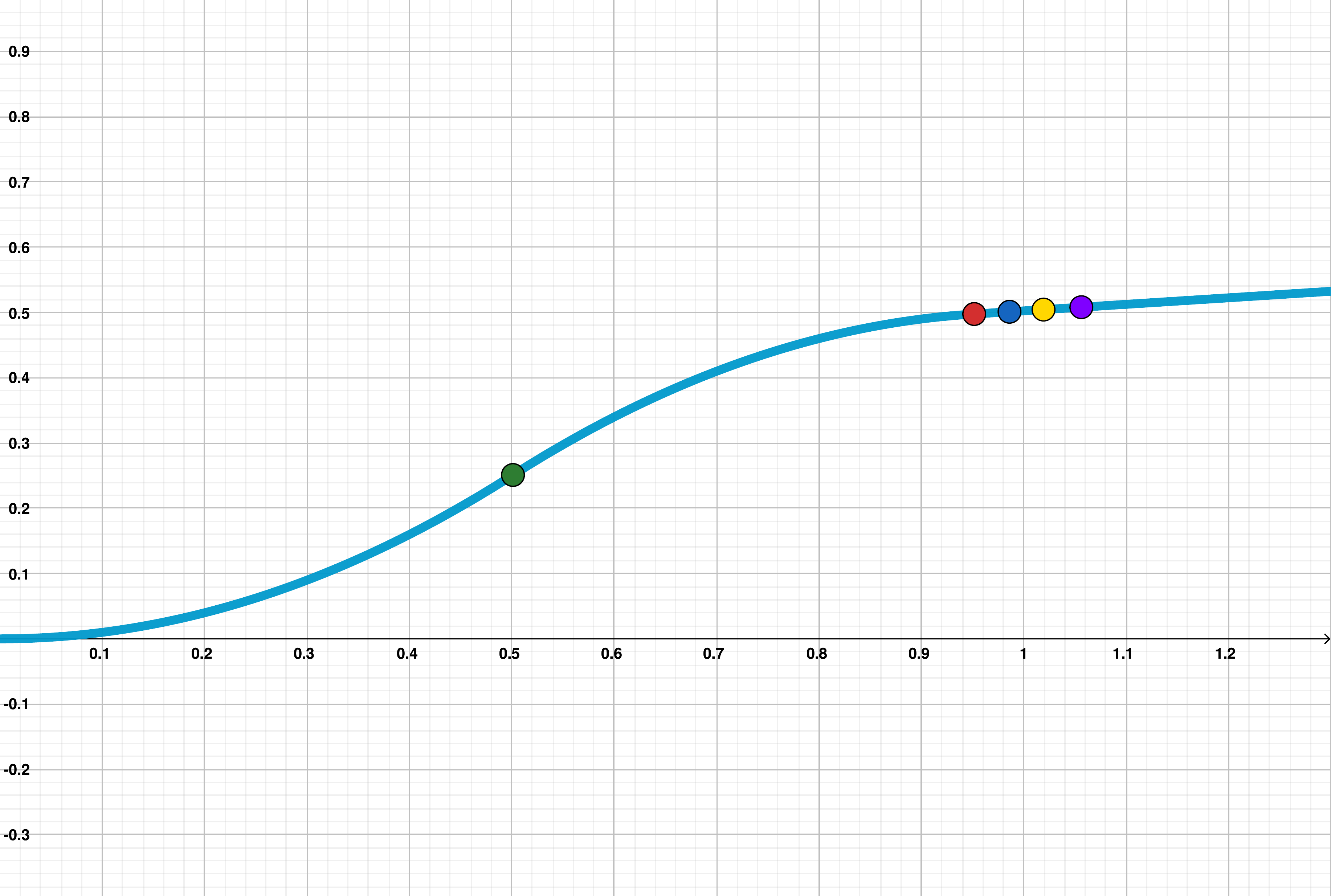}
    \hfill
    \caption{$k = 0$}
    \end{subfigure}
    \begin{subfigure}[b]{0.32\linewidth}
    \includegraphics[width=\linewidth,trim=0cm 4.5cm 4cm 5.5cm, clip]{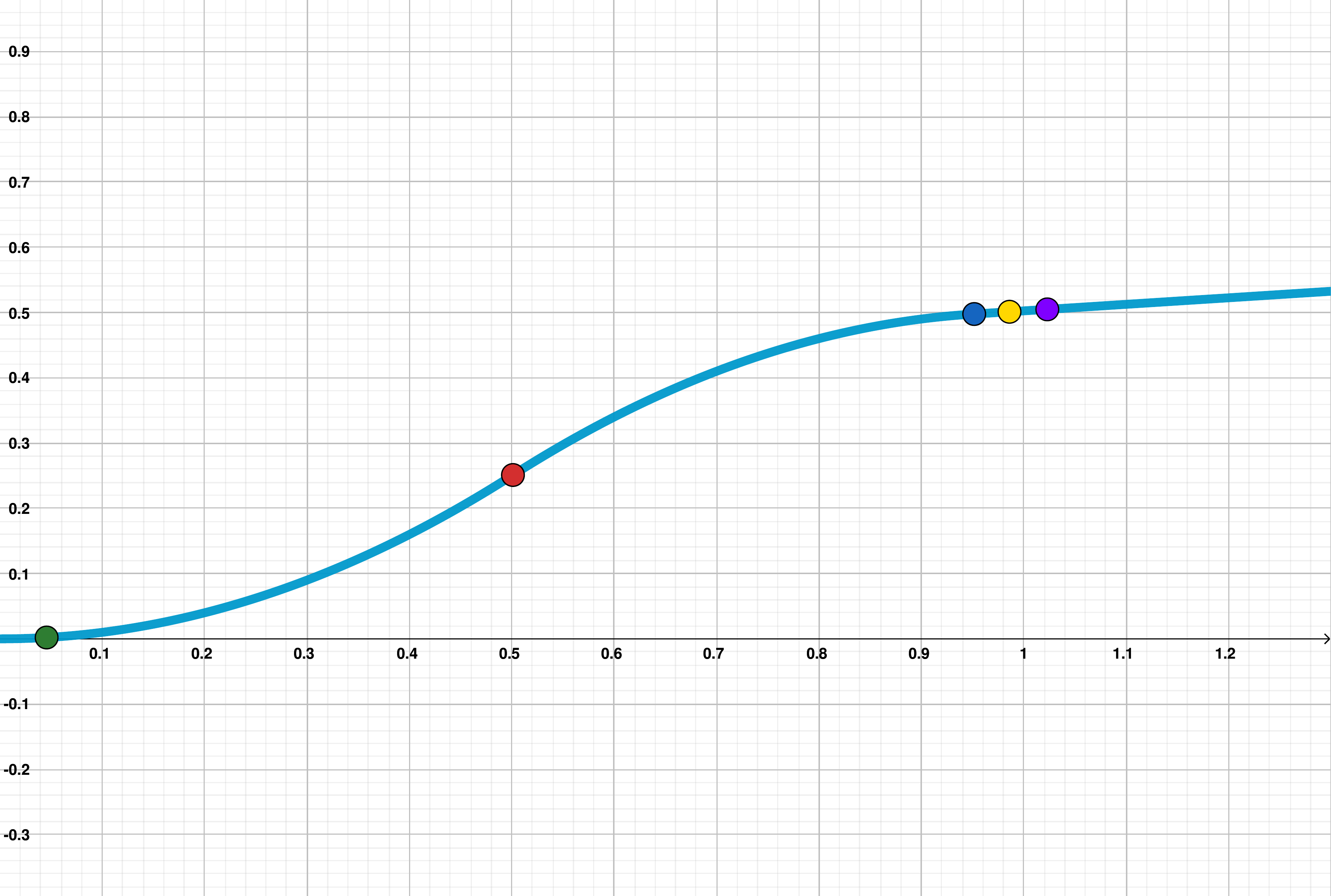}
    \hfill
    \caption{$k = 1$}
    \end{subfigure}
    \begin{subfigure}[b]{0.32\linewidth}
    \includegraphics[width=\linewidth,trim=0cm 3.5cm 3.7cm 5.5cm, clip]{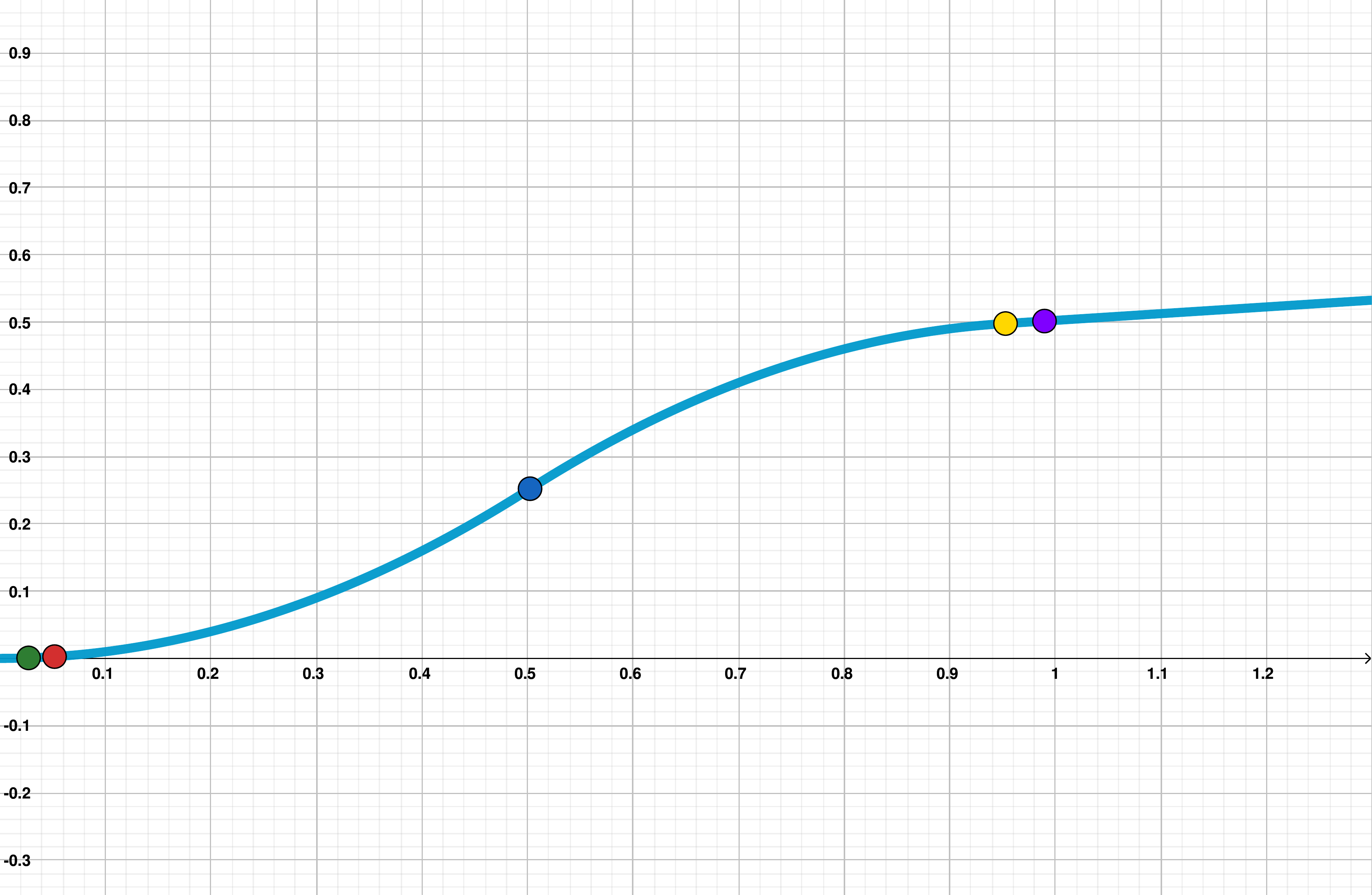}
    \vspace*{0mm}
    \caption{$k = 2$}
    \end{subfigure}
    \caption{Illustration of lower bound path length construction of Theorem~\ref{thm:PLlowerBound} for \GD. Every colored circle denotes the value of $\x_k$ in a different component. In iterate $k = 1$, the green component goes from $0.5$ to almost $0.0$. Then in iterate $k = 2$ the red components decrease in the same manner and so on.}
    \label{fig:PKLiterates}
\end{figure}

\subsection{An $\boldsymbol{\Omega(\sqrt{d} \wedge \sqrt{\log \kappa} )}$ Lower Bound for Quadratics}
\label{sec:QuadraticLowerBound}
In this section, we show that the upper bound for quadratic objectives proved in Section~\ref{sec:quadratics} is tight by constructing an instance of \GD and \GF where the path length is $\Omega(\sqrt{\log \kappa})$, if the dimension can be set arbitrarily \rev{(this is the `large-scale' optimization assumption made in many lower bounds; for instance, see \citet{guzman2015lower} and \citet[Theorem 2.1.7]{nesterov2013introductory}.} Let $\fancyQ_\kappa$ be the class of quadratic functions on $\Real^d$ such that the Hessian has non-negative singular values and the ratio of the largest and smallest non-zero singular values is at most $\kappa$. 
\begin{theorem}
\label{thm:PLlowerBoundQuadratic}
For every $\kappa \geq 5$, there exists a quadratic function $f \in \fancyQ_{\kappa}$ and an initial point $\x_0$ such that the \GF dynamics on $f$ with the initial point $\x_0$ satisfies
\begin{equation}
\pathlength \geq \min\curlybrack{0.7\sqrt{d}, 0.45\sqrt{\log \kappa}}\ \dist{\x_0}{\X^*}.
\label{eqn:quadraticLowerGF}    
\end{equation}
Similarly, there exists a function $f \in \fancyQ_{\kappa}$ and an initial point $\x_0$, such that for step-size $\eta = \nicefrac{1}{2L}$ the \GD iterates on $f$ with the initial point $\x_0$ satisfy 
\begin{equation}
    \pathlength_\eta \geq \min\curlybrack{0.5\sqrt{d}, 0.3\sqrt{\log \kappa}}\ \dist{\x_0}{\X^*}.
    \label{eqn:quadraticLowerGD}
\end{equation}
The quadratic functions constructed are separable (see \eqref{eqn:decomposable}) and convex. Thus the $\Omega(\sqrt{d})$ bound holds for separable quasiconvex functions and convex functions as well. 

\end{theorem}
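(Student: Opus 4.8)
The plan is to build a separable convex quadratic whose Hessian eigenvalues are geometrically spaced, so that \GF and \GD sweep through coordinate directions essentially one at a time: the curve then roughly traces the edges of a box, producing a path of length $\approx m$ while the straight-line distance is only $\approx\sqrt m$, where $m$ is the number of active coordinates. Concretely, fix a constant $\rho>1$ (to be optimized at the end), set $m:=\min\curlybrack{d,\ 1+\floor{\log\kappa/\log\rho}}$ and $\lambda_i:=\rho^{m-i}$ for $1\le i\le m$, and take
\[
f(\x)=\frac{1}{2}\sum_{i=1}^m\lambda_i\,\x_\indi^2,\qquad \x_0=\sum_{i=1}^m\e_i .
\]
First I would check the routine facts: the Hessian of $f$ is diagonal with nonnegative entries whose nonzero part has ratio $\lambda_1/\lambda_m=\rho^{m-1}\le\kappa$, so $f\in\fancyQ_\kappa$, and $f$ is separable and convex; $\X^*=\curlybrack{\x:\x_\indi=0\text{ for }i\le m}$, so both flows converge to $\Pi_{\X^*}(\x_0)=\zero$ and $\dist{\x_0}{\X^*}=\sqrt m$. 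The \GF curve is coordinatewise $\x_\indi(t)=e^{-\lambda_i t}$ for $i\le m$, so $\pathlength=\int_0^\infty\roundbrack{\sum_{i=1}^m\lambda_i^2 e^{-2\lambda_i t}}^{1/2}dt$.

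For the \GF bound I would partition $[0,\infty)=\bigcup_{i=1}^m I_i$ with $I_i=[t_{i-1},t_i)$, $t_0=0$, $t_m=\infty$, and $t_i=(\lambda_i\lambda_{i+1})^{-1/2}$ for interior $i$ (these increase since the $\lambda_i$ decrease), so that $\lambda_i t_{i-1}=\rho^{-1/2}$ and $\lambda_i t_i=\rho^{1/2}$. Lower-bounding the integrand on $I_i$ by its $i$-th summand and integrating,
\[
\pathlength=\sum_{i=1}^m\int_{I_i}\roundbrack{\sum_{j=1}^m\lambda_j^2 e^{-2\lambda_j t}}^{1/2}dt\ \ge\ \sum_{i=1}^m\int_{I_i}\lambda_i e^{-\lambda_i t}\,dt\ \ge\ m\roundbrack{e^{-\rho^{-1/2}}-e^{-\rho^{1/2}}},
\]
since $\int_{I_i}\lambda_i e^{-\lambda_i t}\,dt=e^{-\lambda_i t_{i-1}}-e^{-\lambda_i t_i}$ equals $e^{-\rho^{-1/2}}-e^{-\rho^{1/2}}$ for interior $i$ and is larger at the two ends. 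With $\dist{\x_0}{\X^*}=\sqrt m$ and $\sqrt m\ge\min\curlybrack{\sqrt d,\ \sqrt{\log\kappa/\log\rho}}$ by the choice of $m$, this gives $\pathlength\ge\roundbrack{e^{-\rho^{-1/2}}-e^{-\rho^{1/2}}}\min\curlybrack{\sqrt d,\ \sqrt{\log\kappa/\log\rho}}\,\dist{\x_0}{\X^*}$; choosing $\rho$ to maximize the relevant products of constants delivers \eqref{eqn:quadraticLowerGF}.

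The \GD argument runs in parallel. With $\eta=\nicefrac{1}{2L}=\nicefrac{1}{2\lambda_1}$ we get $\eta\lambda_i\in(0,\tfrac12]$, so $1-\eta\lambda_i\in[\tfrac12,1)$ and the $i$-th coordinate of $\x_k$ equals $(1-\eta\lambda_i)^k$ (in particular the iterates converge to $\zero$). Partitioning $\wholes=\bigcup_i K_i$ with $K_i=\curlybrack{k_{i-1},\dots,k_i-1}$ for integers $0=k_0<k_1<\dots<k_m=\infty$, and using $\eta\lambda_i\sum_{k=k_{i-1}}^{k_i-1}(1-\eta\lambda_i)^k=(1-\eta\lambda_i)^{k_{i-1}}-(1-\eta\lambda_i)^{k_i}$,
\[
\pathlength_\eta=\sum_{i=1}^m\sum_{k\in K_i}\eta\,\enorm{\nabla f(\x_k)}\ \ge\ \sum_{i=1}^m\squarebrack{(1-\eta\lambda_i)^{k_{i-1}}-(1-\eta\lambda_i)^{k_i}}.
\]
Taking $k_i$ to be the integer nearest $(\eta\lambda_i\,\eta\lambda_{i+1})^{-1/2}=2\rho^{i-1/2}$ and using $e^{-2x}\le 1-x\le e^{-x}$ on $[0,\tfrac12]$ makes every bracket at least a fixed positive constant (tending to $1$ as $\rho\to\infty$), so $\pathlength_\eta\ge c'_\rho\,m=c'_\rho\sqrt m\,\dist{\x_0}{\X^*}$, and optimizing $\rho$ yields \eqref{eqn:quadraticLowerGD}. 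Finally, since $f$ is separable and convex, picking $\kappa$ large enough that $m=d$ immediately gives the stated $\Omega(\sqrt d)$ bound for separable quasiconvex and for convex functions.

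The hard part is not the structure of the argument but the constant bookkeeping: $\rho$ must be chosen so that the per-interval decay fraction $e^{-\rho^{-1/2}}-e^{-\rho^{1/2}}$ and the loss $1/\sqrt{\log\rho}$ incurred when re-expressing $m$ through $\kappa$ balance to the claimed $0.7/0.45$ (and the degraded $0.5/0.3$ for \GD), which forces care with the floor in the definition of $m$ and with the boundary case $\kappa\ge 5$, where $\rho$ must be capped to keep $m\ge 2$. The \GD version additionally requires sandwiching $1-\eta\lambda_i$ between exponentials and snapping the breakpoints $k_i$ to integers while keeping all $m$ per-coordinate decay fractions simultaneously bounded away from $0$ and $1$; the slack from this discretization is exactly what separates the \GD constants from the \GF ones.
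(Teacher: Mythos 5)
Your proposal is correct and takes essentially the same route as the paper: a separable quadratic with geometrically spaced eigenvalues, $\x_0=\mathbf{1}$, and a partition of the time axis so that each coordinate contributes an $\Omega(1)$ displacement, giving $\Omega(m)$ path against $\sqrt m$ shortest distance. The paper hard-codes the spectral ratio ($\omega=11$) and places breakpoints at $t_i=\log(1/\delta)/a_i$ rather than at the geometric midpoints $(\lambda_i\lambda_{i+1})^{-1/2}$, and it handles the $d$-versus-$\kappa$ tradeoff by explicit casework rather than via your $m=\min\{d,\,1+\lfloor\log\kappa/\log\rho\rfloor\}$, but these are only bookkeeping differences.
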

\begin{proofsketch}
The quadratic function we consider has geometrically increasing spectra---the eigenvalues are $1, \omega, \omega^2, \ldots \omega^{d-1}$. Similar to the proof of Theorem~\ref{thm:PLlowerBound} for the \PKL case, in each time interval (or iterate), a single component is captured. This leads to a lower bound of $\Omega(\sqrt{d})$. We then relate $\sqrt{d} = \Theta(\sqrt{\log \kappa})$ to write the final bound. Details can be found in Appendix~\ref{appsec:proofLowedQuadratic}. 
\end{proofsketch}
In Appendix~\ref{appsec:quadratic-simulation} we simulate the quadratic lower bound constructed in the proof of Theorem~\ref{thm:PLlowerBoundQuadratic} and compare the lower bound and upper bound (Theorem~\ref{thm:QUADGF}) with the empirically observed path length. 

\section{Discussion}\label{sec:discussion}
Bounds on the path length of \GD (and related algorithms like \SGD) have implicitly been studied in some recent papers seeking to understand optimization for deep neural networks. 
In this paper, we provide unified results for \GD and \GF under common smoothness and curvature assumptions, obtaining the tightest known bounds for quadratics and convex objectives. We also presented a meta-theorem that gives us a path length bound for any linearly convergent iterative algorithm. To complement these results, for \PKL objectives, we also show a lower bound, which to our knowledge is the first such lower bound for path lengths. For quadratics, we give a matching lower bound thus completely characterizing path lengths in this setting. For separable quasiconvex objectives, we give matching (up to constants) upper and lower bounds on the path length.  

Our meta-theorem suggests that path lengths are intricately tied with convergence properties. While we focused on GD and GF, it would be interesting to consider the path length bounds exhibited by other optimization algorithms that have good convergence properties, such as SGD, projected gradient descent, accelerated methods, heavy ball methods, second order methods, proximal methods, and so on. In Appendix~\ref{appsec:additionalLC}, we show some preliminary results to this end. \revminor{\citet[Fact 4]{attouch2016rate} showed a path length bound style result for Nesterov's acceleration method.} 

A broader open direction concerns understanding better the statistical implications of our path length bounds. As an example, it is known that stable optimization algorithms can exhibit better generalization guarantees~\citep{bousquet2002stability,chen2018stability,hardt2016train} and it is natural to expect similar qualitative behaviour from optimization algorithms that have short path lengths. Some recent works~\citep{balakrishnan2017statistical,mei2018landscape}, have provided statistical guarantees for optimization-based estimators via uniform (statistical) convergence over the possible algorithm iterates, and we believe these might also be strengthened by a deeper understanding of the path followed by these algorithms.

While the focus of this paper is on path length bounds for \GD and \GF, we also note other problems for which path length bounds have been considered. Bounds on the $\ell_1$ path length of lasso and forward stagewise regression have been studied by~\citet{hastie2007forward}. Adaptive regret bounds for bandits have been shown that depend on the total path length of the losses at each step~\citep{wei2018more, pmlr-v99-bubeck19b}. \citet{argue2019nearly} showed a result for nested convex body chasing using a bound on the path length of self-contracted curves~\citep{manselli1991maximum}, which was originally developed for proving the convergence of \GF for any quasiconvex function.

\acks{SB is grateful to Marco Molinaro for inspiring discussions. We thank S\'{e}bastien Bubeck for pointing us to some references on self-contracted curves, and Anupam Gupta for informing us about the nested convex body chasing problem that sparked our interest in path length bounds. We thank the anonymous JMLR reviewers and action editor Ambuj Tewari for their comments that helped improve an earlier version of this paper. SB and CG were supported in part by the NSF grant DMS-1713003. All figures in the paper were created using GeoGebra~\citep{gg2}.
}

\bibliographystyle{plainnat}
\bibliography{19-979}

\newpage
\appendix 
\part{Appendix}
\parttoc
\newpage
\section{Proofs and Additional Results from Section~\ref{sec:linconv}}
\label{appsec:additionalLC}
In this section, we state and prove more general versions of Theorem~\ref{thm:LCGD} and Theorem~\ref{thm:LCGext}. These provide a path length bound if linear convergence can be established with respect to a local minimum rather than the global minimum. We also show path length bounds for Polyak's heavy ball method and projected gradient descent. 

\subsection{Generalized Version of Theorem~\ref{thm:LCGD}}
Theorem~\ref{thm:LCGD} can be generalized for linear convergence to any set $\widehat{\X}$ as long as it is stationary, defined next. 

\begin{definition}[Stationary convex set]
A stationary convex set is a convex set $\widehat\X$ such that for every $\widehat\x \in \widehat\X$, $\nabla f(\widehat\x) = 0$. 
\end{definition}

Given any convex set $\widehat{\X}$, we can generalize Definition~\ref{def:linConvOptimal} to allow linear convergence to this set instead of the globally optimal set $\X^*$. Consider the following modification of condition~\eqref{eqn:linConv} replacing $\X^*$ with $\widehat\X$:
\[
    \dist{\x_s}{\widehat\X}  \leq A(1 - c)^s\ \dist{\x}{\widehat\X}.
\]
Here $s$ may belong to $\wholes$ (discrete-time) or $\Real_0$ (continuous-time). 

\begin{theorem}
    \label{thm:LCGDextended}
    Suppose $f$ has $L$-Lipschitz gradients. If the \GF dynamics for $f$ exhibits linear convergence towards a stationary convex set $\widehat\X$ with constants $(A, c)$, then its path length is bounded as:
    \begin{equation}
        \label{eqn:lcgfapp}
        \pathlength \leq (AL/\log\roundbrack{1/(1-c)})\ \dist{\x_0}{\widehat\X},
    \end{equation}
    and if the \GD iterates with step-size $\eta$ exhibit linear convergence towards $\widehat\X$ with constants $(A, c)$, then their path length is bounded as: 
    \begin{equation}
        \label{eqn:lcgdapp}
        \pathlength_\eta \leq (\eta AL/c)\ \dist{\x_0}{\widehat\X}.    
    \end{equation}
\end{theorem}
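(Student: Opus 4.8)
The plan is to mirror the argument sketched for Theorem~\ref{thm:LCGD}, but carefully track that everything works with $\dist{\cdot}{\widehat\X}$ in place of $\dist{\cdot}{\X^*}$, using the stationarity of $\widehat\X$ only to guarantee that the $L$-Lipschitz gradient bound can be applied against a point where $\nabla f$ vanishes. First I would treat the \GD case \eqref{eqn:lcgdapp}. Fix $k$ and let $\widehat\x_k := \Pi_{\widehat\X}(\x_k)$, which is well-defined since $\widehat\X$ is a nonempty closed convex set. Because $\widehat\X$ is stationary, $\nabla f(\widehat\x_k) = \zero$, so the $L$-Lipschitz gradient property gives $\enorm{\nabla f(\x_k)} = \enorm{\nabla f(\x_k) - \nabla f(\widehat\x_k)} \leq L\enorm{\x_k - \widehat\x_k} = L\,\dist{\x_k}{\widehat\X}$. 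Hence the length of the $k$-th segment is $\enorm{\x_{k+1} - \x_k} = \eta\enorm{\nabla f(\x_k)} \leq \eta L\,\dist{\x_k}{\widehat\X} \leq \eta L A (1-c)^k \dist{\x_0}{\widehat\X}$, where the last step is the linear convergence hypothesis. Summing over $k \geq 0$ gives a geometric series with ratio $(1-c) \in (0,1)$, so $\pathlength_\eta \leq \eta L A \dist{\x_0}{\widehat\X} \sum_{k=0}^\infty (1-c)^k = (\eta A L / c)\,\dist{\x_0}{\widehat\X}$, which is \eqref{eqn:lcgdapp}; convergence of the series also justifies finiteness of the path length.

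For the \GF case \eqref{eqn:lcgfapp} the same idea runs through an integral rather than a sum. For each $t \geq 0$, set $\widehat\x_t := \Pi_{\widehat\X}(\x_t)$; stationarity plus $L$-Lipschitzness gives $\enorm{\vel\x_t} = \enorm{\nabla f(\x_t)} \leq L\,\dist{\x_t}{\widehat\X} \leq L A e^{-\gamma t}\dist{\x_0}{\widehat\X}$, where I write the continuous-time linear convergence rate as $(1-c)^t = e^{-\gamma t}$ with $\gamma := \log(1/(1-c)) > 0$. Then $\pathlength = \int_0^\infty \enorm{\vel\x_t}\,dt \leq L A \dist{\x_0}{\widehat\X} \int_0^\infty e^{-\gamma t}\,dt = (L A / \gamma)\,\dist{\x_0}{\widehat\X} = (A L / \log(1/(1-c)))\,\dist{\x_0}{\widehat\X}$, which is \eqref{eqn:lcgfapp}. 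The finiteness of this integral again shows the \GF path length is well-defined.

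There is essentially no hard technical obstacle here — the argument is a one-line gradient bound followed by summing/integrating a geometric decay. The one point that requires care, and which I would state explicitly, is the use of the \emph{stationary} hypothesis on $\widehat\X$: it is precisely what lets us replace $\enorm{\nabla f(\x_s)}$ by a multiple of $\dist{\x_s}{\widehat\X}$, since $\nabla f$ is being compared to its value at the (nearest) point of $\widehat\X$, where it is zero. Without stationarity one cannot pass from the linear convergence of distances to a linear decay of gradient norms. I would also note in passing that for $\widehat\X = \X^*$ this reduces exactly to Theorem~\ref{thm:LCGD}, since $\X^*$ is a stationary convex set under the paper's standing curvature assumptions and $\dist{\x_0}{\X^*} = \enorm{\diffx}$.
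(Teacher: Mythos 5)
Your argument is correct and matches the paper's proof of Theorem~\ref{thm:LCGDextended} essentially line for line: stationarity of $\widehat\X$ plus $L$-Lipschitz gradients gives $\enorm{\nabla f(\x_s)} \leq L\,\dist{\x_s}{\widehat\X}$, after which linear convergence yields a geometric decay that you sum (for \GD) or integrate (for \GF). No gaps.
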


\begin{proof} We first prove the \GF bound~\eqref{eqn:lcgfapp}. Using \LG and the fact that for every $\widehat{\x} \in \widehat{\X}$, $\nabla f(\widehat\x) = 0$, we bound the path length increment at every instance $t$: \[\enorm{\vel{\x}_t}dt = \enorm{\nabla f(\x_t)}dt =\enorm{\nabla f(\x_t) - \nabla f(\Pi_{\widehat{\X}}(\x_t))}dt  \leq L\ \dist{\x_t}{\widehat\X} dt.\] Thus,
\begin{align*}
    \int_0^\infty \enorm{\vel{x}_t} \ dt &\leq \int_0^\infty AL(1 - c)^t\ \dist{\x_0}{\widehat\X} \ dt
    \\ &= \int_0^\infty ALe^{t\log(1-c)}\ \dist{\x_0}{\widehat\X} \ dt
    \\ &= \roundbrack{\frac{AL}{\log\roundbrack{1/(1-c)}}} \ \dist{\x_0}{\widehat\X}.
\end{align*}
This concludes the proof of claim~\eqref{eqn:lcgfapp}.

For claim~\eqref{eqn:lcgdapp}, using \LG we have the following regularity condition on the distance travelled at every step:
\begin{align*}
    \enorm{\x - \x^+} 
    &= \enorm{\x - (\x -\eta  \nabla f(\x))}  
    \\ &= \enorm{\eta \nabla f(\x)} 
    \\ &= \eta\enorm{ \nabla f(\x) - \nabla f(\Pi_{\widehat\X}(\x))} 
    \\ &\textleq{\LG} \eta L\ \dist{\x}{\widehat\X}.
\end{align*}
The equality on the second last line follows by the stationarity assumption on $\widehat\X$. Thus we have the following bound on the overall path length:
\begin{align*}
    \pathlength_\eta &= \sum_{k=0}^\infty \enorm{\x_k - \x_{k+1}}
    \\ &\leq \sum_{k=0}^\infty \eta L\ \dist{\x}{\widehat\X}
    \\ &\leq \sum_{k=0}^\infty \eta AL\ (1 - c)^k\dist{\x_0}{\widehat\X}
    \\ &= \roundbrack{\frac{\eta AL}{c}} \ \dist{\x_0}{\widehat\X},
\end{align*}
completing the proof.

\end{proof}

Next, we show a similar generalization of Theorem~\ref{thm:LCGext} for linear convergence to a local minimum $\widehat\X$, but without requiring a stationarity assumption. 

\subsection{Generalized Version of Theorem~\ref{thm:LCGext}}
The following result applies for any iterative algorithm (not just \GD) and any function class (not necessarily convex) for which linear convergence to a set $\widehat{\X}$ holds (i.e., Definition~\ref{def:linConvOptimal} with $\X^*$ replaced with $\widehat{\X}$). We additionally assume that descent holds with respect to all points in $\widehat{\X}$. We do not require $\widehat\X$ to consist of stationary points or be a convex set, as long as the conditions specified above hold. 
\ifx false 
\rev{\begin{theorem}
    Given any $f$, if the iterates for an update rule exhibit linear convergence with constants $(A, c)$ to a singleton set $\widehat\X = \{\widehat\x\}$, then their path length is bounded as: 
    \label{thm:LCGD2extended}
\begin{equation}
        \label{eqn:LCGD2extended}
        \pathlength_\eta \leq \roundbrack{\frac{A + A^2(1 - c)}{c}} \enorm{\x_0 - \widehat\x}.    
    \end{equation}
\end{theorem}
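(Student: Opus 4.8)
The plan is the telescoping argument behind the proof sketch of Theorem~\ref{thm:LCGext}: bound each increment $\enorm{\x_k - \x_{k+1}}$ appearing in $\pathlength_\eta = \sum_{k=0}^\infty \enorm{\x_k - \x_{k+1}}$ by a term of a geometric series, and sum. First I would insert the limit point $\widehat\x$ via the triangle inequality, $\enorm{\x_k - \x_{k+1}} \le \enorm{\x_k - \widehat\x} + \enorm{\x_{k+1} - \widehat\x}$. Then I would use the linear convergence hypothesis — Definition~\ref{def:linConvOptimal} with $\X^*$ replaced by the singleton $\widehat\X = \{\widehat\x\}$ — in two ways. Applied after $k$ steps starting from $\x_0$ it gives $\enorm{\x_k - \widehat\x} \le A(1-c)^k \enorm{\x_0 - \widehat\x}$; applied to the one-step update from $\x_k$ (which is legitimate since the hypothesis holds for \emph{every} initial point) it gives $\enorm{\x_{k+1} - \widehat\x} \le A(1-c)\,\enorm{\x_k - \widehat\x}$. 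Chaining these,
\[
\enorm{\x_k - \x_{k+1}} \;\le\; \bigl(1 + A(1-c)\bigr)\,\enorm{\x_k - \widehat\x} \;\le\; \bigl(1 + A(1-c)\bigr)\,A\,(1-c)^k\,\enorm{\x_0 - \widehat\x}.
\]

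The second step is to sum over $k \ge 0$. Since $1-c \in (0,1)$, the series $\sum_{k\ge 0}(1-c)^k$ converges to $1/c$ — which, as a by-product, shows that $\pathlength_\eta$ is finite — and we obtain
\[
\pathlength_\eta \;\le\; \bigl(1 + A(1-c)\bigr)\,A\,\enorm{\x_0 - \widehat\x}\sum_{k=0}^\infty (1-c)^k \;=\; \frac{A + A^2(1-c)}{c}\,\enorm{\x_0 - \widehat\x},
\]
which is the asserted bound.

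I do not expect a genuine obstacle here: the whole argument is two lines of estimates plus a geometric sum. The one point deserving care is the appeal to the ``for every initial point'' clause of Definition~\ref{def:linConvOptimal}, which is what makes the one-step contraction from $\x_k$ valid and is precisely why, when $\widehat\X$ is a singleton, one needs no separate descent hypothesis such as condition~(b) of Theorem~\ref{thm:LCGext} (where $\widehat\X$ may be a non-singleton set, or $A$ may exceed $1$). Note finally that when $A = 1$ the constant $(A + A^2(1-c))/c = (2-c)/c$ is dominated by $2/c$, recovering the $\mathcal{O}(1/c)$ scaling reported in Table~\ref{table:summary}; this theorem is stated for discrete updates, the continuous-time analogue being handled by the integral estimate of Theorem~\ref{thm:LCGDextended} rather than by telescoping.
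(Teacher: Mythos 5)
Your proof is correct and takes essentially the same route as the paper's own argument: insert $\widehat\x$ via the triangle inequality, apply the linear-convergence hypothesis once as a one-step contraction ($\enorm{\x_{k+1} - \widehat\x} \le A(1-c)\enorm{\x_k - \widehat\x}$) and once as the $k$-step bound from $\x_0$, then sum the resulting geometric series. Your observation about why the ``for every initial point'' clause is the operative hypothesis, and why this obviates a separate descent condition when $\widehat\X$ is a singleton, matches the paper's framing.
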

}

\begin{proof}
Using triangle inequality for any two consecutive iterates $\x \to \x^+$, 
\begin{align*}
\enorm{\x - \x^+} &\leq \enorm{\x - \widehat\x} + \enorm{\x^+ - \widehat\x} 
\\ &\leq \enorm{\x - \widehat\x} + A(1 - c)\enorm{\x - \widehat\x} 
\\ &= (1 + A(1 - c)) \enorm{\x - \widehat\x}.
\end{align*}
Thus for any $T \in \naturals$ iterations,
\begin{align*}
    \pathlength_\eta &= \sum_{k = 0}^{T-1}\enorm{\x_k - \x_{k+1}}
    \\ &\leq (1 + A(1 - c)) \sum_{k = 0}^{T-1}\enorm{\x_k - \widehat\x}
    \\ &\leq (1 + A(1 - c)) \sum_{k = 0}^{T-1}A(1 - c)^k\enorm{\x_0 - \widehat\x}
    \\ &\leq \roundbrack{\frac{A + A^2(1 - c)}{c}} \enorm{\x_0 - \widehat\x}.
\end{align*}
Since the above is true for any $T \in \naturals$, indeed
\[
\pathlength_\eta  \leq  \roundbrack{\frac{A + A^2(1 - c)}{c}} \enorm{\x_0 - \widehat\x},
\]
as was to be shown. 
\end{proof}
\fi 
\rev{\begin{theorem}
    If the iterates for a discrete update rule satisfy the following conditions: 
    \begin{enumerate}[label=(\alph*)]
    \item linear convergence with constants $(A, c)$ to a set $\widehat\X$, and 
    \item descent towards all points in $\widehat{\X}$: for all $\widehat\x \in \widehat{\X}$, $\x \in \Real^d$, $\enorm{\x^+ - \widehat\x} \leq \enorm{\x - \widehat\x}$, 
\end{enumerate}     
then their path length is bounded as: 
    \label{thm:LCGD2extended}
\begin{equation}
        \label{eqn:LCGD2extended}
        \pathlength_\eta \leq \roundbrack{\frac{2A}{c}} \dist{\x_0}{\widehat\X}.    
    \end{equation}
\end{theorem}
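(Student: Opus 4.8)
The plan is to mimic the proof of Theorem~\ref{thm:LCGD2extended}'s simpler cousin (Theorem~\ref{thm:LCGext}), using the triangle inequality to bound the length of each step by a telescoping-geometric sum. Fix an arbitrary $\widehat\x \in \widehat\X$; I will optimize over the choice of $\widehat\x$ at the end. For any two consecutive iterates $\x_k \to \x_{k+1}$, write
\begin{equation*}
\enorm{\x_k - \x_{k+1}} \leq \enorm{\x_k - \widehat\x} + \enorm{\x_{k+1} - \widehat\x}.
\end{equation*}
By condition (b) (descent towards all points of $\widehat\X$), $\enorm{\x_{k+1} - \widehat\x} \leq \enorm{\x_k - \widehat\x}$, so each per-step increment is at most $2\enorm{\x_k - \widehat\x}$. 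By condition (a) (linear convergence to $\widehat\X$), $\dist{\x_k}{\widehat\X} \leq A(1-c)^k\ \dist{\x_0}{\widehat\X}$.

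The one subtlety — and the main thing to get right — is that condition (a) controls $\dist{\x_k}{\widehat\X}$, the distance to the \emph{set}, whereas condition (b) controls $\enorm{\x_k - \widehat\x}$ for a \emph{fixed} point. To reconcile them, I would not fix $\widehat\x$ up front but instead proceed as follows: bound $\enorm{\x_k - \x_{k+1}} \leq 2\,\dist{\x_k}{\widehat\X}$ directly. This step is justified because for any $\varepsilon > 0$ there is a point $\widehat\x_\varepsilon \in \widehat\X$ with $\enorm{\x_k - \widehat\x_\varepsilon} \leq \dist{\x_k}{\widehat\X} + \varepsilon$, and applying condition (b) to this $\widehat\x_\varepsilon$ gives $\enorm{\x_k - \x_{k+1}} \leq 2\enorm{\x_k - \widehat\x_\varepsilon} \leq 2\,\dist{\x_k}{\widehat\X} + 2\varepsilon$; letting $\varepsilon \to 0$ yields $\enorm{\x_k - \x_{k+1}} \leq 2\,\dist{\x_k}{\widehat\X}$. (Alternatively, if $\widehat\X$ is closed the infimum is attained and this is immediate.)

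Combining the two bounds gives $\enorm{\x_k - \x_{k+1}} \leq 2A(1-c)^k\ \dist{\x_0}{\widehat\X}$. Summing the resulting geometric series over all $k \geq 0$,
\begin{align*}
\pathlength_\eta = \sum_{k=0}^{\infty} \enorm{\x_k - \x_{k+1}}
&\leq 2A\ \dist{\x_0}{\widehat\X}\sum_{k=0}^{\infty}(1-c)^k
= \roundbrack{\frac{2A}{c}}\ \dist{\x_0}{\widehat\X},
\end{align*}
which is exactly the claimed bound. To be careful about the infinite sum, I would first carry out the argument for an arbitrary finite horizon $T$, obtaining $\sum_{k=0}^{T-1}\enorm{\x_k - \x_{k+1}} \leq (2A/c)\,\dist{\x_0}{\widehat\X}$ uniformly in $T$, and then take $T \to \infty$; this also establishes that the path length is finite. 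The only real obstacle is the set-versus-point mismatch described above, and it is resolved cleanly by the approximation argument, so overall this is a short proof once that point is handled correctly.
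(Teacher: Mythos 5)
Your proof is correct and takes essentially the same route as the paper's: triangle inequality to bound each step by twice the distance to $\widehat\X$, the descent condition (b) to justify that bound, and linear convergence (a) to sum the resulting geometric series. The only (minor) difference is that the paper directly writes $\Pi_{\widehat\X}(\x_k)$ assuming the projection exists, whereas you handle the possibility that the infimum is not attained via the $\varepsilon$-approximation argument — a small extra bit of rigor, not a change of method.
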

}

\begin{proof}
Using triangle inequality for any two consecutive iterates $\x \to \x^+$, 
\begin{align*}
\enorm{\x - \x^+} &\leq \enorm{\x - \Pi_{\widehat\X}(\x)} + \enorm{\x^+ - \Pi_{\widehat\X}(\x)} 
\\ &\textleq{(i)} \enorm{\x - \Pi_{\widehat\X}(\x)} + \enorm{\x - \Pi_{\widehat\X}(\x)} 
\\ &= 2\enorm{\x - \Pi_{\widehat\X}(\x)}.
\end{align*}
Above, inequality (i) holds by condition (b) since $\Pi_{\widehat\X}(\x) \in \widehat{\X}$. 
We use this inequality for each iterate $\x_k$ in order to bound each term in the path length summation: 
\begin{align*}
    \pathlength_\eta &= \sum_{k = 0}^\infty \enorm{\x_k - \x_{k+1}}
    \\ &= \sum_{k = 0}^\infty \enorm{\x_k - (\x_{k})^+}    
    \\ &\leq 2 \sum_{k = 0}^\infty \enorm{\x_k - \Pi_{\widehat{\X}}(\x_k)}
    \\ &\leq 2 \sum_{k = 0}^\infty A(1 - c)^k\enorm{\x_0 - \Pi_{\widehat\X}(\x_0)} &\text{(by condition (a))}
    \\ &= \roundbrack{\frac{2A}{c}} \dist{\x_0}{\widehat\X},
\end{align*}
as was to be shown. 
\end{proof}

\rev{If the convergence set $\widehat{\X}$ or $\X^*$ is singleton and $A = 1$, condition (b) of Theorem~\ref{thm:LCGD2extended} is implied by condition (a). 
However, condition (a) is not sufficient if 
the convergence set is not singleton, without additional assumptions on $f$ or the specific algorithm used. Figure~\ref{fig:counterExample} illustrates that there may exist algorithms that satisfy condition (a), but not condition (b), and do not exhibit short path lengths.}
\begin{figure}[t!]
    \centering
    \includegraphics[width=0.6\linewidth,trim=4.5cm 3cm 7.5cm 5.5cm]{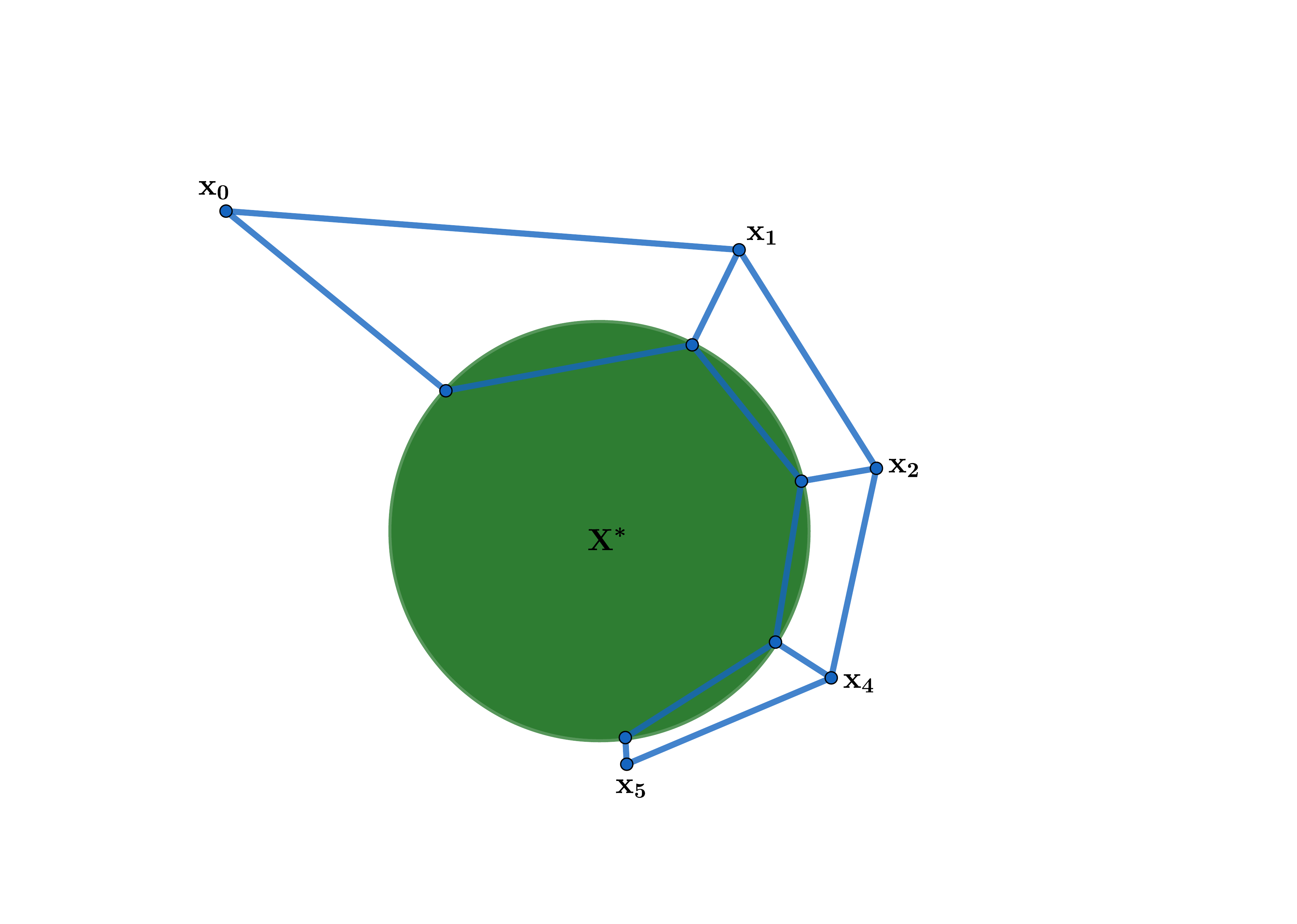}
    \caption{\rev{The iterates $\x_k$ converge linearly to the optimal set $\X^*$ but the path length is high. This situation is forbidden by condition (b) in Theorem~\ref{thm:LCGD2extended}.}}
    \label{fig:counterExample}
\end{figure}

As a corollary to Theorem~\ref{thm:LCGD2extended}, we prove a path length bounds for Polyak's heavy ball method~\citep{polyak1964some}.

\subsection{Path Length Bound for Polyak's Heavy Ball Method}
Given suitable $\alpha, \beta$ the Polyak's Heavy Ball (HB) method takes the following update. Below, let $\x$ be the current iterate (initialized at some $\x_0$), $\x^-$ be the previous iterate (initialized as $\x_0$), and $\x^+$ be the update or the next iterate. 
\begin{equation}
    \text{Polyak's heavy ball (\HB):} \qquad
    \x^+ = \x - \alpha \nabla f(\x) + \beta(\x - \x^-).
\end{equation}
\rev{\revminor{\citet[Theorem 9]{polyak1964some}} showed that the HB method has linear convergence for a twice continuously differentiable and $\mu$-strongly convex function $f$ with $L$-Lipschitz gradients}, for the following choice of $\alpha, \beta$: 
    \begin{equation}
    \alpha = \frac{4}{\roundbrack{\sqrt{L} + \sqrt{\mu}}^2},\qquad \beta = \roundbrack{\frac{\sqrt{L} - \sqrt{\mu}}{\sqrt{L} + \sqrt{\mu}}}^2. \label{eqn:hbstepsize}    
    \end{equation}
    The linear convergence parameters are given by $A = 1$ and 
    \begin{align*}
        c &= 1 - \frac{\sqrt{\kappa} - 1}{\sqrt{\kappa} + 1}
        = \frac{2}{\sqrt{\kappa} + 1}.
    \end{align*}
    Note that a strongly convex function has a unique minimum and thus $\X^*$ is singleton. In this case, condition (a) of Theorem~\ref{thm:LCGD2extended} with $A = 1$ implies condition (b). Thus, we have the following corollary to Theorem~\ref{thm:LCGD2extended}.  
\begin{corollary}
\label{thm:HBSCLG}
    \rev{For any twice continuously differentiable and $\mu$-strongly convex function $f$ with $L$-Lipschitz gradients, the heavy ball method with $\alpha, \beta$ chosen according to~\eqref{eqn:hbstepsize}, has a path length bounded as:
    \[\pathlength_\eta \leq \sqrt{\kappa}\enorm{\x_0 - \x^*}.
    \]}
\end{corollary}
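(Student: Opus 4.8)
The plan is to obtain Corollary~\ref{thm:HBSCLG} as a direct application of Theorem~\ref{thm:LCGD2extended} to the heavy ball update, so the only real work is checking its two hypotheses. Since $f$ is $\mu$-strongly convex its minimizer is unique, so $\X^* = \{\x^*\}$ is a singleton and $\nabla f(\x^*) = 0$. For the stated choice of $\alpha,\beta$ in~\eqref{eqn:hbstepsize}, I would cite \citet[Theorem~9]{polyak1964some}, which gives $(1,c)$-linear convergence of the HB iterates to $\x^*$ with $A = 1$ and $c = 1 - \frac{\sqrt\kappa-1}{\sqrt\kappa+1} = \frac{2}{\sqrt\kappa+1}$; this is exactly hypothesis (a) of Theorem~\ref{thm:LCGD2extended} with $\widehat\X = \X^*$. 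For hypothesis (b), I would invoke the observation made just after Theorem~\ref{thm:LCGD2extended}: because $\widehat\X$ is a singleton and $A = 1$, applying linear convergence with initial point $\x$ for a single step gives $\enorm{\x^+ - \x^*} \le A(1-c)\enorm{\x - \x^*} \le \enorm{\x - \x^*}$, so descent toward the unique minimizer is automatic.

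Theorem~\ref{thm:LCGD2extended} then yields $\pathlength_\eta \le \frac{2A}{c}\,\dist{\x_0}{\X^*} = (\sqrt\kappa + 1)\enorm{\x_0 - \x^*}$. To recover the sharper constant $\sqrt\kappa$ stated in the corollary, I would redo the short estimate inside that theorem slightly more carefully: bound $\enorm{\x_k - \x_{k+1}} \le \enorm{\x_k - \x^*} + \enorm{\x_{k+1} - \x^*} \le \big((1-c)^k + (1-c)^{k+1}\big)\enorm{\x_0 - \x^*}$, applying linear convergence to each term separately, and sum the geometric series to obtain the factor $\frac{2-c}{c} = \frac{2}{c} - 1 = \sqrt\kappa$. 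The $k=0$ term deserves a glance, since there the HB update collapses to a plain gradient step (because $\x^- = \x_0$ at initialization), but this only tightens the bound.

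I do not expect a genuine obstacle here: the argument is bookkeeping layered on top of Polyak's classical convergence theorem and Theorem~\ref{thm:LCGD2extended}. The one point requiring care is citing the non-asymptotic (``$A=1$'') form of Polyak's guarantee rather than the purely asymptotic spectral-radius rate $\sqrt\beta = \frac{\sqrt\kappa-1}{\sqrt\kappa+1}$, and then tracking the constants closely enough that the geometric sum lands on $\sqrt\kappa$ rather than $\sqrt\kappa+1$.
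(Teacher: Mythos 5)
Your proof is essentially the same as the paper's: cite Polyak's linear-convergence theorem for HB to get hypothesis (a) with $A=1$, $c = 2/(\sqrt\kappa+1)$, note that $A=1$ and the singleton $\X^*$ make hypothesis (b) automatic, and feed this into Theorem~\ref{thm:LCGD2extended}. You have also correctly spotted a small mismatch the paper glosses over: as stated, Theorem~\ref{thm:LCGD2extended} gives the factor $\frac{2A}{c} = \sqrt\kappa + 1$, whereas the corollary claims the slightly sharper $\sqrt\kappa$, and the paper's one-line proof silently substitutes the tighter quantity $\frac{A + A^2(1-c)}{c} = \frac{2-c}{c}$ into the corollary. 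Your explicit re-derivation --- bounding $\enorm{\x_k - \x_{k+1}} \le \big((1-c)^k + (1-c)^{k+1}\big)\enorm{\x_0 - \x^*}$ by applying linear convergence to both endpoints rather than using condition (b) once, then summing the geometric series to land on $\frac{2-c}{c} = \sqrt\kappa$ --- is exactly the justification that the paper leaves implicit, and it is correct.
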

\begin{proof}
Given the linear convergence parameters of HB: $A = 1$ and $c = 2/(\sqrt{\kappa} + 1)$, we compute
\[
    \frac{A + A^2(1 - c)}{c} = \frac{2-c}{c} = \sqrt{\kappa}. 
\]
Applying Theorem~\ref{thm:LCGD2extended} we obtain the claimed result. 
\end{proof}
\citet{ghadimi2015global} showed that the HB method also admits a linear convergence bound assuming just continuous differentiability (not twice continuous differentiability), but for conservative values of $\alpha, \beta$ and at a slower rate: $c = \mathcal{O}(1/\kappa)$. This would lead to a path length bound of $\mathcal{O}(\kappa \enorm{\x_0 - \x^*})$. 

\rev{\subsection{Path Length Bound for Projected Gradient Descent}}
\label{subsec:PGD}
Consider a convex constraint set $\convexSet \subseteq \Real^d$ and let $\Pi_\convexSet(\x)$ denote the unique projection of a point $\x \in \Real^d$ on $\convexSet$. Projected gradient descent (\PGD) is an iterative optimization technique that ensures that the iterates stay within the constraint set. \PGD corresponds to taking a \GD step and projecting it onto $\convexSet$ as follows: 
\begin{equation}
    \label{eqn:pgd} 
     \text{Projected gradient descent (\PGD):}
    \qquad
    \x_{k+1} = \Pi_\convexSet(\x_k - \eta \nabla f(\x_k)).
\end{equation}
Clearly, \GD is a special case of \PGD with $\convexSet = \Real^d$. In this section, we provide a path length bound for \PGD if $f$ has $L$-Lipschitz gradients, and the iterates satisfy linear convergence. This result is closely related to the \GD bound of Theorem~\ref{thm:LCGD}. Technically, the significant difference is that in this case the stationarity assumption $\nabla f(\x^*) = 0$ may not be satisfied by the constrained optimal set $\X^*$. More work is needed to prove the result without the stationarity condition, and it leads to a larger constant (factor $(1 + \sqrt{2})$ below while Theorem~\ref{thm:LCGD} essentially obtains the factor $1$ after translation to the same setting). If stationarity does indeed hold (which is true if $\X^*$ is in the interior of $\convexSet$), then it can be shown that Theorem~\ref{thm:LCGD} applies directly and we get constant $1$.
\ifx false 
Let $\widetilde{x} := x - \eta \nabla f(x)$. Consider the function $g(z) = \esqnorm{\widetilde{x} - z} + I_\Omega(z)$ minimized at $z = x^+$ so that by the subgradient condition, 
\begin{align*}
g(x^*) &\geq g(x^+) + \inner{\partial_z(\esqnorm{\widetilde{x} - z} + I_\Omega(z)) \mid_{x^+}, x^* - x^+}\\
\implies g(x^*) &\geq g(x^+) + 2\inner{x^+ - \widetilde{x}, x^* - x^+}.
\end{align*}
Also, 
\begin{align*}
g(x^+) &\geq g(x^*) + \inner{\partial_z(\esqnorm{\widetilde{x} - z} + I_\Omega(z)) \mid_{x^*}, x^+ - x^*}\\
\implies g(x^+) &\geq g(x^*) + 2\inner{x^* - \widetilde{x}, x^+ - x^*}.
\end{align*}
Adding these two, 
\begin{align*}
0 &\geq 2\inner{x^+ - \widetilde{x}, x^* - x^+} + 2\inner{x^* - \widetilde{x} , x^+ - x^*}\\
\implies 0 &\geq \inner{x^+ - \widetilde{x} - (x^* - \widetilde{x}), x^* - x^+}.
\end{align*}

Similarly, define $x' = x^* - \nabla f(x^*)$ and consider $h(z) = \esqnorm{x'- z} + I_\Omega(z)$ minimized at $z = x^*$ (by the optimality of $x^*$). Then,
\begin{align*}
h(x^+) &\geq h(x^*) + \inner{\partial_z(\esqnorm{x' - z} + I_\Omega(z)) \mid_{x^*},x^+ - x^* }\\
\implies  h(x^+) &\geq h(x^*) + \inner{\eta \nabla f(x^*) , x^+ - x^*}.
\end{align*}
\fi 
\\\rev{\begin{theorem}
    \label{thm:LCPGD}
    Suppose $f$ has $L$-Lipschitz gradients. If the \PGD iterates with step-size $\eta$ exhibit linear convergence towards the optimal set $\X^*$ with constants $(A, c)$, then their path length is bounded as: 
    \begin{equation*}
                \pathlength_\eta \leq \roundbrack{\frac{\eta L + 1}{2} + \sqrt{\eta L + \roundbrack{\frac{\eta L + 1}{2}}^2}}(A/c) \ \dist{\x_0}{\X^*}.
    \end{equation*}
    If $\eta \leq 1/L$, the above simplifies to $\pathlength_\eta \leq (1 + \sqrt{2})(A/c) \ \dist{\x_0}{\X^*}$.
\end{theorem}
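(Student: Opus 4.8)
The plan is to bound the length $s_k := \enorm{\x_k - \x_{k+1}}$ of each \PGD step by a fixed multiple of $d_k := \dist{\x_k}{\X^*}$, and then sum the geometric series supplied by the linear convergence hypothesis $\dist{\x_k}{\X^*}\le A(1-c)^k\dist{\x_0}{\X^*}$, exactly as in the proof of Theorem~\ref{thm:LCGD}. The one new difficulty, and the reason the constant is larger than the essentially-$1$ factor obtained there, is that the constrained optimum need not be stationary: we cannot use $\nabla f(\x^*)=\zero$, only the variational inequality $\inner{\nabla f(\x^*),\, \w - \x^*}\ge 0$ for every $\w\in\convexSet$, which holds because $\x^*\in\X^*$ minimizes $f$ over the convex set $\convexSet$.

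I would fix an iteration $k$, let $\x^*$ be a point of $\X^*$ nearest to $\x_k$ (so $\enorm{\x_k-\x^*}=d_k$), and abbreviate $e := \enorm{\x_{k+1}-\x^*}$. Two inequalities drive the estimate. First, the variational characterization of the projection $\x_{k+1} = \Pi_\convexSet(\x_k - \eta\nabla f(\x_k))$, tested against the feasible point $\w=\x^*\in\X^*\subseteq\convexSet$, gives
\[
\inner{\x_k-\x_{k+1},\, \x^*-\x_{k+1}} \;\le\; \eta\,\inner{\nabla f(\x_k),\, \x^*-\x_{k+1}}.
\]
Second, first-order optimality of $\x^*$ tested against $\w=\x_{k+1}\in\convexSet$ gives $\inner{\nabla f(\x^*),\, \x^*-\x_{k+1}}\le 0$. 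Writing $\nabla f(\x_k) = \bigl(\nabla f(\x_k)-\nabla f(\x^*)\bigr) + \nabla f(\x^*)$ in the first inequality, discarding the now-nonpositive $\nabla f(\x^*)$ term, and applying Cauchy--Schwarz together with $\enorm{\nabla f(\x_k)-\nabla f(\x^*)}\le L d_k$ (the \LG condition) yields $\inner{\x_k-\x_{k+1},\, \x^*-\x_{k+1}} \le \eta L\, d_k\, e$.

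Next I would expand the left-hand side using the polarization identity (the difference of the two vectors involved is $\x_k-\x^*$):
\[
\inner{\x_k-\x_{k+1},\, \x^*-\x_{k+1}} \;=\; \tfrac{1}{2}\left(s_k^2 + e^2 - d_k^2\right),
\]
so that $s_k^2 \le d_k^2 - e^2 + 2\eta L\, d_k\, e$. The right-hand side is a downward parabola in $e$ with maximum $d_k^2(1+\eta^2 L^2)$ at $e=\eta L d_k$, hence $s_k \le \sqrt{1+\eta^2 L^2}\; d_k$, and in particular $s_k\le\sqrt2\, d_k$ when $\eta\le 1/L$. Summing over $k$ and invoking linear convergence,
\[
\pathlength_\eta = \sum_{k\ge 0} s_k \;\le\; \sqrt{1+\eta^2 L^2}\sum_{k\ge 0} d_k \;\le\; \sqrt{1+\eta^2 L^2}\sum_{k\ge 0} A(1-c)^k\,\dist{\x_0}{\X^*} \;=\; \frac{A\sqrt{1+\eta^2 L^2}}{c}\,\dist{\x_0}{\X^*}.
\]
Since $\sqrt{1+\eta^2 L^2}\le \eta L + 1 < \frac{\eta L+1}{2} + \sqrt{\eta L + (\frac{\eta L+1}{2})^2}$, and the left side equals $\sqrt2\le 1+\sqrt2$ when $\eta\le 1/L$, this establishes the theorem (in fact with a slightly sharper constant).

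The step I expect to be the main obstacle is precisely the one replacing stationarity: combining the projection inequality with first-order optimality so as to annihilate the $\nabla f(\x^*)$ contribution while retaining only the $d_k$- and $e$-dependent remainder. A cruder route — bounding $e\le (1+\eta L)d_k$ by nonexpansiveness of $\Pi_\convexSet$ and then $s_k\le d_k+e$ by the triangle inequality — produces only the factor $2+\eta L$, which is \emph{worse} than the claimed constant; it is the polarization together with the parabola maximization above that brings the factor down to (and below) the stated value.
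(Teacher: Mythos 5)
Your proof is correct and in fact yields a strictly sharper constant than the paper's. Both you and the paper start from exactly the same two variational inequalities --- the characterization of the projection $\x_{k+1} = \Pi_\convexSet(\x_k - \eta\nabla f(\x_k))$ tested at the feasible point $\x^*$, and first-order optimality of $\x^*$ tested at the feasible point $\x_{k+1}$ --- and both discard the nonpositive $\nabla f(\x^*)$ contribution and use Cauchy--Schwarz with \LG on $\nabla f(\x_k)-\nabla f(\x^*)$. The divergence is in how $\inner{\x_k - \x_{k+1},\,\x^* - \x_{k+1}}$ and $\enorm{\x^* - \x_{k+1}}$ are handled. The paper decomposes $\x^* - \x_{k+1} = (\x^*-\x_k) + (\x_k - \x_{k+1})$, applies the triangle inequality and Cauchy--Schwarz to reach the scalar inequality $\eta L b(a+b) + ab \geq a^2$ in $a = \enorm{\x_k-\x_{k+1}}$, $b = \dist{\x_k}{\X^*}$, and completes the square, landing on the constant $\frac{\eta L + 1}{2} + \sqrt{\eta L + \left(\frac{\eta L + 1}{2}\right)^2}$. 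You instead keep $e = \enorm{\x^* - \x_{k+1}}$ as a free variable, replace the inner product by the polarization identity $\frac{1}{2}(s_k^2+e^2-d_k^2)$ --- an equality, not a Cauchy--Schwarz upper bound --- and maximize the resulting concave parabola in $e$, obtaining $s_k \leq \sqrt{1 + \eta^2 L^2}\, d_k$. Since $\sqrt{1+\eta^2 L^2} \leq 1+\eta L \leq \frac{\eta L + 1}{2} + \sqrt{\eta L + \left(\frac{\eta L + 1}{2}\right)^2}$ (strict for $\eta L>0$), your per-step bound implies the paper's; in particular for $\eta \leq 1/L$ you get $\sqrt{2}$ where the theorem states $1 + \sqrt{2}$. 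The geometric summation via linear convergence is then identical. The improvement comes precisely from keeping $e$ free and using polarization rather than bounding $e$ by $a+b$ and applying Cauchy--Schwarz to the remaining cross term.
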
}
\begin{proof}
For any $\x \in \convexSet$, denote $\widetilde{\x} = \x - \eta \nabla f(\x)$, and $\x^+ = \Pi_\convexSet(\widetilde{\x})$ which corresponds to the PGD update for $\x$. Also let $\x^* = \Pi_{\X^*}(\x)$ be the closest optimum point to $\x$. The next two inequalities use the following standard fact about projections to a convex set (for instance, see \citet[Lemma 3.1]{bubeck2015convex}): 
\[\text{for all }\x \in \Real^d \text{ and } \z \in \convexSet, \text{ the projection }\y = \Pi_\convexSet(\x) \text{ satisfies  } \inner{\y - \x, \z - \y} \geq 0.
\]
Since $\x^+$ is the projection of $\widetilde{\x}$, and $\x^* \in \convexSet$:
\begin{align}
\inner{\x^+ - \widetilde{\x}, \x^* - \x^+} \geq 0. \label{eqn:proj-eq-1}
\end{align}
Further, by the optimality of $\x^*$, $\Pi_\convexSet(\x^* - \eta \nabla f(\x^*)) = \x^*$, thus since $\x^+ \in \convexSet$, 
\begin{align}
\inner{\x^* - (\x^* -\eta \nabla f(\x^*)), \x^+ - \x^*} \geq 0. \label{eqn:proj-eq-2}
\end{align}
Adding \eqref{eqn:proj-eq-1} and \eqref{eqn:proj-eq-2},
\begin{align*}
\inner{\x^+ - \widetilde{\x} - \eta \nabla f(\x^*), \x^* - \x^+}  & \geq 0.
\end{align*}
Substituting $\widetilde{\x} = \x - \eta\nabla f(\x)$ and rearranging we get, 
\begin{align*}
 \eta \inner{\nabla f(\x) - \nabla f(\x^*), \x^* - \x^+} + \inner{\x^+ - \x, \x^* - \x}\geq \esqnorm{\x^+ - \x}.
\end{align*}
Using Cauchy-Schwarz and triangle inequality, this implies 
\begin{align*}
 \eta \enorm{\nabla f(\x) - \nabla f(\x^*)}(\enorm{\x^* - \x} + \enorm{\x - \x^+}) + \enorm{\x^+ - \x}\enorm{\x^* - \x}\geq \esqnorm{\x^+ - \x}.
\end{align*}
Denote $ a= \enorm{\x - \x^+}$ and $b = \enorm{\x - \x^*}$. Using \LG, $\enorm{\nabla f(\x) - \nabla f(\x^*)} \leq L\enorm{\x - \x^*} = Lb$; thus the above can be re-written as
\[
\eta L b (a + b) + ab \geq a^2.
\]
Completing squares, 
\[
\roundbrack{a - \roundbrack{\frac{\eta L + 1}{2}} b}^2 \leq \eta L b^2 +  \roundbrack{\frac{\eta L + 1}{2}}^2 b^2, 
\]
which gives 
\begin{equation}
a \leq \roundbrack{\frac{\eta L + 1}{2} + \sqrt{\eta L + \roundbrack{\frac{\eta L + 1}{2}}^2}} b. \label{eq:plgd-lemma}
\end{equation}
Denote the factor multiplied to $b$ as $u$ so that $a \leq ub$. 
\ifx false since $\eta \leq 1/L$. Thus, 
\[
a \leq \roundbrack{\frac{\eta L + 1}{2}} b + \sqrt{2}\eta L b \leq (1 + \sqrt{2}) \eta L b,
\]
again using $\eta \leq 1/L$.
 \begin{equation} a \leq (1+\sqrt{2}) b.\label{eq:plgd-lemma} \end{equation} 
\fi 
 We now compute the path length bound by instantiating the above result for $\x$ given by the iterates of PGD so that for $\x = \x_k$, $a = \enorm{\x_k - \x_{k+1}}$ and $b = \dist{\x_k}{\X^*}$:
\begin{align*}
    \pathlength_\eta &= \sum_{k=0}^\infty \enorm{\x_k - \x_{k+1}}
 	\\ &\leq \sum_{k=0}^\infty u \ \dist{\x_k}{\X^*} &\text{(using \eqref{eq:plgd-lemma})}
    \\ &\leq \sum_{k=0}^\infty u A\ (1 - c)^k\dist{\x_0}{\X^*} &\text{(using linear convergence)}
        \\ &= (uA/c)\ \dist{\x_0}{\X^*}.
\end{align*}

This completes the proof. 
\end{proof}

\rev{Notice that the only property of $\X^*$ used in the proof is stationarity with respect to the \PGD updates, required in equation~\eqref{eqn:proj-eq-2}. Thus, like Theorem~\ref{thm:LCGDextended}, Theorem~\ref{thm:LCPGD} can be extended to include convergence to any locally optimal set $\widehat{\X}$ that is stationary with respect to the \PGD updates.}

\section{Proof of Theorem~\ref{thm:PKLLGGF}}
\label{appsec:proofsPKL}
We first prove the statement for \GF. Consider some $t$ such that $\x_t \notin \X^*$, so that $f(\x_t) \neq f(\x^*)$. Note that once $\x_t \in \X^*$, $\nabla f(\x_\tau) = 0$ for all $\tau \geq t$, and hence the path length is $0$ henceforth. Consider the following Lyapunov function: \[
    \lyapunov_t = \sqrt{f(\x_t) - f^*}.
\]
Suppose $\x_t \notin \X^*$. Taking the derivative of $\lyapunov_t$ with respect to time, and using chain rule,
\begin{align*}
    \vel{\lyapunov}_t &= \frac{\frac{d f(\x_t)}{dt}}{2\sqrt{f(\x_t) - f^*}}
    \\ &= \frac{\inner{\frac{d f(\x_t)}{d\x_t}, \frac{d\x_t}{dt}}}{2\sqrt{f(\x_t) - f^*}}
    \\ &= -\frac{\esqnorm{\nabla f(\x_t)}}{2\sqrt{f(\x_t) - f^*}}
    \\ &\textleq{\PKL} -\sqrt{\frac{\mu}{2}}\enorm{\nabla f(\x_t)}.
\end{align*}
Although the above proof assumes $\x_t \notin \X^*$, the conclusion is true even for $\x_t \in \X^*$, since both sides are equal to $0$. Using the fundamental theorem of calculus, 
\begin{align*}
    \int_0^\infty \enorm{\nabla f(\x_t)}\ dt \leq -\sqrt{\frac{2}{\mu}}\squarebrack{\lyapunov_t}_0^\infty   = -\sqrt{\frac{2}{\mu}} \squarebrack{\sqrt{f(\x_t) - f^*}}_0^\infty = \sqrt{\frac{2(f(\x_0) - f^*)}{\mu}}.
\end{align*}
We can then use \LG to bound the right side in terms of the distance of the point $\x_0$ from the optimal set:
\[
f(\x_0) - f^* \leq \frac{L}{2}\esqnorm{\x_0 - \Pi_{\X^*}(\x_0)} =  \frac{L}{2}\dist{\x_0}{\X^*}^2.
\]
Substituting this back in, we obtain our result: 
\[
\pathlength = \int_0^\infty \enorm{\nabla f(\x_t)}\ dt \leq \sqrt{\frac{L}{\mu}}\ \dist{\x_0}{\X^*}, 
\]
as was to be shown. 

The \GD proof technique is directly inspired by the proof of~\citet{oymak2019overparameterized}: Equation~(5.3) of Theorem~5.2 in the paper. \rev{(However, their path length bound in terms of $\ \dist{\x_0}{\X^*}$ is weaker than ours. Equation~(5.5) in their paper corresponds to the bound $\frac{2L}{\mu} \dist{\x_0}{\X^*}$ instead of $2\sqrt{\frac{L}{\mu}} \dist{\x_0}{\X^*}$ as we show.)} Like the \GF case, consider some $k$ such that $\x_k \notin \X^*$. Then by \LG,
\begin{align*}
    \sqrt{f(\x_{k+1}) - f^*} &= \sqrt{f(\x_k - \eta \nabla f(\x_k)) - f^*}
    \\ &\leq \sqrt{f(\x_k) - \inner{\nabla f(\x_k), \eta \nabla f(\x_k)} + \frac{L}{2}\esqnorm{\eta \nabla f(\x_k)}- f^*}
    \\ &= \sqrt{f(\x_k)- f^* - (\eta - \eta^2L/2)\esqnorm{\nabla f(\x_k)}}
    \\ &\leq \sqrt{f(\x_k)- f^*} - \frac{(\eta - \eta^2L/2)\esqnorm{\nabla f(\x_k)}}{2\sqrt{f(\x_k)- f^*}}.
\end{align*}
\rev{The final inequality holds since $\sqrt{a - b } \leq \sqrt{a} - \frac{b}{2\sqrt{a}}$. To simplify the second term, note that $\eta \leq 1/L$, implies $\eta - \eta^2L/2 \geq \eta/2$. Using this and the \PKL inequality, we get
\[
\frac{(\eta - \eta^2L/2)\esqnorm{\nabla f(\x_k)}}{2\sqrt{f(\x_k)- f^*}} \geq (\eta/2) \cdot \frac{\enorm{\nabla f(\x_k)}}{2\sqrt{f(\x_k)- f^*}} \cdot \enorm{\nabla f(\x_k)} \geq \eta \sqrt{\mu/8}\ \enorm{\nabla f(\x_k)}. 
\]
Rearranging, we get
\begin{align*}
    \sqrt{f(\x_{k+1}) - f^*} &\leq \sqrt{f(\x_k)- f^*} - \eta \sqrt{\mu/8}\ \enorm{\nabla f(\x_k)}.
\end{align*}}
The above is trivially also true if $\x_k \in \X^*$, since both sides are $0$.
Note that $\x_{k+1} - \x_k = -\eta \nabla f(\x_k)$; thus for all $k \geq 0$,
\[
\enorm{\x_k - \x_{k+1}} \leq \sqrt{8/\mu} \ \roundbrack{\sqrt{f(\x_k)- f^*} - \sqrt{f(\x_{k+1})- f^*}}.
\]
Telescoping this from $k = 0, \ldots \infty$, 
\[
\pathlength_\eta = \sum_{k=0}^\infty \enorm{\x_k - \x_{k+1}} \leq \sqrt{8/\mu}\roundbrack{\sqrt{f(\x_0) - f^*} - \sqrt{f(\x_\infty) - f^*}} \leq \sqrt{8/\mu}\roundbrack{\sqrt{f(\x_0) - f^*}}.
\]
As an immediate consequence of \LG, we have
$
\sqrt{f(\x_0) - f^*} \leq \sqrt{\frac{L}{2}}\ \dist{\x_0}{\X^*},
$
and plugging this into the above bound yields
\[
\pathlength_\eta \leq 2\sqrt{\frac{L}{\mu}}\ \dist{\x_0}{\X^*},
\]
as claimed.
\hfill\BlackBox

\section{Proofs of Results in Section~\ref{sec:quadratics}}
\label{appsec:proofsQuadratics}
The proofs of Theorem~\ref{thm:QUADGF} and Theorem~\ref{thm:upperSpecialSC} are organized as subsections.
\subsection{Proof of Theorem~\ref{thm:QUADGF}}
We first write the proof in the \GF case. Let $\alpha_i$ be the component of $(\diffx)$ in the direction of the eigenvector of $\Sigma$ that corresponds to the eigenvalue $\sigma_i$. Observe that 
\begin{align*}
    \pathlength &= \int_{0}^{\infty} \| \Dot{\x}_t\|_2\ dt \nonumber
\\&= \int_{0}^{\infty} \|\exp(-t \Sigma) \Sigma(\diffx)\|_2\ dt \nonumber
\\& = \int_0^\infty \sqrt{\sum_{i=1}^{d^+}\expText{-2t\sigma_i}\sigma_i^2\alpha_i^2}\ dt \label{eqn:quadgflength}.
\end{align*}
The $\sqrt{d^+}$ bound is straightforward. Since $\sqrt{a + b} \leq \sqrt{a} + \sqrt{b}$ for nonnegative $a$ and $b$, we have 
\begin{align*}
    \int_0^\infty \sqrt{\sum_{i=1}^{d^+}\expText{-2t\sigma_i}\sigma_i^2\alpha_i^2}\ dt &\leq \int_0^\infty \sum_{i=1}^{d^+}\sqrt{\expText{-2t\sigma_i}\sigma_i^2\alpha_i^2}\ dt
    \\ &= \int_0^\infty \sum_{i=1}^{d^+}{\expText{-t\sigma_i}\sigma_i\abs{\alpha_i}}\ dt
    \\ &= \sum_{i=1}^{d^+}\int_0^\infty {\expText{-t\sigma_i}\sigma_i\abs{\alpha_i}}\ dt
    \\ &= \sum_{i=1}^{d^+} \abs{\alpha_i} 
    \\ &\leq \sqrt{d^+} \|\alpha\|_2 ~=~ \sqrt{d^+}\ \dist{\x_0}{\X^*}.
\end{align*}
We now prove the bound in~\eqref{eqn:QUADGF} that depends on the $\kappa_i$'s. For every $t \in \Real^+$, consider a function $g_t:\Real^+ \to \Real$, $g_t(x)=\exp(-2tx)x^2$. For every value of $t$, a term in the path length integral is a linear combination of evaluations of $g_t$ at the points $\sigma_1, \sigma_2, \ldots, \sigma_{d^+}$. We will bound each $g_t(\sigma_i)$ in the linear combination with $\max_j g_t(\sigma_j)$. Notice that for different values of $t$, $\argmax_j g_t(\sigma_j)$ is different. The maximum of $g_t$ occurs at $x_m=1/t$, and $g_t$ is an increasing function in $x$ before $x_m$ and decreasing in $x$ after $x_m$. To bound the path length we use this observation to identify $\argmax_j g_t(\sigma_j)$ for each $t$ (see Figure~\ref{fig:gPlots} for reference). 
\begin{figure}[t!]
    \centering
    \includegraphics[width=0.9\linewidth,trim = 0cm 2cm 0cm 3cm, clip]{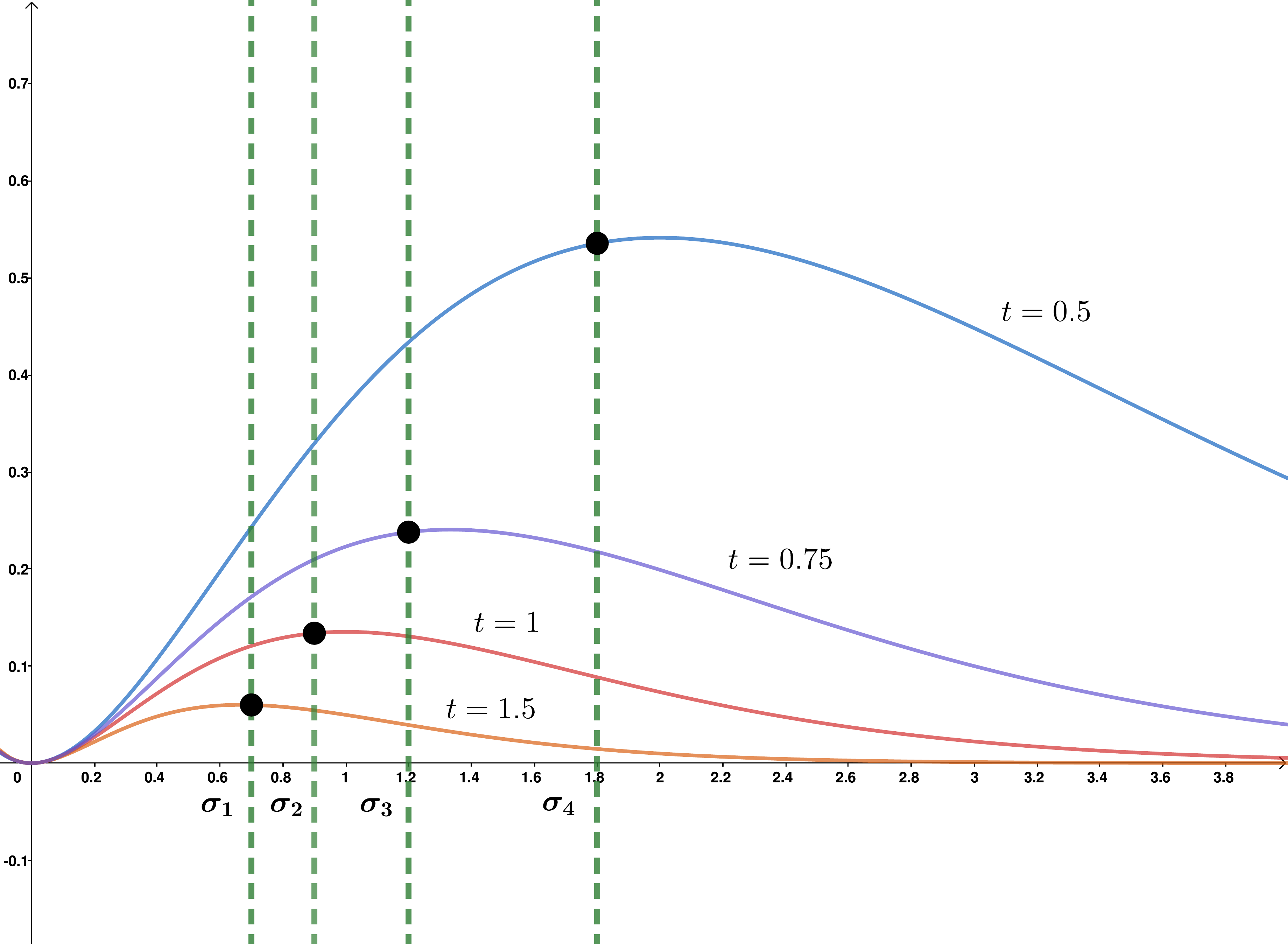}
    \caption{$g_t(x)$ for some values of $t$. Suggestive values of $\sigma_i = 0.7, 0.9, 1.2, 1.8$ are also shown. Observe that for every $t$, there is a different $\sigma_i$ that maximizes $g_t$ (indicated with large black points). }
    \label{fig:gPlots}
\end{figure}

\begin{align*}
    \pathlength ~&=~ \int_0^\infty \sqrt{\sum_{i=1}^{d^+} \expText{-2t\sigma_i}\sigma_i^2\alpha_i^2}\ dt 
    = \underbrace{\int_0^{1/\sigma_1} \sqrt{\sum_{i=1}^{d^+} \expText{-2t\sigma_i}\sigma_i^2\alpha_i^2}\ dt}_{=: E_1}\\
    &\quad \quad \quad \quad + \sum_{j=1}^{d^+-1} \underbrace{\int_{1/\sigma_j}^{1/\sigma_{j+1}} \sqrt{\sum_{i=1}^{d^+} \expText{-2t\sigma_i}\sigma_i^2\alpha_i^2}\ dt}_{=:T_j} ~+~ \underbrace{\int_{1/\sigma_{d^+}}^\infty \sqrt{\sum_{i=1}^{d^+} \expText{-2t\sigma_i}\sigma_i^2\alpha_i^2}\ dt}_{=: E_2}.
\end{align*}

For the first integral, when $t \leq 1/\sigma_1$, we have that $\sigma_i \leq \sigma_1 < 1/t$, and at this stage $g_t$ is an increasing function of $x$, so every term is upper bounded by $\exp(-2t\sigma_1)\sigma_1^2$. This leads to a bound for the first term above:
\begin{equation}
E_1 \leq \int_0^{1/\sigma_1} \exp(-t\sigma_1)\sigma_1 \|\alpha\|_2 dt =  (1-1/e)\|\diffx\|_2 .\label{eqn:E1}
\end{equation}
Similarly, for the last integral, when $t \geq 1/\sigma_{d^+}$, we have all $\sigma_i \geq \sigma_{d^+} > 1/t$, and here $g_t$ is a decreasing function of $x$, so every term is upper bounded by $\exp(-2t\sigma_{d^+})\sigma_{d^+}^2$, so the last term is upper bounded as:
\begin{equation}
E_2 \leq \int_{1/\sigma_{d^+}}^\infty \exp(-t\sigma_{d^+})\sigma_{d^+} \|\alpha\|_2 dt =  (1/e)\|\diffx\|_2 .\label{eqn:E2}
\end{equation}
Last, for the middle integral, consider a particular term $T_j$. If $\sigma_j = \sigma_{j+1}$, $T_j = 0 = \kappa_j^{-1/(\kappa_j-1)} (1-1/\kappa_j)$. Else, define $t_j := \log(\kappa_j)/(\sigma_{j}-\sigma_{j+1})$ and observe that that $t_j \in (1/\sigma_{j}, 1/\sigma_{j+1})$. We can split $T_j$ into two parts:
\[
T_j = \int_{1/\sigma_{j}}^{t_j} \sqrt{\sum_{i=1}^d \expText{-2t\sigma_i}\sigma_i^2\alpha_i^2}\ dt + \int_{t_j}^{1/\sigma_{j+1}} \sqrt{\sum_{i=1}^d \expText{-2t\sigma_i}\sigma_i^2\alpha_i^2}\ dt.
\]
Whenever $ 1/\sigma_{j} < t < 1/\sigma_{j+1}$, $\sigma_{j+1} < 1/t < \sigma_j$. Thus, for every $t$, the value of $\max\{g_t(\sigma_j), g_t(\sigma_{j+1})\}$ dominates every $g_t(\sigma_i)$. Which one of these two is larger depends on which side of $t_j$ we consider. In the first term, $g_t(\sigma_{j})$ dominates, and in the second $g_t(\sigma_{j+1})$ dominates. This yields an upper bound of:
\begin{align}
T_j ~&\leq \int_{1/\sigma_{j}}^{t_j} \expText{-t\sigma_{j}}\sigma_{j} \|\alpha\|_2 \ dt + \int_{t_j}^{1/\sigma_{j+1}} \expText{-t\sigma_{j+1}}\sigma_{j+1} \|\alpha\|_2\ dt \nonumber \\
&= (\exp(-t_{j} \sigma_{j+1}) - \exp(-t_{j+1}\sigma_{j})) \|\alpha\|_2 \nonumber \\
&= \kappa_j^{-1/(\kappa_j-1)} (1-1/\kappa_j) \enorm{\diffx}.\label{eqn:Tj}
\end{align}
Summing up the bounds in Equations~\eqref{eqn:E1},~\eqref{eqn:E2},~\eqref{eqn:Tj}, we get
\[
\pathlength ~\leq~ \roundbrack{1 + \sum_{j=d-1}^{1} \kappa_j^{-1/(\kappa_j-1)} (1-1/\kappa_j)}\|\diffx\|_2 .
\]

\begin{figure}[t!]
    \centering
    \includegraphics[width=0.4\linewidth]{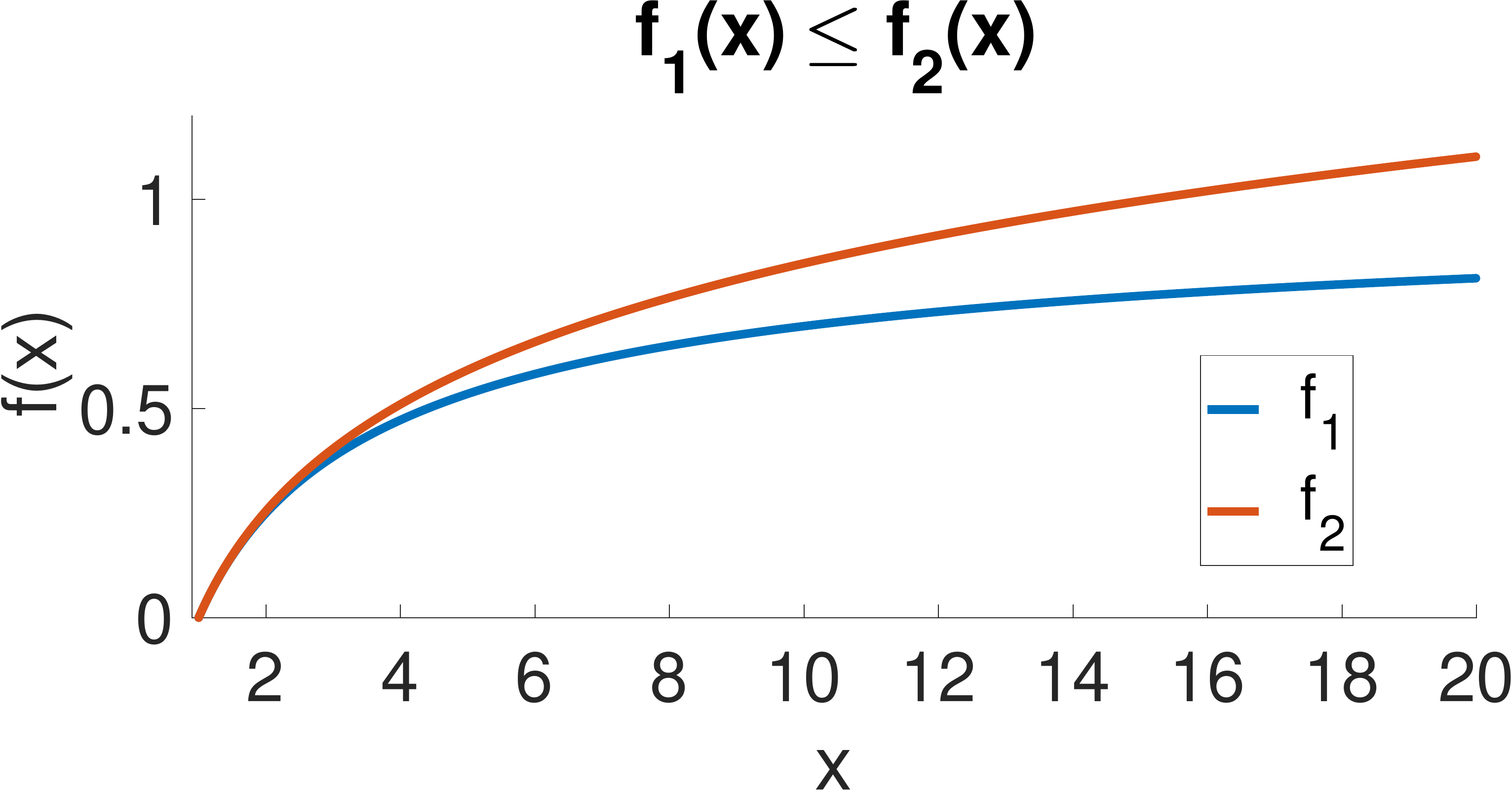}
    \caption{\rev{Graphical proof for $ x^{-1/(x-1)} (1-1/x) \leq \frac{\log x}{e}$. $f_1(x) = x^{-1/(x-1)} (1-1/x)$ and $f_2(x) = \frac{\log x}{e}$. Clearly $f_1(x) \leq 1$ so for $x \in [20, \infty)$ the inequality is trivial.}}
    \label{fig:graphicalProof}
\end{figure}
Next, we simplify the above expression in terms of $\kappa$. Note the following fact for all $x \geq 1$ (this can be seen graphically as shown in Figure~\ref{fig:graphicalProof}): 
    \[
    x^{-1/(x-1)} (1-1/x) \leq \frac{\log x}{e}. 
    \]
    Thus 
    \[
    \sum_{j=1}^{d^+-1} \kappa_j^{-1/(\kappa_j-1)} (1-1/\kappa_j) \leq \sum_{j=1}^{d^+-1} \frac{\log \kappa_j}{e} = \frac{\log \kappa}{e},
    \]
which gives an $\mathcal{O}(\log \kappa)$ bound. For the $\mathcal{O}(\sqrt{\log \kappa})$ bound, we first split $\pathlength$ as before:
\begin{align*}
    \pathlength ~&= E_1 +\roundbrack{\int_{1/\sigma_1}^{1/\sigma_{d^+}} \sqrt{\sum_{i=1}^{d^+} \expText{-2t\sigma_i}\sigma_i^2\alpha_i^2}\ dt} + E_2 
    \\ &\leq 1 +\int_{1/\sigma_1}^{1/\sigma_{d^+}} \sqrt{\sum_{i=1}^{d^+} \expText{-2t\sigma_i}\sigma_i^2\alpha_i^2}\ dt . 
\end{align*}
Now we bound the second summation. Fix a constant $b > 1$, to be specified later. Let $r = \ceil{\log_b \kappa} = \ceil{\log_b(\sigma_1/\sigma_{d^+})}$ and consider the $r$ intervals given by $I_k = [b^{k-1} \sigma_{d^+}, b^k\sigma_{d^+})$ for $k \in [r-1]$ and $I_r = [b^{r-1} \sigma_{d^+}, b^r\sigma_{d^+}]$. If $\sigma_i \in I_k$, that is if $b^k\sigma_{d^+} \geq \sigma_i \geq b^{k-1}\sigma_{d^+}$ then we have for any $t$
\[
\expText{-2t\sigma_i}\sigma_i^2 \leq \expText{-2t(b^{k-1}\sigma_{d^+})}b^{2k}\sigma_{d^+}^2.
\]
Define $\theta_k := \sqrt{\sum_{i: \sigma_i \in I_k} \alpha_i^2}$ and note that $\sqrt{\sum_{k=1}^r \theta_k^2} = \enorm{\diffx}$. Then

\begin{align*}
    \int_{1/\sigma_1}^{1/\sigma_{d^+}} \sqrt{\sum_{i=1}^{d^+} \expText{-2t\sigma_i}\sigma_i^2\alpha_i^2}\ dt &= \int_{1/\sigma_1}^{1/\sigma_{d^+}} \sqrt{\sum_{k = 1}^{r}\sum_{i: \sigma_i \in I_k}\expText{-2t\sigma_i}\sigma_i^2\alpha_i^2}\ dt
    \\ &\leq \int_{1/\sigma_1}^{1/\sigma_{d^+}} {  \sqrt{\sum_{k = 1}^{r}\sum_{i: \sigma_i \in I_k} \expText{-2t(b^{k-1}\sigma_{d^+})}b^{2k}\sigma_{d^+}^2\alpha_i^2}}\ dt\\ 
    &\leq \int_{1/\sigma_1}^{1/\sigma_{d^+}} \roundbrack{\sum_{k = 1}^{r}  \sqrt{\sum_{i: \sigma_i \in I_k} \expText{-2t(b^{k-1}\sigma_{d^+})}b^{2k}\sigma_{d^+}^2\alpha_i^2}}\ dt
    \\ &= \int_{1/\sigma_1}^{1/\sigma_{d^+}} \roundbrack{\sum_{k = 1}^{r} \theta_k \expText{-t(b^{k-1}\sigma_{d^+})}b^{k}\sigma_{d^+}}\ dt
    \\ &= \sum_{k = 1}^{r} \theta_k \int_{1/\sigma_1}^{1/\sigma_{d^+}} \expText{-t(b^{k-1}\sigma_{d^+})}b^{k}\sigma_{d^+}\ dt
    \\ &= b\sum_{k = 1}^{r} \theta_k \roundbrack{\expText{-(b^{k-1}\sigma_{d^+})/\sigma_1} - \expText{-(b^{k-1}\sigma_{d^+})/\sigma_{d^+})}}
    \\ &\leq b\sum_{k=1}^{r}\theta_k  
    \\ &\leq b\sqrt{r}\sqrt{\sum_{k=1}^{r} \theta_k^2}
    \\ &= b\sqrt{\log_b\kappa} \enorm{\diffx}
    \\ &= {b}\sqrt{\log_be}\ \sqrt{\log \kappa} \enorm{\diffx}.
\end{align*}

\ifx false
Let $r = \ceil{\log_2(\kappa)} = \ceil{\log_2(\sigma_1/\sigma_{d^+})}$ and consider the $r$ intervals given by $I_k = [2^{k-1} \sigma_{d^+}, 2^k\sigma_{d^+}]$ for $k \in [r ]$. If $\sigma_i \in I_k$, observe that $2^k\sigma_{d^+} \geq \sigma_i \geq 2^{k-1}\sigma_{d^+}$ and thus for any $t$
\[
\expText{-2t\sigma_i}\sigma_i^2 \leq \expText{-2t(2^{k-1}\sigma_{d^+})}2^{2k}\sigma_{d^+}^2.
\]
Define $\theta_k := \sqrt{\sum_{i: \sigma_i \in I_k} \alpha_i^2}$. Then

\begin{align*}
    \int_{1/\sigma_1}^{1/\sigma_{d^+}} \sqrt{\sum_{i=1}^{d^+} \expText{-2t\sigma_i}\sigma_i^2\alpha_i^2}\ dt &\leq \int_{1/\sigma_1}^{1/\sigma_{d^+}} \roundbrack{\sum_{k = 1}^{r}  \sqrt{\sum_{i: \sigma_i \in I_k} \expText{-2t(2^{k-1}\sigma_{d^+})}2^{2k}\sigma_{d^+}^2\alpha_i^2}}\ dt
    \\ &= \int_{1/\sigma_1}^{1/\sigma_{d^+}} \roundbrack{\sum_{k = 1}^{r} \theta_k \expText{-t(2^{k-1}\sigma_{d^+})}2^{k}\sigma_{d^+}}\ dt
    \\ &= \sum_{k = 1}^{r} \theta_k \int_{1/\sigma_1}^{1/\sigma_{d^+}} \expText{-t(2^{k-1}\sigma_{d^+})}2^{k}\sigma_{d^+}\ dt
    \\ &= 2\sum_{k = 1}^{r} \theta_k (\expText{-(2^{k-1}\sigma_{d^+})/\sigma_1} - \expText{-(2^{k-1}\sigma_{d^+})/\sigma_{d^+})}
    \\ &\leq 2\sum_{k = 1}^{r} \theta_k (\expText{-( 2^{k-1}/2^{r})} 
    \\ &\leq 2\sum_{k=1}^{r}\theta_k  
    \\ &\leq 2\sqrt{r}\sqrt{\sum_{k=1}^{r} \theta_k^2}
    \\ &\leq 2\sqrt{\log_2(2\kappa)} \enorm{\diffx}.
\end{align*}
\ifx false
We can remove the $\max$ in the above expression to make it succinct, by writing it as part of the more general Equation~\eqref{eqn:QUADGF}. Suppose $\sqrt{\log \kappa} < 1$ (that is, $1$ dominates the $\max$), then $\log \kappa < 1$ and thus 
\[
    \roundbrack{1 + \sum_{j=d-1}^{1} \kappa_j^{-1/(\kappa_j-1)} (1-1/\kappa_j)} \leq 1 + \frac{\log \kappa}{e} \leq 5 = 5\max(1, \sqrt{\log \kappa}), 
\]
so that the minimum operator in Equation~\eqref{eqn:QUADGF} would not pick out $5\max(1, \sqrt{\log \kappa})$. 
\fi 
\fi

Setting $b = 2$, which is close to the minima of $b \sqrt{\log_b e}$, and bounding $2 \sqrt{\log_2 e} \leq 2.5$ gives the result. This concludes the proof in the \GF case. 

For \GD, since $\A^Ty = (\A^T\A)\Pi_{\X^*}(\x_0)$, Equation~\eqref{eqn:quadraticsGD} leads to the following recurrence for $k \geq 1$: 
\[
\x_k - \Pi_{\X^*}(\x_0) = \x_{k-1} - \Pi_{\X^*}(\x_0) - \eta \roundbrack{\frac{\A^T\A}{n}}(\x_{k-1} -  \Pi_{\X^*}(\x_0)) = \roundbrack{I - \eta \Sigma}(\x_{k-1} - \Pi_{\X^*}(\x_0)).
\]
Thus for $k\in\wholes$, 
\[
(\x_k - \Pi_{\X^*}(\x_0)) = \roundbrack{I - \eta \Sigma}^{k}(\x_0 - \Pi_{\X^*}(\x_0))
\]
Then we can compute the path length as follows:
\begin{align*}
\text{For }k \geq 1, \ \enorm{\x_k - \x_{k-1}} &= \enorm{(\x_k - \Pi_{\X^*}(\x_0)) - (\x_{k-1} - \Pi_{\X^*}(\x_0))}
\\ &= \enorm{\roundbrack{I - \eta \Sigma}^{k-1}\roundbrack{\eta \Sigma}(\x_0 - \Pi_{\X^*}(\x_0))}.
\end{align*}
Taking a sum over all iterates from $k = 0$ to $\infty$, we obtain the bound
\begin{align*}
\pathlength_\eta&= \sum_{k=0}^\infty \enorm{\roundbrack{I - \eta \Sigma}^k\roundbrack{\eta \Sigma}(\x_0 - \Pi_{\X^*}(\x_0))}.
\end{align*}
Note that $\eta$ is such that the singular values of $\eta \Sigma$ are $1 \geq \sigma'_1 \geq \sigma'_2 \cdots \geq \sigma'_p > 0$. For every $j \in [d^+-1]$, we have $\kappa_j = \sigma'_j/\sigma'_{j+1}$. Also observe that $\alpha_i$ is the component of ($\diffx$) in the direction of the eigenvector of $\eta\Sigma$ that corresponds to the eigenvalue $\sigma'_i$. Thus, 
\begin{align*}
\pathlength_\eta &= \sum_{k=0}^\infty \sqrt{\sum_{i = 1}^{d^+} (1 - \sigma'_i)^{2k}\sigma_i^{'2}\alpha_i^2}
\\ &\leq \sum_{k=0}^\infty \sqrt{\sum_{i = 1}^{d^+} \expText{-2k\sigma'_i}\sigma_i^{'2}\alpha_i^2}
\\ &\leq \ \enorm{\diffx} + \sum_{k=1}^\infty \sqrt{\sum_{i = 1}^{d^+} \expText{-2k\sigma'_i}\sigma_i^{'2}\alpha_i^2}
\\ &\leq \ \enorm{\diffx} + \int_{0}^\infty \sqrt{\sum_{i = 1}^{d^+} \expText{-2t\sigma'_i}\sigma_i^{'2}\alpha_i^2}\ dt
\\ &= \ \enorm{\diffx} + \int_{0}^\infty \sqrt{\sum_{i = 1}^{d^+} \expText{-2t\sigma_i}\sigma_i^{2}\alpha_i^2}\ dt &\text{(reparameterizing $t \to \eta t$)}
\\&= \ \enorm{\diffx} + \pathlength,
\end{align*}
as was to be shown. 
\hfill\BlackBox

\subsection{Proof of Theorem~\ref{thm:upperSpecialSC}} 
In each component $i$, let $\x_\indi^*$ denote the unique minimum. We will consider \GF on $f$ with some initial point $\x_0$. For every index $i$ consider the following potential function for any time $t$ such that $(\x_t)_\indi \neq \x_\indi^*$:
\[
\phi(t) = \frac{g_\indi'((\x_t)_\indi)}{(\x_t)_\indi - \x_\indi^*}.
\]
First note that by convexity, $\phi(t) \geq 0$. We will show that $\phi(t)$ is decreasing in $t$.
\begin{align*}
    \phi'(t) &= \frac{d \phi(t)}{d\x_\indi}\cdot \frac{d\x_\indi}{dt}
    \\ &= \roundbrack{\frac{g_\indi''((\x_t)_\indi)}{(\x_t)_\indi - \x_\indi^*} - \frac{g_\indi'((\x_t)_\indi)}{((\x_t)_\indi - \x_\indi^*)^2}}\roundbrack{-g_\indi'(\x_\indi)}.
\end{align*}
Suppose $((\x_t)_\indi - \x_\indi^*) \geq 0$. Then by convexity, $g_\indi'(\x_\indi) \geq 0$. Now observe that
\begin{align*}
    g_\indi'((\x_t)_\indi) &= \int_{\x_\indi^*}^{(\x_t)_\indi} g_\indi''(x) dx
    \\ &\leq \int_{\x_\indi^*}^{(\x_t)_\indi} g_\indi''((\x_t)_\indi) dx &\text{(since $g''$ is assumed to be non-decreasing)}
    \\ &= g_\indi''((\x_t)_\indi) ((\x_t)_\indi - \x_\indi^*).
\end{align*}
Thus $\phi'(t) \leq 0$. Alternately, suppose $((\x_t)_\indi - \x_\indi^*) \leq 0$. Then by convexity, $g_\indi'(\x_\indi) \leq 0$. Now observe that
\begin{align*}
    g_\indi'((\x_t)_\indi) &= \int_{\x_\indi^*}^{(\x_t)_\indi} g_\indi''(x) dx
    \\ &\geq \int_{\x_\indi^*}^{(\x_t)_\indi} g_\indi''((\x_t)_\indi) dx &\text{(since $g''$ is assumed to be non-decreasing)}
    \\ &= g_\indi''((\x_t)_\indi) ((\x_t)_\indi - \x_\indi^*).
\end{align*}
In this case too we observe that $\phi'(t) \leq 0$.
Thus, for every $t$ such that $(\x_t)_\indi \neq \x_\indi^*$, $\phi'(t) \leq 0$. Also, since $g_i$ is $\mu$-strongly convex and has $L$-Lipschitz gradients $\phi(t) \in [\mu, L]$. 
Suppose $\phi(t) = c \geq \mu$. Then for every $s \leq t$, $\phi(t) \geq c$. Consider the Lyapunov function $\lyapunov_s = e^{2cs}((\x_s)_\indi - \x_\indi^*)^2$. For $s \leq t$
\begin{align*}
\vel{\lyapunov}_s &= e^{2cs}\roundbrack{2c((\x_s)_\indi - \x_\indi^*)^2 - 2((\x_s)_\indi - \x_\indi^*)(\vel{(\x_s)_\indi})}
\\ &= e^{2cs}\roundbrack{2c(\x_s)_\indi - \x_\indi^*)^2 - 2(\x_s)_\indi - \x_\indi^*)^2\phi(s)}
\\ &\leq 0. 
\end{align*}
Thus 
\begin{align*}
     e^{2ct}((\x_t)_\indi - \x_\indi^*)^2 \lyapunov_t \leq \lyapunov_0 = ((\x_0)_\indi - \x_\indi^*)^2. 
\end{align*}
Since $\phi(t) = c$, we can compute the following bound on $\abs{g_\indi'((\x_t)_\indi)}$:
\begin{align*}
    \abs{g_\indi'((\x_t)_\indi)} &= c\abs{(\x_t)_\indi - \x_\indi^*}
    \\ &\leq ce^{-ct}\abs{(\x_0)_\indi - \x_\indi^*}.
\end{align*}
Thus if $(\x_s)_\indi \neq \x_\indi^*$, then $\abs{g_\indi'((\x_t)_\indi)} \leq  ce^{-ct}\abs{(\x_0)_\indi - \x_\indi^*}$ for some $ c \in [\mu, L]$. However, if $(\x_s)_\indi = \x_\indi^*$, then $\abs{g_\indi'((\x_t)_\indi)} = 0 \leq  ce^{-ct}\abs{(\x_0)_\indi - \x_\indi^*}$. Thus for every $t$, $\abs{g_\indi'((\x_t)_\indi)} \leq  ce^{-ct}\abs{(\x_0)_\indi - \x_\indi^*}$ for some $ c \in [\mu, L]$.

Now split the integral as follows: 
\begin{align*}
    \pathlength &= \int_0^\infty\sqrt{\sum_{i=1}^d \roundbrack{g_\indi'((\x_t)_\indi)}^2}\ dt
    \\ &= \underbrace{\int_0^{1/L}\sqrt{\sum_{i=1}^d \roundbrack{g_\indi'((\x_t)_\indi)}^2}\ dt}_{E_1} + \underbrace{\int_{1/L}^{1/\mu}\sqrt{\sum_{i=1}^d \roundbrack{g_\indi'((\x_t)_\indi)}^2}\ dt}_{E_2} + \underbrace{\int_{1/\mu}^\infty\sqrt{\sum_{i=1}^d \roundbrack{g_\indi'((\x_t)_\indi)}^2}\ dt}_{E_3}.
\end{align*}
To bound $E_1$ observe that for $ t \in [0, 1/L]$ and $c \in [\mu, L]$, $ce^{-ct} \leq Le^{-Lt}$. Thus
\begin{align*}
    E_1 &\leq \int_0^{1/L}Le^{-Lt} \sqrt{\sum_{i=1}^d \roundbrack{(\x_0)_\indi - \x_\indi^*}^2}\ dt
    \\ &= \roundbrack{1 - \frac{1}{e}} \enorm{\diffx }.
\end{align*}
Similarly for $E_3$ observe that for $ t \in [1/\mu, \infty)$ and $c \in [\mu, L]$, $ce^{-ct} \leq \mu e^{-\mu t}$. Thus
\begin{align*}
    E_3 &\leq \int_{1/\mu}^\infty \mu e^{-\mu t} \sqrt{\sum_{i=1}^d \roundbrack{(\x_0)_\indi - \x_\indi^*}^2}\ dt
    \\ &= \roundbrack{\frac{1}{e}} \enorm{\diffx }.
\end{align*}
To bound $E_2$, we will further split the integral. Define $\alpha_i = (\x_0)_\indi - \x_\indi^*$. Observe that for some fixed $t > 0$ and $\max_{c \geq 0} c e^{-ct} = (1/te)$. Thus for $ t \in [2^{k-1}/L, 2^k/L]$ and $c \geq 0$, $ce^{-ct} \leq \frac{L}{2^{k-1}e}$. Then

\begin{align*}
    E_2 &\leq \sum_{k=1}^{r} \int_{2^{k-1}/L}^{2^k/L}\sqrt{\sum_{i=1}^d \roundbrack{g_\indi'((\x_t)_\indi)}^2}\ dt
    \\ &\leq \sum_{k=1}^{r} \int_{2^{k-1}/L}^{2^k/L} \roundbrack{\frac{L}{2^{k-1}e}}\enorm{\diffx}\
    \\ &= \sum_{k=1}^{r} \roundbrack{\frac{2^k}{L} - \frac{2^{k-1}}{L}}\roundbrack{\frac{L}{2^{k-1}e}}\enorm{\diffx}
    \\ &= r\enorm{\diffx}/e
    \\ &\leq (\log_2(2\kappa)/e)\enorm{\diffx}
    \\ &\leq (1 + \log \kappa) \enorm{\diffx}.
\end{align*}
Resubstituting the bounds for $E_1,E_2,E_3$, we get 
\[
\pathlength \leq (2 + \log \kappa) \enorm{\diffx}.
\]
This completes the proof. 
\hfill\BlackBox

\section{Proofs of Results in Section~\ref{sec:quasiconvex}}
Each proof in this section is organized in a separate subsection. 
\label{appsec:proofsQuasiconvex}
\subsection{Proof of Lemma~\ref{lemma:QCGD}}
\label{appsec:gd-self-contracted-proof}
First, we show that $f(\x_t)$ is non-increasing with respect to $t$. For any $s \geq 0$ observe that by \LG,
\begin{align*}
    f(\x_{s+1}) &= f(\x_s - \eta \nabla f(\x_s)) 
    \\ &\leq f(\x_s) + \inner{\nabla f(\x_s), -\eta \nabla f(\x_s)} + \frac{L}{2}\esqnorm{\eta\nabla f(\x_s)}
    \\ &= f(\x_s) - \eta(1 - \eta L/2)\esqnorm{\nabla f(\x_s)}
    \\ &\leq f(\x_s) - \frac{\eta\esqnorm{\nabla f(\x_s)}}{2},
\end{align*}
for $\eta \leq 1/L$. Thus for any $s$, $f(\x_{s+1}) \leq f(\x_s)$, as was to be shown. Next, fix any iterate $t$. We will show that $\esqnorm{\x_s - \x_t}$ is non-increasing in $s$ for $s \leq t$. This would show self-contractedness for $s_3 = t$, for any $t$, concluding the proof. Consider any $s < t$, then
\begin{align*}
    \esqnorm{\x_{s+1} - \x_t} &= \esqnorm{\x_{s} -\eta \nabla f(\x_s) - \x_t} 
    \\ &= \esqnorm{\x_s - \x_t} + 2\inner{\eta \nabla f(\x_s), \x_t - \x_s} + \eta^2\esqnorm{\nabla f(\x_s)}
    \\ &\textleq{(i)} \esqnorm{\x_s - \x_t} + 2\eta(f(\x_t) - f(\x_s)) + \eta^2\esqnorm{\nabla f(\x_s)}
    \\ &\textleq{(ii)} \esqnorm{\x_s - \x_t} + 2\eta(f(\x_{s+1}) - f(\x_s)) + \eta^2\esqnorm{\nabla f(\x_s)}
    \\ &\textleq{\LG} \esqnorm{\x_s - \x_t} + 2\eta\roundbrack{\inner{\nabla f(\x_s), -\eta\nabla f(\x_s)} + \frac{\eta^2L}{2}\esqnorm{\nabla f(\x_s)}} + \eta^2\esqnorm{\nabla f(\x_s)}
    \\ &= \esqnorm{\x_s - \x_t} +\eta^2(\eta L - 1) \esqnorm{\nabla f(\x_s)}
    \\ &\leq \esqnorm{\x_s - \x_t},
\end{align*}
for $\eta \leq 1/L$. Above, inequality (i) holds because of convexity and inequality (ii) holds as we have shown that $f(\x_t)$ is non-increasing in $t$, and $s + 1 \leq t$. Thus, $\esqnorm{\x_s - \x_t}$ is non-increasing in the iterates $s$, as was to be shown. 
\hfill\BlackBox

As indicated after the statement of Lemma~\ref{lemma:QCGD}, if $f$ is convex and has $L$-Lipschitz gradients, then \GD with $\eta \in (0, 2/L]$ is a descent method (we show it below); however for \GD to be self-contracted, one needs further restriction on the step-size. To see the latter, consider $f : \Real \to \Real$ given by $f(x) = x^2$. Here we have $L = 2$. Let us set $\eta = 7/4L = 7/8$. Let $x_0 = 8$, so that $x_1 = 8 - (7/8)\cdot(2\cdot 8) = -6$ and $x_2 = -6 - (7/8)\cdot(2\cdot-6) = 4.5$. However, \[\abs{x_2 - x_1} > \abs{x_2 - x_0},\] and so $(x_0, x_1, x_2)$ cannot be part of a self-contracted curve.

  The fact that \GD is a descent method if $\eta \in (0, 1/L]$ is a well-known fact, but perhaps a bit less known is that this can be shown for $\eta \in (0, 2/L]$ (using standard techniques). We show it here for completeness. For any $\x^* \in \X^*$,
  \begin{align*}
    \esqnorm{\x_{s+1} - \x^*} &= \esqnorm{\x_{s} -\eta \nabla f(\x_s) - \x^*} 
    \\ &= \esqnorm{\x_s - \x^*} - 2\inner{\eta \nabla f(\x_s), \x_s - \x^*} + \eta^2\esqnorm{\nabla f(\x_s)}
    \\ &\textequal{(i)} \esqnorm{\x_s - \x^*} - 2\inner{\eta \nabla f(\x_s) - \eta \nabla f(\x^*), \x_s - \x^*} + \eta^2\esqnorm{\nabla f(\x_s)}.
    \\ &\textleq{(ii)} \esqnorm{\x_s - \x^*} - 2\eta\esqnorm{\nabla f(\x_s)}/L + \eta^2\esqnorm{\nabla f(\x_s)}.
    \\ &\leq \esqnorm{\x_s - \x^*},
  \end{align*}
  since $\eta \leq 2/L$ (with strict inequality in the last step if $\eta < 2/L$ and $\nabla f(\x_s) \neq 0$). Equality (i) holds because $\nabla f(\x^*) = 0$ and inequality (ii) holds since LG+convexity implies $\inner{\nabla f(\x) - \nabla f(\y), \x - \y} \geq \esqnorm{\nabla f(\x) - \nabla f(\y)}/L$. This is shown formally by \citet[Lemma 4]{zhou2018fenchel}.

\subsection{Proof of Theorem~\ref{thm:QC}}

The \GF result is due to~\citet{manselli1991maximum}. We analyze \GD curves by using Lemma~\ref{lemma:QCGD} to extend the techniques introduced by~\citet[Theorem~3.1]{daniilidis2015rectifiability} and~\citet{manselli1991maximum} for analyzing \GF curves. 

We will assume $d \geq 2$ since in the case $d = 1$ the self-contracted curve is the shortest path. Some definitions are in order:
\begin{itemize}
    \item The projection of any set $\convexHull$ to a line $u$ will be denoted as $\Pi_\uvec(\convexHull)$.
    \item Length of a one dimensional object (for example the projection of a set $\convexHull$ to a line $u$) will be denoted as $\ell(\cdot)$ (for example $\ell(\Pi_\uvec(\convexHull)$). 
    \item Mean width of a convex set $\convexHull$:
    \[
    W(\convexHull) := (\sigma_d)^{-1}\int_{\uvec \in \shell^{d-1}} \ell(\Pi_\uvec(K)) \ d\uvec,
    \]
    where $\sigma_d$ is the volume of $\shell^{d-1}$, with respect to the Lebesgue measure. 
    \item For $k \in \naturals_0$, $\futurePath(k)$ is the set of iterates after iteration $k$ (inclusive): $\futurePath(k) := \{\x_k, \x_{k+ 1}, \ldots\}$. 
    \item The convex closure of the set $\futurePath(k)$ will be denoted as $\convexClosure(k)$. 
\end{itemize}
Note that since the \GD curve is self-contracted and converges to $\x_\infty$, we have for all $t \in \naturals_0$,
\[\enorm{\x_t - \x_\infty} \leq \enorm{\x_0 - \x_\infty}. 
\]
Thus, all iterates $\x_0, \x_1, \ldots$ stay within a ball of radius $\enorm{\x_0 - \x_\infty}$ centered at $\x_\infty$. The mean width of this path can be at most the diameter of the ball, that is, $W(\convexClosure(0)) \leq 2\enorm{\x_0 - \x_\infty}$.
We will be showing that if $\eta \leq 1/L$
\begin{equation}
    \pathlength_\eta \leq \placeholder\cdot W(\convexClosure(0)),\label{eqn:goal1}
\end{equation}
and if $\eta \leq 1/2L\sqrt{d}$
\begin{equation}
    \pathlength_\eta \leq \improvedPlaceholder\cdot W(\convexClosure(0)),\label{eqn:goal2}
\end{equation}
which will lead to the bound in the theorem since $W(\convexClosure(0)) \leq 2\enorm{\x_0 - \x_\infty}$ and $d \geq 2$. Both these bounds will be shown by setting up a recurrence of the form. \begin{equation}
W(\convexClosure(k+1)) + \increment\enorm{\x_{k+1} - \x_k} \leq W(\convexClosure(k)). \label{eqn:recurrence}
\end{equation}
for two different values of $\increment$. 

By telescoping to $T$ iterations, this would lead to 
\[
\sum_{k = 0}^T \enorm{\x_{k+1} - \x_k} \leq \frac{1}{\increment}\roundbrack{W(\convexClosure(0)) -  W(\convexClosure(T+1))} \leq \frac{ W(\convexClosure(0))}{\increment}.
\]
Since the right hand side is the same for every $T$, indeed we would obtain
\[
\pathlength_\eta = \sum_{k = 0}^\infty \enorm{\x_{k+1} - \x_k}  \leq \frac{ W(\convexClosure(0))}{\increment}, 
\]
which would complete the proof. It remains to prove Equation~\eqref{eqn:recurrence} with the appropriate values of $\epsilon$ that would lead to Equations~\eqref{eqn:goal1} and~\eqref{eqn:goal2}.

\subsubsection{Proof of Equation~\eqref{eqn:goal1}}
\label{sec:proofGoal1}
We will show that if $\eta \leq 1/L$, Equation~\eqref{eqn:recurrence} is true with $\epsilon = (1/28)^{2d^2}$. Define the following entities (see Figure~\ref{fig:selfContractedIllustration}): 
\begin{align*}
    \x' &:= \frac{\x_{k+1}}{3} + \frac{2\x_k}{3}, \\
    \vvec &:= \frac{\x_{k} - \x_{k+1}}{\enorm{\x_{k} - \x_{k+1}}}, \\
    \xi'(\y) &:= \frac{\y - \x'}{\enorm{\y - \x'}}, \text{ for $\y \neq \x'$},\\
    \xi(\y) &:= \frac{\y - \x_{k+1}}{\enorm{\y - \x_{k+1}}}, \text{ for $\y \neq \x_{k+1} $}.
\end{align*}

\begin{figure}[t!]
    \centering
    \includegraphics[width=0.9\linewidth,trim=1.5cm 0.2cm 0.01cm 1.5cm]{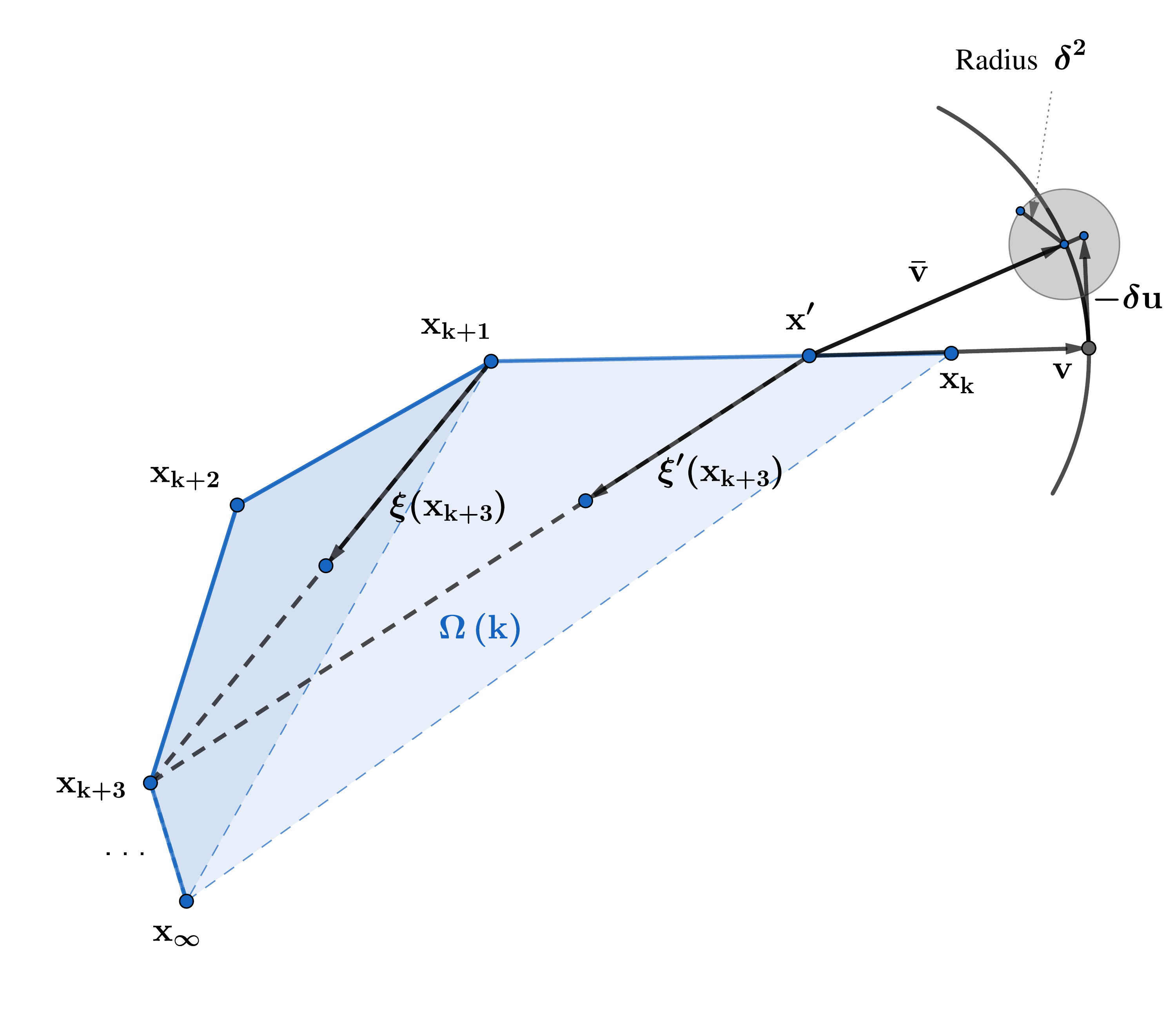}
    \caption{Illustration of some entities appearing in the proof of Theorem~\ref{thm:QC}.}
    \label{fig:selfContractedIllustration}
\end{figure}
Also denote $\vvec^\perp$ as the orthogonal hyperplane to $\vvec$. Define $\delta = (1/27)^d$. We will prove that for some unit vector $\uvec \in \vvec^\perp$, the unit vector 
\begin{equation}\Bar{\vvec} = \frac{\vvec - \delta \uvec}{\enorm{\vvec - \delta \uvec}}, \label{eqn:vbar}
\end{equation}
has at most a fixed negative inner product with any unit vector $\xi'(\x_t)$, namely: 
\begin{equation}
\label{eqn:negativeInner}
\inner{\Bar{\vvec}, \xi'(\x_t)} \leq -\delta^2. 
\end{equation}
As we see later, this will allow us to bound the mean width integral for $\convexClosure(k+1)$, in terms of the mean width integral for $\convexClosure(k)$ in order to prove Equation~\eqref{eqn:recurrence}. Note that since $\delta$ is a small constant, $\Bar{\vvec}$ is very close to $\vvec$. To motivate the truth of \eqref{eqn:negativeInner}, note the following fact about $\vvec$ itself. For all $\y \in \futurePath(k+2)$,
\begin{equation}
\label{eqn:step1}
\inner{\vvec, \y - \x'} \leq -\frac{\enorm{\x_k - \x_{k+1}}}{6} < 0. \qquad \roundbrack{\equiv \inner{\vvec, \xi'(\y)} \leq -\frac{\enorm{\x_k - \x_{k+1}}}{6\enorm{\y - \x'}} < 0.}
\end{equation}
This is true by the self-contractedness property. To see this, think of $\x'$ as the origin
, and $\x_k$ as the `positive' direction. Since $\enorm{\y - \x_{k+1}} \leq \enorm{\y - \x_k}$, the projection of $\y$ onto the segment $[\x_k, \x_{k+1}]$ lies towards the negative side and farther than the mid-point. However, $\x'$ lies towards $\x_k$, and hence the positive side. Thus, the projection of $\y - \x'$ points in the opposite direction as $\vvec$ and has a magnitude at least the distance between $\x'$ and the mid-point: $\enorm{\x_k - \x_{k+1}}/6$. Thus,
\[
\inner{\vvec, \y - \x'} \leq -\enorm{\x_k - \x_{k+1}}/6.
\]
The algebra above suggests that if the projection of $\y$ onto the segment $[\x_k, \x_{k+1}]$ is only slightly negative, then $\enorm{\y - \x'}$ is large and most of the component is in the $\vvec^\perp$ direction. In this case, we need to only find a small vector (namely $\delta \uvec$) in the perpendicular direction that has negative inner product with $\y - \x'$. This motivates the definition of $\Bar{\vvec}$ in \eqref{eqn:vbar}. 
Note however that this vector $\delta \uvec$ needs to uniformly have a negative inner product with respect to every $\xi'(\y)$. To show that this is possible, we will use the self-contractedness property to argue that all the $\xi'(\y)$ lie nearly in a hemisphere. In what follows, we formalize these ideas. 

First let us divide the unit vectors $\xi'(\x_t)$ into two sets: points that have a small component in the direction opposite to $\vvec$ and points that lie mostly in $\vvec^\perp$. Define, 
\[\futurePath' := \{\y \in \futurePath(k+1): \inner{\vvec, \xi'(\y)} \leq -2\delta\}.
\]
Note that for $\y \in \futurePath'$, 
\begin{align*}
\inner{\Bar{\vvec}, \xi'(\y)} &= \inner{\frac{\vvec - \delta \uvec}{\enorm{\vvec - \delta \uvec}}, \xi'(\y)} 
\\&\leq \frac{-2\delta}{\sqrt{1 + \delta^2}} + \delta &\text{(by Cauchy-Schwarz since $\enorm{\uvec} = \enorm{\vvec} = 1$)}
\\ &\leq -\delta^2,
\end{align*}
since $0 \leq \delta \leq 1/27$. Thus Equation~\eqref{eqn:negativeInner} is satisfied for $\y\in\futurePath'$. For $\y \in \futurePath \setminus \futurePath'$, we note three properties that will be used later. First, from \eqref{eqn:step1}, and the definition of $\futurePath'$
\begin{align}
    -2\delta < \inner{\vvec, \xi'(\y)} &\leq -\frac{\x_k - \x_{k+1}}{6\enorm{\y - \x'}}.  \nonumber
\end{align}
On cross multiplying
\begin{align}
 \enorm{
    \y - \x'} &> \frac{\enorm{\x_k - \x_{k+1}}}{12\delta}, \label{eqn:step2.1}
\end{align}
which says that $\enorm{\y - \x'}$ is large as motivated earlier. Thus, 
\begin{equation}
    \enorm{\y - \x} \geq \enorm{\y - \x'} - \enorm{\x' - \x} \geq \roundbrack{\frac{1}{12\delta} - \frac{2}{3}}\enorm{\x - \x'} = \frac{1 - 8\delta\enorm{\x - \x'}}{12\delta} \label{eqn:step3.1}
\end{equation}
Also, $-2\delta < \inner{\vvec, \xi'(\y)} < 0$, so that, 
\begin{equation}
    \abs{\inner{\vvec, \xi'(\y)}} < 2\delta.\label{eqn:step2.2}
\end{equation}
Let the component of $\xi'(\y)$ in $\vvec^\perp$ be $\xi'_\perp(\y)$. Thus, 
\begin{equation}
\esqnorm{\xi'_\perp(\y)} = 1 - (\inner{\vvec, \xi'(\y)})^2 \geq 1 - 4\delta^2. \label{eqn:step2.3}
\end{equation}

Using these facts, the goal now will be to show that for all $\y \in \futurePath \setminus \futurePath'$, $\xi'(\y)$'s are almost in a hemisphere so that there we can find a common vector $\uvec$ as desired which has a high negative inner product for \eqref{eqn:negativeInner}. The idea is that because of \eqref{eqn:step2.3}, $\xi'_\perp(\y)$ is almost perpendicular to $\vvec$ and so $\xi'(\y)$ looks almost like $\xi(\y)$. Observe that for $\xi(\y)$ the hemisphere property is easy to see: for all $\y, \z \in \futurePath(k+1) \setminus \futurePath'$ such that $\z$ comes after $\y$ in the path, using self-contractedness (Lemma~\ref{lemma:QCGD}), we know that $\enorm{\y - \z} \leq \enorm{\x_{k+1} - \z}$, and thus in the triangle formed by $\x_{k+1}, \y, \z$, the segment between $\y$ and $\z$ is not the longest side. This means that the angle at $\x_{k+1}$ is acute so that \begin{equation}
    \inner{\xi(\y), \xi(\z)} \geq 0. \label{eqn:quadrantLemma}
\end{equation} 
Hence all vectors $\{\xi(\y): \y \in \futurePath(k+1) \setminus \futurePath' \}$ belong in the same hemisphere. 
To show a similar result for $\xi'(\y)$, we first bound $\enorm{\xi(\y) - \xi'(\y)}$:
\begin{align*}
    \enorm{\xi(\y) - \xi'(\y)} &= \enorm{\frac{\y - \x_{k+1}}{\enorm{\y - \x_{k+1}}} - \frac{\y - \x'}{\enorm{\y - \x'}}}
    \\ &\leq \enorm{\frac{\y - \x_{k+1}}{\enorm{\y - \x_{k+1}}} - \frac{\y - \x'}{\enorm{\y - \x_{k+1}}}} + \enorm{\frac{\y - \x'}{\enorm{\y - \x_{k+1}}} - \frac{\y - \x'}{\enorm{\y - \x'}}}
    \\ &= \frac{\enorm{\x_{k+1} - \x'}}{\enorm{\y - \x_{k+1}}} + \frac{\enorm{\y - \x'}- \enorm{\y - \x_{k+1}}}{\enorm{\y - \x_{k+1}}}
    \\ &\leq \frac{\enorm{\x_{k+1} - \x'}}{\enorm{\y - \x_{k+1}}} + \frac{ \enorm{\x' - \x_{k+1}}}{\enorm{\y - \x_{k+1}}}
    \\ &= \frac{2\enorm{\x_{k+1} - \x'}}{\enorm{\y - \x_{k+1}}} = \frac{4\enorm{\x_{k+1} - \x'}}{3\enorm{\y - \x}}
    \\ &\textleq{\eqref{eqn:step3.1}} \frac{16\delta}{ 1- 8\delta}
    \\ &\leq 32\delta,
\end{align*}
since $\delta \leq 1/27$. Now we consider any $\y, \z \in \futurePath(k+1) \setminus \futurePath'$. Define $\delta_\y := \xi(\y) - \xi'(\y)$ and $\delta_\z := \xi(\z) - \xi'(\z)$, then,
\begin{align*}
    0 \leq \inner{\xi(\y), \xi(\z)} &=  \inner{\xi'(\y) + \delta_\y, \xi(\z)} 
    \\ &\leq \inner{\xi'(\y), \xi(\z)} + \enorm{\delta_\y}  &\text{(Cauchy-Schwarz)}
    \\ &\leq  \inner{\xi'(\y), \xi'(\z) + \delta_\z} + 32\delta 
    \\ &\leq \inner{\xi'(\y), \xi'(\z)} + 64\delta.
\end{align*}
Thus $\inner{\xi'(\y), \xi'(\z)} \geq - 64\delta$. Further, from \eqref{eqn:step2.2}
\begin{align*}
    \inner{\xi'_\perp(\y), \xi'_\perp(\z)} &= \inner{\xi'(\y), \xi'(\z)} - (\inner{\vvec, \xi'(\y)})(\inner{\vvec, \xi'(\z)})
    \\ &\geq -64\delta - 4\delta^2 
    \\ &\geq -65\delta, 
\end{align*}
since $\delta \leq 1/27$.
From \eqref{eqn:step2.3}, $\enorm{\xi'_\perp(\y)}\cdot \enorm{\xi'_\perp(\y)} \geq 1 - 4\delta^2$, so that the set of vectors $S = \{\widehat{\xi'_\perp(\y)}: \y \in \futurePath(k+1) \setminus \futurePath'\}$ ($\widehat{\mathbf{a}}$ denotes the unit vector in the direction of $\mathbb{\mathbf{a}}$) satisfies: for all $\y, \z \in S$, 
\begin{equation}
\inner{\y, \z} \geq \frac{-65\delta}{1 - 4\delta^2} \geq -66\delta = -66\roundbrack{\frac{1}{27}}^d \geq -\roundbrack{\frac{1}{3}}^d, \label{eqn:hemisphereHypothesis}
\end{equation}
for $d \geq 2$. As motivated earlier, this is a set of vectors that is almost in a hemisphere. At this point, we invoke the following lemma proved by~\citet{daniilidis2015rectifiability}. 

\begin{lemma}[Lemma 3.2,~\citep{daniilidis2015rectifiability}]
Let $\Sigma \subset \shell^{d-1}$ be a set satisfying
\[
\inner{\x, \y} \geq -\roundbrack{\frac{1}{3}}^{d+1} \text{ for all }\x, \y \in \Sigma.
\]
Then there exists a $\uvec \in \shell^{d-1}$ such that 
\[
\inner{\uvec, \y} \geq \roundbrack{\frac{1}{3}}^{2d+1} \text{ for all } \y \in \Sigma.
\]
\end{lemma}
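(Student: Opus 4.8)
The plan is to reduce the statement to a lower bound on the distance from the origin to the convex hull of $\Sigma$, and then to read off $\uvec$ from a nearest-point/separation argument. First I would set $C := \mathrm{conv}(\Sigma)$ and $r := \inf_{\z \in C}\enorm{\z}$. The vector I will return is $\uvec := \z^{*}/r$, where $\z^{*}$ is the point of the closed, bounded (hence compact) set $\overline{C} \subseteq \Real^{d}$ nearest to the origin, so $\enorm{\z^{*}} = \dist{0}{\overline{C}} = \dist{0}{C} = r$; this is well defined once we know $r > 0$, which the bound below will confirm. By the variational characterization of the metric projection of the origin onto the closed convex set $\overline{C}$, $\inner{\z^{*}, \w - \z^{*}} \geq 0$, i.e. $\inner{\z^{*}, \w} \geq \enorm{\z^{*}}^{2} = r^{2}$, for every $\w \in \overline{C}$. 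Dividing by $r$ and specializing to $\w = \y$ for $\y \in \Sigma \subseteq \overline{C}$ gives $\inner{\uvec, \y} \geq r$, so the whole statement comes down to proving $r \geq (1/3)^{2d+1}$.

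For that bound I would take an arbitrary $\z \in C$ and invoke Carath\'eodory's theorem to write $\z = \sum_{i=1}^{m}\lambda_{i}\y_{i}$ with $m \leq d+1$, $\y_{i} \in \Sigma$, $\lambda_{i} \geq 0$, $\sum_{i}\lambda_{i} = 1$. Expanding the squared norm and using $\enorm{\y_{i}} = 1$, the hypothesis $\inner{\y_{i},\y_{j}} \geq -(1/3)^{d+1}$ together with $\lambda_{i}\lambda_{j} \geq 0$, and $\sum_{i}\lambda_{i}^{2} \geq (\sum_{i}\lambda_{i})^{2}/m \geq 1/(d+1)$ from Cauchy--Schwarz, one gets
\[
\esqnorm{\z} \;=\; \sum_{i}\lambda_{i}^{2} + \sum_{i \neq j}\lambda_{i}\lambda_{j}\inner{\y_{i},\y_{j}} \;\geq\; \sum_{i}\lambda_{i}^{2} - \Big(\tfrac{1}{3}\Big)^{d+1}\Big(1 - \sum_{i}\lambda_{i}^{2}\Big) \;\geq\; \frac{1}{d+1} - \Big(\tfrac{1}{3}\Big)^{d+1}.
\]
Since this holds for every $\z \in C$, we obtain $r^{2} \geq \frac{1}{d+1} - (1/3)^{d+1}$, and it remains to check the elementary inequality $\frac{1}{d+1} - (1/3)^{d+1} \geq (1/3)^{4d+2}$ for all $d \geq 1$ (for instance via $\frac{1}{d+1} \geq 2(1/3)^{d+1}$, which makes the left side at least $\frac{1}{2(d+1)}$, far above $(1/3)^{4d+2}$). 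This yields $r \geq (1/3)^{2d+1}$, in particular $r > 0$, closing the loop.

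The structure here is essentially forced, so I do not expect a serious obstacle; the one point to be careful about is that $\Sigma$ need not be closed, so $C$ need not be closed and the infimum $r$ may not be attained inside $C$. This is handled by passing to $\overline{C}$, which is compact and on which the norm attains its minimum, while the pointwise lower bound on $\enorm{\z}$ derived above (valid for all $\z \in C$) already controls $r = \dist{0}{C} = \dist{0}{\overline{C}}$. The only genuinely computational ingredient is the closing numerical inequality comparing a $1/d$-type term with a $3^{-d}$-type term, which is routine.
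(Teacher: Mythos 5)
Your proof is correct, and it takes a genuinely different route from what the paper alludes to. The paper does not actually reproduce a proof of this lemma; it cites Lemma~3.2 of \citet{daniilidis2015rectifiability} and remarks only that it ``uses a packing argument,'' i.e., a combinatorial covering argument on the sphere $\shell^{d-1}$. You instead reduce the claim to lower-bounding $r := \dist{0}{\mathrm{conv}(\Sigma)}$ via Carath\'eodory's theorem and then obtain $\uvec$ from the variational inequality of the Euclidean projection of the origin onto $\overline{\mathrm{conv}(\Sigma)}$. Your handling of non-closedness (passing to the closure and using $\dist{0}{C} = \dist{0}{\overline{C}}$) is correct, and the chain $\esqnorm{\z} \geq \sum_i \lambda_i^2 - (1/3)^{d+1}\bigl(1 - \sum_i\lambda_i^2\bigr) \geq \tfrac{1}{d+1} - (1/3)^{d+1}$ is sound, with the closing numerical inequality routine as you say. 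Notably, your route yields a quantitatively \emph{stronger} intermediate bound, $\inner{\uvec,\y} \geq r \geq \sqrt{\tfrac{1}{d+1} - (1/3)^{d+1}} = \Omega(1/\sqrt{d})$, which is polynomial in $d$ rather than the exponentially small $(1/3)^{2d+1}$ in the statement; the latter is simply the looser constant that \citet{daniilidis2015rectifiability} chose to record and that this paper's downstream argument consumes. The trade-off between the approaches: the packing argument is self-contained spherical geometry in the spirit of the surrounding self-contracted-curve analysis, while your Carath\'eodory/projection argument is shorter, leans on off-the-shelf convexity tools, and makes the slack in the stated constant explicit.
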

The proof of the above lemma uses a packing argument. We use the lemma for the set $S$ identified above (Equation~\eqref{eqn:hemisphereHypothesis}). Note that all vectors in $S$ lie in $\shell^{d-1} \cap \vvec^\perp$, which can be identified as a shell in $d-1$ dimensions, homomorphic to $\shell^{d-2}$. Thus there exists a vector $\uvec \in \shell^{d-1} \cap \vvec$ such that for all $\y \in \futurePath(k+1) \setminus \futurePath'$,
\begin{equation}
    \inner{\uvec, \widehat{\xi'_\perp(\y)}} \geq \roundbrack{\frac{1}{3}}^{2(d-1)+1}  = \roundbrack{\frac{1}{3}}^{2d-1}. \label{eqn:hemisphereApplication}
\end{equation}

We pick this $\uvec$ to define $\Bar{\vvec}$ in Equation~\eqref{eqn:vbar}. Thus, for $\y \in \futurePath(k+1) \setminus \futurePath'$, 
\begin{align*}
    \inner{\vvec - \delta \uvec, \xi'(\y)} &\textleq{\eqref{eqn:step1}} -\delta \inner{ \uvec, \xi'(\y)}
    \\ &= -\delta\inner{ \uvec, \xi'_\perp(\y)}
    \\ &= -\delta\enorm{\xi'_\perp(\y)}\inner{ \uvec, \widehat{\xi'_\perp(\y)}}
    \\ &\textleq{\eqref{eqn:step2.3}} -\delta\sqrt{1 - 4\delta^2}\inner{ \uvec, \widehat{\xi'_\perp(\y)}}
    \\ &\textleq{\eqref{eqn:hemisphereApplication}} -\delta \sqrt{1 - 4\delta^2}\roundbrack{\frac{1}{3}}^{2d-1}
    \\ &\leq -\delta \roundbrack{\frac{1}{3}}^{2d},
\end{align*}
since $\delta \leq 1/27$. Finally, $\enorm{\vvec - \delta\uvec} = \sqrt{1 + \delta^2} \leq 3$ so that 
\[
\inner{\Bar{\vvec}, \xi'(\y)} = \frac{\inner{\vvec - \delta \uvec, \xi'(\y)}}{\enorm{\vvec - \delta \uvec}} \leq \frac{-\delta \roundbrack{\frac{1}{3}}^{2d}}{3} \leq -\delta \roundbrack{\frac{1}{3}}^{3d} = - \delta^2, 
\]
which gives us \eqref{eqn:negativeInner} as needed. 

Finally, we use this identified $\Bar{\vvec}$ to prove the recurrence ~\eqref{eqn:recurrence}. Consider the part of the shell $\shell^{d-1}$ $\delta^2$-close to $\Bar{\vvec}$:
\[
\shell' := \{\vvec' \in \shell^{d-1}: \enorm{\vvec' - \Bar{\vvec}} \leq \delta^2\},
\]
then
\begin{align}
    \sigma_d\width(\convexClosure(k+1)) &= \int_{\uvec \in \shell^{d-1}} \nonumber \ell(\Pi_\uvec(\convexClosure(k+1)))\ d\uvec
    \\ &= \int_{\uvec \in \shell'} \ell(\Pi_\uvec(\convexClosure(k+1))) d\uvec + \int_{\uvec \notin \shell'} \ell(\Pi_\uvec(\convexClosure(k+1)))\ d\uvec \nonumber
    \\ &\leq \int_{\uvec \in \shell'} \ell(\Pi_\uvec(\convexClosure(k+1))) d\uvec + \int_{\uvec \notin \shell'} \ell(\Pi_\uvec(\convexClosure(k))) \ d\uvec, \label{eqn:integralSplit} 
\end{align}  
since $\convexClosure(k+1) \subset \convexClosure(k)$. For the first integral above, note that for $\uvec \in \shell'$: 
\begin{enumerate}
    \item Inner product with the vector $\x_{k} - \x'$ is high:
    \begin{align*}
        \inner{\x_{k} - \x', \uvec} &= \enorm{\x_{k} - \x'} \inner{\vvec, \uvec} 
        \\ &= \roundbrack{\frac{\enorm{\x_{k} - \x_{k+1}}}{3}} \inner{\vvec, \uvec} 
        \\ &\geq \roundbrack{\frac{\enorm{\x_{k} - \x_{k+1}}}{3}}(1 - \delta - \delta^2) 
        \\ &\geq \frac{\enorm{\x_{k} - \x_{k+1}}}{4}.
    \end{align*}
    since $1 - \delta - \delta^2 \geq 3/4$ for $\delta \leq 1/27$.
    \item Inner product with the vector $\y - \x'$ for every $\y \in \futurePath(k+1)$ is non-positive: 
    \begin{align*}
        \inner{\y - \x', \uvec} &= \enorm{\y - \x'}\inner{\xi'(\y), \uvec} 
        \\ &= \enorm{\y - \x'}\inner{\xi'(\y), \vvec} + \enorm{\y - \x'}\inner{\xi'(\y), \vvec - \uvec}
        \\ &\textleq{\eqref{eqn:negativeInner}} \enorm{\y - \x'}(-\delta^2) + \enorm{\y - \x'}\inner{\xi'(\y), \vvec - \uvec}
        \\ &\leq -\enorm{\y - \x'}\delta^2 + \enorm{\y - \x'}\enorm{\vvec - \uvec}
        \\ &\leq -\enorm{\y - \x'}\delta^2 + \enorm{\y - \x'}(\delta^2)
        \\ &\leq 0.
    \end{align*}
    Indeed, this means that for any point in the convex hull of $\futurePath(k+1)$ the same is true---that is for $\y \in \convexClosure(k+1)$, $\inner{\y - \x', \uvec} \leq 0$. 
\end{enumerate}
Using these two facts, we have the following lower bound on the length of $\Pi_\uvec(\convexClosure(k))$ for any $\uvec \in \shell'$, 
\begin{align*}
    \ell(\Pi_\uvec(\convexClosure(k))) &\geq \inner{\x_k - \x', \uvec} + \ell(\Pi_\uvec(\convexClosure(k+1)))
    \\ &\geq \frac{\enorm{\x_{k} - \x_{k+1}}}{4} + \ell(\Pi_\uvec(\convexClosure(k+1))). 
\end{align*}
Thus, 
\[
    \int_{\uvec \in \shell'} \ell(\Pi_\uvec(\convexClosure(k + 1))) \ d\uvec \leq \int_{\uvec \in \shell'} \ell(\Pi_\uvec(\convexClosure(k))) \ d\uvec - \int_{\uvec \in \shell'} \roundbrack{\frac{\enorm{\x_k - \x_{k+1}}}{4}} \ d\uvec 
\]
\begin{figure}[t!]
    \centering
    \includegraphics[width=0.5\linewidth,trim=3cm 1.5cm 5cm 7.5cm]{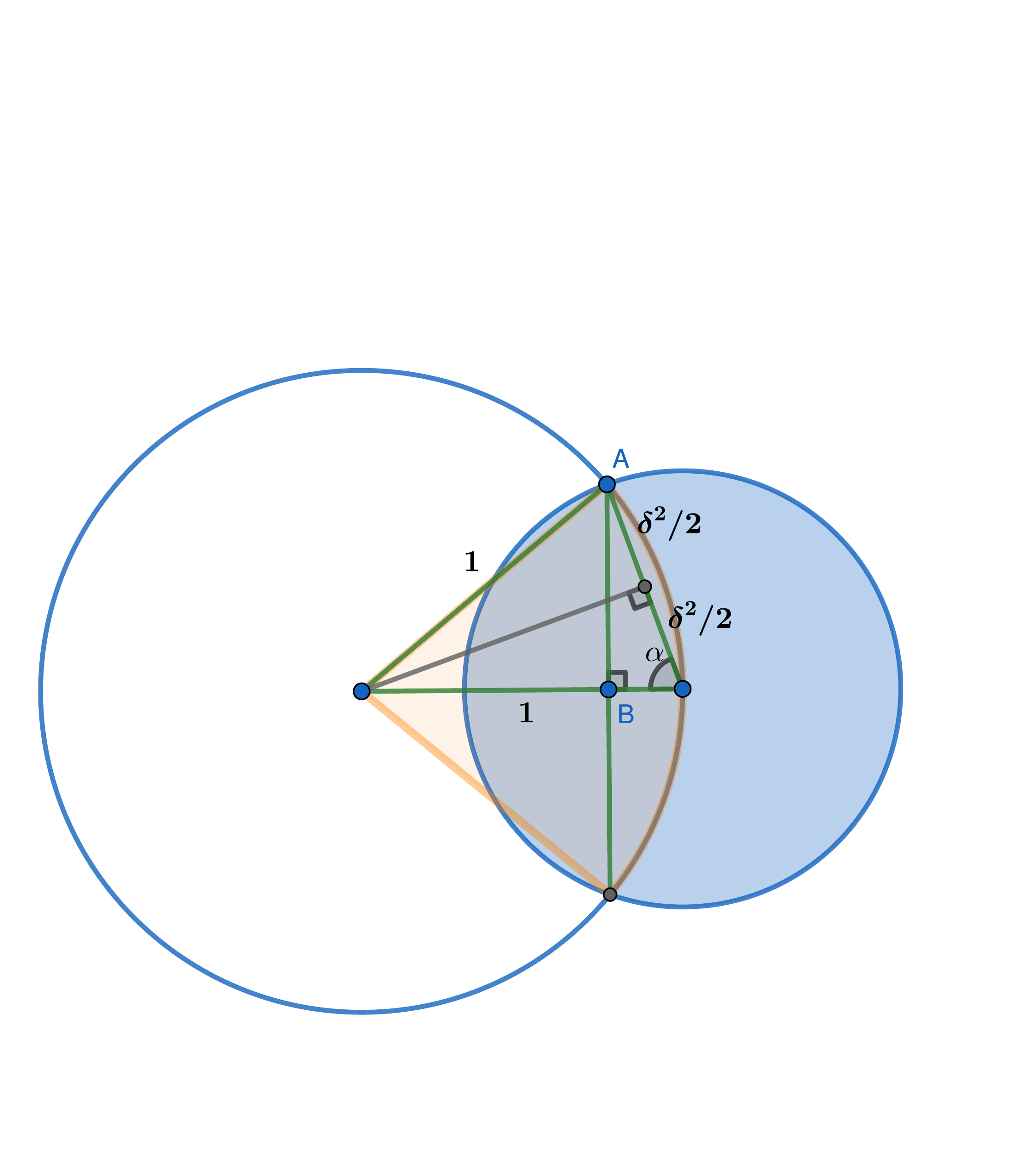}
    \caption{Two dimensional illustration of the intersection of a shell of radius $1$ (unshaded circle above) with a shell of radius $\delta^2$ (shaded circle above).}
    \label{fig:arcSurfaceArea}
\end{figure}
Continuing from \eqref{eqn:integralSplit}, and substituting the above inequality,
\begin{align*}
    \sigma_d\width(\convexClosure(k +1)) &\leq -\int_{\uvec \in \shell'} \roundbrack{\frac{\enorm{\x_k - \x_{k+1}}}{4}} \ d\uvec + \int_{\uvec \in \shell^{d-1}} \ell(\Pi_\uvec(\convexClosure(k)))\ d\uvec 
    \\ &= -\frac{\enorm{\x_k - \x_{k+1}}\cdot\text{Volume}(\shell')}{4} + \int_{\uvec \in \shell^{d-1}} \ell(\Pi_\uvec(\convexClosure(k)))\ d\uvec.
\end{align*}
On simplifying, this leads to the bound
\begin{align*}
 \width(\convexClosure(k+1)) &\leq - \underbrace{\roundbrack{\frac{\text{Volume}(\shell')}{4 \sigma_d}}}_{=:\ \increment, \text{ as needed for \eqref{eqn:recurrence}}}\enorm{\x_k - \x_{k+1}} + W(\convexClosure(k)),
\end{align*}
where Volume$(\cdot)$ is defined with respect to the Lebesgue measure in $d-1$ dimensions. To compute $\epsilon$, we find this volume. Note that $\shell'$ is a sector whose boundary is the intersection between $\shell^{d-1}$ and a shell of radius $\delta^2$ with center on the surface of $\shell^{d-1}$. This boundary defines a shell in $(d-1)$ dimensions. See Figure~\ref{fig:arcSurfaceArea}. In two dimensions the intersection is just two points, but for general $d$-dimensions, it is a shell in $(d-1)$-dimensions. The radius of this shell $\gamma$ is the length of the segment AB which can be calculated with simple trigonometric calculations. 
\[\gamma = \text{length(AB)} = \delta^2\sin\alpha = \delta^2\sqrt{1 - \cos^2\alpha} = \delta^2\roundbrack{1 - \frac{\delta^4}{4}} \geq \roundbrack{\frac{1}{28}}^{2d}.
\]
The volume of the sphere defined by this $(d-1)$-dimensional shell lower bounds the volume of $\shell'$. To illustrate in two dimensions (see Figure~\ref{fig:arcSurfaceArea}), think of this volume as the length of the orange arc that lies in the shaded circle. This length is lower bounded by the length of the diameter $2\gamma$. In general, for $d$-dimensions, this would be the volume of the $(d-1)$-dimensional sphere of radius $\gamma$. Using the formula of the volume of a $(d-1)$-dimensional sphere ($\Gamma$ below denotes the gamma function), 
\[
\text{Volume}(\shell') \geq \frac{((\pi\gamma^2)^{d-1/2} }{\Gamma\roundbrack{\frac{d-1}{2} + 1} }. 
\]
Also $\sigma_d$ is the volume of a $d$-dimensional shell, given by
\[
\sigma_d = \frac{d\pi^{d/2}}{\Gamma\roundbrack{\frac{d}{2} + 1}}.
\]
Thus,
\[
\increment = \frac{\text{Volume}(\shell')}{4\sigma_d} \geq \frac{\gamma^{d-1} \cdot\Gamma\roundbrack{\frac{d}{2} + 1}}{4\sqrt{\pi}\cdot\Gamma\roundbrack{\frac{d-1}{2} + 1}} \geq \gamma^d = \roundbrack{\frac{1}{28}}^{2d^2},
\]
as was needed to be shown to prove the recurrence \eqref{eqn:recurrence}.

\subsubsection{Proof of Equation~\eqref{eqn:goal2}}

We will show that for $\eta \leq 1/2L\sqrt{d}$, Equation~\eqref{eqn:recurrence} is true with $\epsilon = 1/{\improvedPlaceholder}$. We use some of the notation introduced in Section~\ref{sec:proofGoal1}. 

The step-size constraint ensures that gradients at two consecutive iterates have a high inner product (or small angle). For some $k \geq 0$, we have by \LG
\begin{align*}
    &\enorm{\nabla f(\x_k) - \nabla f(\x_{k+1})}  \leq L \enorm{\x_k - \x_{k+1}} = \eta L \enorm{\nabla f(\x_k)}
\end{align*}
Squaring both sides and rearranging leads to 
\begin{align*}
 &\esqnorm{\nabla f(\x_{k+1})} + (1 - \eta^2L^2) \esqnorm{\nabla f(\x_{k})}   \leq 2\inner{\nabla f(\x_{k+1}), \nabla f(\x_{k})} = 2\enorm{\nabla f(\x_k)}\enorm{\nabla f(\x_{k+1})} \cos{\theta},
\end{align*}
where $\theta$ is the angle between $\nabla f(\x_k)$ and $\nabla f(\x_{k+1})$. Further observe that
\begin{align*}
        \esqnorm{\nabla f(\x_{k+1})} + (1 - \eta^2L^2) \esqnorm{\nabla f(\x_{k})} \geq 2\sqrt{(1 - \eta^2L^2)}\enorm{\nabla f(\x_{k+1})} \enorm{\nabla f(\x_{k})}.
\end{align*}
Putting it together, we obtain
\begin{align*}
    &\sqrt{1 - \eta^2L^2} \leq \cos{\theta}, \nonumber
\end{align*}
which leads to the following dimension dependent lower bound 
\begin{align}
\sqrt{1 - (1/4d)} \leq \cos{\theta}\label{eqn:gradInner}.
\end{align}
As shown in Equation~\eqref{eqn:quadrantLemma}, $\inner{\xi(\y), \xi(\z)} \geq 0$ for $\y, \z \in \futurePath(k+2)$. Using this fact, we want to show that there exists a single unit vector $w$ such that $\inner{\w, \xi(\y)}$ is large for all $\y \in \futurePath(k+2)$.  To obtain this, we use a result due to~\citep[Theorem 1]{santalo1946convex}. To apply Santaló's theorem, consider the set $K = \{\xi(\y): \y \in \futurePath(k+2)\} \subset \shell^{d-1}$ (thus $n = d-1$). By Equation~\eqref{eqn:quadrantLemma}, the spherical diameter $D$ of $K$ has cosine at least $0$. We wish to lower bound the cosine of the spherical radius $R$. The result of Santaló has three case-wise conclusions, but each of them assert that if $\cos{D} \geq 0$, then 
\begin{align*}
    \frac{d\cos^2{R} - 1}{(\text{positive quantity})} \geq \cos{D} \geq 0.
\end{align*} 
{\it A fortiori} this implies that $d\cos^2{R} - 1 \geq 0$ or $\cos{R} \geq \sqrt{1/d}$. Using the definition of spherical radius, we conclude that there exists a $\w \in \shell^{d-1}$ such that for all $\y \in \futurePath(k+2)$
\begin{equation}
    \inner{\w, \xi(\y)} \geq \sqrt{1/d}.
    \label{positiveInner}    
\end{equation}
For $\y = \x_{k+2} = \x_{k+1} - \eta \nabla f(\x_{k+1})$, we have $\inner{\w, \nabla f(\x_{k+1}} \geq \sqrt{1/d}$. We use this fact to show that $\inner{\w, \xi(\x_k)}$ is negative, as follows. Let $\angle(\uvec, \vvec)$ denote $\arccos(\inner{\widehat{\uvec}, \widehat{\vvec}})$, where $\widehat\uvec$ and $\widehat\vvec$ are unit vectors in the direction of $\uvec$ and $\vvec$ respectively (so that $\arccos$ of their inner product gives us the angle). Then
\begin{align}
    \inner{\w, \xi(\x_k)} &= \cos(\angle(\w, \x_{k} - \x_{k+1}))
    \\ &= - \cos(\angle(\w, \nabla f(\x_k))) \nonumber
    \\ &\leq - \cos(\angle(\w, \nabla f(\x_{k+1})) + \angle(\nabla f(\x_k), \nabla f(\x_{k+1}))) \nonumber
    \\ &\leq - \cos(\angle(\w, \nabla f(\x_{k+1})))\cos(\angle(\nabla f(\x_k), \nabla f(\x_{k+1})))  \nonumber
    \\ & \qquad \qquad + \sin(\angle(\w, \nabla f(\x_{k+1})))\sin(\angle(\nabla f(\x_k), \nabla f(\x_{k+1})))  \nonumber
    \\ &\leq -\sqrt{1/d}\sqrt{1 - (1/4d)} + \sqrt{1-(1/d)}\sqrt{1/4d}  \nonumber
    \\ &\leq -\sqrt{1/4d}, \label{negativeInner}
\end{align}
for any $d \geq 2$. Define $\shell^\perp$ to be a shell in $d-1$ dimensions of unit vectors orthogonal to $\w$, ie $\shell^\perp = \{\uvec : \enorm{\uvec} = 1, \inner{\uvec, \w} = 0\}$. Now consider a set of unit vectors \emph{close} to $\w$ given by 
\[
\shell' = \{\uvec = \lambda \w + \sqrt{1 - \lambda^2}\ \w^\perp: \abs{\lambda} \in [\sqrt{1 - (1/4d)}, 1], \w^\perp \in \shell^\perp]\}. 
\]
We will relate the mean width integral of $\convexClosure(k)$ and $\convexClosure(k+1)$ by splitting across $\shell'$ and its complement. 

\begin{align}
    \sigma_d\width(\convexClosure(k+1)) &= \int_{\uvec \in \shell^{d-1}} \nonumber \ell(\Pi_\uvec(\convexClosure(k+1)))\ d\uvec
    \\ &= \int_{\uvec \in \shell'} \ell(\Pi_\uvec(\convexClosure(k+1)))\ d\uvec + \int_{\uvec \notin \shell'} \ell(\Pi_\uvec(\convexClosure(k+1)))\ d\uvec \nonumber
    \\ &\leq \int_{\uvec \in \shell'} \ell(\Pi_\uvec(\convexClosure(k+1)))\ d\uvec + \int_{\uvec \notin \shell'} \ell(\Pi_\uvec(\convexClosure(k))) \ d\uvec, \label{eqn:splitIntegral} 
\end{align}  
since $\convexClosure(k+1) \subset \convexClosure(k)$. Thus we reduce the second integral to the corresponding integral in the mean width calculation of $\convexClosure(k)$. The first part of the integral leads to a negative term which we upper bound to obtain Equation~\eqref{eqn:recurrence}. Pick any $\uvec = \lambda \w + \sqrt{1 - \lambda^2}\ \w^\perp \in \shell'$ and $\y \in \Gamma^{k+2}$. Suppose $\lambda \in [\sqrt{1 - (1/4d)}, 1]$, $\inner{\w, \xi(\y)} \geq \sqrt{1 - \lambda^2}$. Thus $\inner{\uvec, \xi(\y)} \geq 0$. Similarly, $\inner{\w, \xi(\x_k)} \leq -\sqrt{1 - \lambda^2}$, and so $\inner{\uvec, \xi(\x_k)} \leq 0$. Thus in such directions $u$, 
\[
\ell(\Pi_\uvec(\convexClosure(k))) - \ell(\Pi_\uvec(\convexClosure(k+1))) \geq \abs{\inner{\x_k - \x_{k+1}, \uvec}}.
\]
This is also true for the corresponding $-\lambda \in [\sqrt{1 - (1/4d)}, 1]$ (since $\ell(\Pi_\uvec(\cdot)) = \ell(\Pi_{-\uvec}(\cdot))$). Thus
\begin{align*}
    \int_{\uvec \in \shell'} \ell(\Pi_\uvec(\convexClosure(k)))\ d\uvec &\geq \int_{\uvec \in \shell'} \ell(\Pi_\uvec(\convexClosure(k+1)))\ d\uvec + \underbrace{\int_{\uvec \in \shell'}  \abs{\inner{\x_k - \x_{k+1}, \uvec}} d\uvec}_Z.
\end{align*}
We now lower bound the second term. To do so, we perform the integration over all values of $\lambda$ and $\vvec$ that determine $\uvec$. Note that if $\uvec = \lambda \w + \sqrt{1 - \lambda^2} \vvec$, $d\uvec = (\sqrt{1 - \lambda^2})^{d-2} d\vvec d\lambda$. Then
\begin{align*}
    Z &= 2\int_{\lambda \in [\sqrt{1 - (1/4d)}, 1]}\int_{\vvec \in \shell^\perp}  (\sqrt{1 - \lambda^2})^{d-2} \inner{\x_k - \x_{k+1}, \uvec}\ d\vvec d\lambda
    \\ &= 2\int_{\lambda \in [\sqrt{1 - (1/4d)}, 1]}\int_{\vvec \in \shell^\perp} (\sqrt{1 - \lambda^2})^{d-2} (\lambda \inner{\x_{k} - \x_{k+1}, \w} + \sqrt{1 - \lambda^2}\inner{\x_{k} - \x_{k+1}, \vvec})\ d\vvec d\lambda
    \\ &= 2\int_{\lambda \in [\sqrt{1 - (1/4d)}, 1]}\int_{\vvec \in \shell^\perp}   (\sqrt{1 - \lambda^2})^{d-2} (\lambda \inner{\x_{k} - \x_{k+1}, \w}) \ d\vvec d\lambda \qquad \qquad \roundbrack{\because \int_{\vvec \in \shell^\perp} \vvec d\vvec = 0}
    \\ &\geq  \roundbrack{\frac{2\sigma_{d-1}\enorm{\x_k - \x_{k+1}}}{\sqrt{4d}}} \int_{\lambda \in [\sqrt{1 - (1/4d)}, 1]} \lambda (\sqrt{1 - \lambda^2})^{d-2} 
     d\lambda
    \\ &= \frac{2\sigma_{d-1}\enorm{\x_k - \x_{k+1}}}{(\sqrt{4d})^d}.
\end{align*}
Using this value of $Z$ with Equation~\eqref{eqn:splitIntegral}, we obtain 
\begin{align*}
    \width(\convexClosure(k+1)) &\leq \width(\convexClosure(k)) - \increment'\enorm{\x_k - \x_{k+1}},
\end{align*}
where 
\[
\increment' = \frac{Z}{d} = \frac{2\sigma_{d-1}}{\sigma_d(\sqrt{4d})^d} \geq \frac{1}{d(\sqrt{4d})^d} \geq \frac{1}{\improvedPlaceholder}.
\]
This implies that Equation~\eqref{eqn:recurrence} holds for $\epsilon = 1/{\improvedPlaceholder}$, completing the proof. 
\hfill\BlackBox

\subsection{Proof of Theorem~\ref{thm:decomposable}}
The separability ensures that we are solving $d$ different optimization problems. For every index $i$, let $\X^*_i$ be the optimal set with respect to $g_i$. For such a one dimensional quasiconvex function, we showed in Lemma~\ref{lemma:QCGD} that $x$ follows a self-contracted curve. Thus, it cannot go in the opposite direction of the minima. By continuity in one dimension it clearly cannot overshoot. The length of this direct path is  $\dist{(\x_0)_{(i)}}{\X^*_i}$. Now observe that, \begin{align*}
    \int_0^\infty \enorm{\vel{\x}_t}\ dt &= \int_0^\infty \sqrt{\sum_{j=1}^d \roundbrack{\vel{\x}_t}_j^2} \ dt
    \\ &\leq \int_0^\infty \sum_{j=1}^d |\roundbrack{\vel{\x}_t}_j| \ dt
    \\ &= {\sum_{j=1}^d \abs{(\x_0)_j - \X^*_j}} 
    \\ &\leq \sqrt{d}\ \dist{\x_0}{\X^*}.
\end{align*}
For \GD, we have a similar proof. First, notice that the direction of the update is towards $(\x^*_j - (\x_k)_j )$. By quasiconvexity and since $g_j((\x_k)_j) \geq g_j((\x_0)_j)$, $(-\nabla g_j((\x_k)_j) (\x^*_j - (\x_k)_j )) \geq 0$. The only thing we need to show that the \GD curve does not overshoot. For an index $j$ let $\x^*_j$ be the closest minimum in $\X^*_j$ to $(\x_0)_j$. Then by \LG, $\abs{\nabla g_j((\x_k)_j} \leq L((\x_k)_j  - \x^*_j)$. Thus $\abs{\eta{\nabla g_j((\x_k)_j}} \leq \abs{(\x_k)_j  - \x^*_j}$, and the update cannot cross $\x^*_j$. 
Consequently for every $j$: $\sum_{k=0}^\infty ((\x_k)_j - (\x_{k+1})_j) = ((\x_0)_j - \x^*_j)$. Then
\begin{align*}
    \sum_{k=0}^\infty \enorm{\x_k - \x_{k+1}} &\leq \sum_{k=0}^\infty \norm{1}{\x_k - \x_{k+1}}
    \\ &= \sum_{k=0}^\infty\sum_{j=1}^d \abs{(\x_k)_j - (\x_{k+1})_j}
    \\ &= \sum_{j=1}^d\sum_{k=0}^\infty \abs{(\x_k)_j - (\x_{k+1})_j}
    \\ &= \sum_{j=1}^d\abs{(\x_0)_j - \x^*_j}
    \\ &\leq \sqrt{d} \enorm{\x_0 - \x^*},
\end{align*}
where in the last inequality we observed that for any vector $\uvec \in \Real^d$, $\norm{1}{\uvec} \leq \sqrt{d}\ \enorm{\uvec}$.
\hfill\BlackBox

\section{Proof of Theorem~\ref{thm:PLlowerBound}}
\label{appsec:proofsLowerBound}

We first provide a broad structure of the proof, and then prove the individual claims in separate subsections. \rev{The \PKL bound is proved first (Section~\ref{subsec:pkl-lb-fn}---\ref{subsec:pkl-lb-gd}), and then it is shown that the same function used in the \PKL lower bound also leads to a lower in the linear convergence case (Section~\ref{subsec:lb-linconv}). }

For any dimension $d \geq 6$, we will construct a function $f:\Real^d \to \Real$ that will be separable over its parameter $\x = (\x_{(1)}, \x_{(2)}, \ldots \x_{(d)}) \in \Real^d$, as per Equation~\eqref{eqn:decomposable}:
\[
f(\x) = \sum_{i=1}^d g(\x_{(i)}).
\]
$f$ will be constructed so that its condition number $\nu$ will be bounded as $\nu \leq  3d^2$. For this $f$ we will exhibit an $\x_0$ such that \GF or \GD have a path length lower bounded as $\pathlength \text { (or $\pathlength_\eta$) } \geq \frac{c\sqrt{d}}{\log d}\ \dist{\x_0}{\X^*}$ for $c = 1/6$ in the \GF case (see Equation~\eqref{eqn:finalLowerBoundGF} and $c = 1/16$ in the \GD case (see Equation~\eqref{eqn:finalLowerBoundGD}). 

To motivate this construction, we first prove that such a construction will lead to the statement of the theorem. Suppose the prescribed condition number bound $\kappa$ is such that $\kappa > 3d^2$. Then the condition number $\nu$ of the function we constructed is bounded by $\nu \leq \kappa$ and hence the function is in $\fancyF_\kappa$. Then

\[
\pathlength \text { (or $\pathlength_\eta$) } \geq \frac{c\sqrt{d}}{\log d}\ \dist{\x_0}{\X^*} \geq \min\curlybrack{\frac{c\sqrt{d}}{\log d}, \frac{c\kappa^{1/4}}{\log \kappa}}\ \dist{\x_0}{\X^*},
\]
completing the proof for $\kappa > 3d^2$.  

On the other hand suppose $\kappa \leq 3d^2$. Since $\kappa \geq 216$, $\sqrt{\kappa} - \sqrt{\kappa/2} > \sqrt{3}$, and so there exists a $\kappa/2 \leq \nu \leq \kappa$ such that $\sqrt{\nu/3}$ is an integer, say $d'$. Note that $d \geq d' \geq 6$ because $d' = \sqrt{\nu/3} \geq \sqrt{\kappa/6} \geq 6$ (this is the only part of the proof that uses $\kappa \geq 216$). Since $f$ is separable, we can simply ignore $d - d'$ components of $\Real^d$ and instead write $f(\x) = \sum_{i=1}^{d'} g(\x_{(i)})$. Via the same construction, the path length and condition number of $f$ will depend on $d'$ as follows:
\[
\text{the condition number is at most }\nu \leq \kappa, \text{ so that $f \in \fancyF_\kappa$},
\]and the path length is at least 
\[
\pathlength \text { (or $\pathlength_\eta$) } \geq \frac{c\sqrt{d'}}{\log d'}\ \dist{\x_0}{\X^*}.
\]
Since $216 \leq \kappa \leq 2\nu \leq 6d'^2$, $e \leq 6 \leq \sqrt{\kappa/6} = d'$. Again, since the function $\sqrt{x}/(\log x)$ is increasing in $x$ for $x \geq e^2$, we conclude

\begin{align*}
\frac{\sqrt{d'}}{ \log d'}  &\geq \frac{(\kappa/6)^{1/4}}{ \log (\sqrt{\kappa/6})}
\\ &\geq \frac{2(\kappa/6)^{1/4}}{ \log \kappa}
\\ &\geq \frac{\kappa^{1/4}}{ \log \kappa} .   
\end{align*}
This leads to the path length bound 
\[
\pathlength \text { (or $\pathlength_\eta$) } \geq \frac{c\sqrt{d'}}{\log d'}\ \dist{\x_0}{\X^*} \geq \min\curlybrack{\frac{c\sqrt{d}}{\log d}, \frac{c\kappa^{1/4}}{\log \kappa} }\ \dist{\x_0}{\X^*},
\]
completing the proof for $\kappa \leq 3d^2$. 

We now exhibit the pathological function $g$ that defines $f$ and prove some properties about it. 

\subsection{Construction of $\boldsymbol{g}$}
\label{subsec:pkl-lb-fn}
For any dimension $d \geq 6$ we will exhibit the $g$ such that for $f(\x) = \sum_{i=1}^d g(\x_{(i)})$, (a) the condition number $\nu$ of $f$ is bounded as $\nu \leq 3d^2$ and (b) the path length for some initial point $\x_0$ (different for \GF and \GD) is lower bounded by 
\[\pathlength \geq \frac{\sqrt{d}}{6\log d}\ \dist{\x_0}{\X^*} \text{ that is } c = 1/6,\]
for \GF and 
\[\pathlength_\eta \geq \frac{\sqrt{d}}{16\log d}\ \dist{\x_0}{\X^*} \text{ that is } c = 1/16,\]
for \GD with some $\eta \in [\nicefrac{1}{2L}, \nicefrac{1}{L}]$. As argued before this will prove the theorem statement. 

Define $\delta = 1/d$ and note that since $d \geq 6$,  $\delta \leq 0.2$. Define the  component function $g:\Real \to \Real$ as follows:
\begin{equation}
    g(x) =
\left\{
	\begin{array}{ll}
		0  & \mbox{if } x \leq 0 \\
		x^2 & \mbox{if } x \in [0, 0.5] \\
		0.5 - (1 - x)^2 & \mbox{if } x \in [0.5, 1 - \delta] \\
		(0.5 - \delta^2) + 2\delta(x - (1 - \delta)) & \mbox{if } x \in [1 - \delta, \gamma]
		\\ \alpha + \beta x^2 & \mbox{if } x \geq \gamma,
	\end{array}
\right.\label{eqn:gDefinition}
\end{equation}
\begin{figure}[t]
    \centering
    \includegraphics[width=\linewidth,trim=0cm 5cm 0cm 6cm, clip]{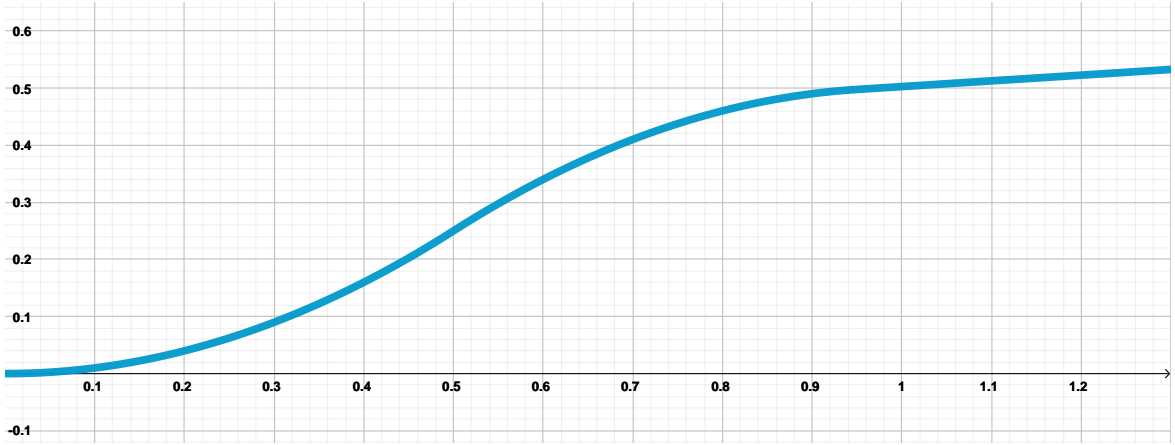}
    \caption{$g(\cdot)$ as defined in Equation~\eqref{eqn:gDefinition}.}
    \label{fig:PKLlowerbound}
\end{figure}
where 
\begin{align*}
\gamma &= 1 - \delta + 6 \log(1/2\delta),
\\\beta &= \frac{\delta}{\gamma},
\\\alpha &= (0.5 - \delta^2) + 2\delta(\gamma - (1 - \delta)) - \beta\gamma^2.
\end{align*}
(These precise values of $\alpha, \beta, \gamma$ are required for the function to satisfy differentiability at $x = \gamma$ and PKL at all points. For a reader interested in the broad idea and not the fine details we summarize the rationale behind setting these values: $\gamma$ is set so that \GF or \GD with the initial point (to be defined shortly) does not ever access the region $ x \geq \gamma$. Yet, to ensure that the function is \PKL everywhere, we have quadratic growth away from $\gamma$. $\alpha, \beta$ are set such that the function remains differentiable at $x = \gamma$.)

$g$ is plotted in Figure~\ref{fig:PKLlowerbound}. $g$ is everywhere continuously differentiable with the following gradient: 
\[\nabla g(x) =
\left\{
	\begin{array}{ll}
		0  & \mbox{if } x \leq 0 \\
		2x & \mbox{if } x \in [0, 0.5] \\
		2(1 - x) & \mbox{if } x \in [0.5, 1 - \delta] \\
		2\delta & \mbox{if } x  \in [1 - \delta, \gamma]  \\
		2\beta x & \mbox{if } x \geq \gamma.
	\end{array}
\right.
\]
Thus $f$ is also continuously differentiable. 
We consider gradient flow and gradient descent on $f$. We can write the update equations for each component separately:
\begin{align}
    \GF:& \qquad (\vel{\x})_{(i)} = -\nabla g(\x_{(i)}),\\
    \GD:& \qquad (\x^+)_{(i)} = -\eta \nabla g(\x_{(i)}).
\end{align}

We prove a path length lower bound starting from specific initialization points $\x_0$ which are introduced next. 

\subsection{Identification of $\x_0$}
\label{subsec:pkl-lb-x_0}
 We will need slightly different initialization points for \GF and \GD to ease computation (the main principle is the same). For \GF we will set the following initialization point $\x_0$:
\[(\x_0)_{(i)} =
\left\{
	\begin{array}{ll}
		0.5  & \mbox{if } i = 1 \\
		(1 - \delta) + \delta(i - 2)\log\roundbrack{1/2\delta} & \mbox{if } i > 1.
	\end{array}
\right.
\]
For \GD, for ease of computation set $\eta \in [1/4, 1/2]$ $(\equiv [1/2L, 1/L])$ such that 
\[
k_1 := \log_{1 + 2\eta}(1/2\delta) = \frac{\log(1/2\delta)}{\log(1 + 2\eta)}
\]
is a natural number. This is possible for $d \geq 6$ since \[\log(1/2\delta)~\geq~1.7 \text{ and } \roundbrack{\frac{1}{\log(2)}, \frac{1}{\log(1.5)}} \supset (1.5, 2.4) \text{ so that }\log(1/2\delta)(2.4 - 1.5) \geq 1.\] 
Observe that $k_1 \leq 3\log(1/2\delta)$. Given this $\eta$ and $k_1$, define the following initialization point $\x_0$ for \GD: 
\[(\x_0)_{(i)} =
\left\{
	\begin{array}{ll}
		0.5  & \mbox{if } i = 1 \\
		(1 - \delta) + 2\eta k_1\delta(i - 2) & \mbox{if } i > 1.
	\end{array}
\right.
\]
The \GD and \GF bounds follow the same technique but the precise computations are slightly different. Thus we write the \GD and \GF bounds separately in the following subsections. The main idea behind the staggering us discussed in the main paper and illustrated in Figure~\ref{fig:PKLiterates}. Because of the staggering, in every consecutive iterate, a single component goes from value $0.5$ to $0.0$ leading
a large path length. 
\subsection{GF Analysis}
\label{subsec:pkl-lb-gf}
We make the following observations about the function $f$ and the initial point $\x_0$. 
\begin{enumerate}[label=(1.\arabic*)]
    \item The distance between the initial and the optimal set is bounded as, 
    \begin{align}
        \dist{\x_0}{\X^*} = \enorm{\x_0 - \mathbf{0}} &\leq \sqrt{\sum_{i = 1}^d \roundbrack{1 + (i\delta\log(1/2\delta))}^2}\nonumber
        \\&\leq \sqrt{\sum_{i = 1}^d \roundbrack{1 + (i\log(d)/d)}^2}\nonumber
        \\ &\leq \sqrt{2\sum_{i = 1}^d \roundbrack{1 + (i\log(d)/d)^2}}\nonumber
        \\ &= \sqrt{2d + 2\sum_{i = 1}^d  (i\log(d)/d)^2}\nonumber
        \\ &= \sqrt{2d +  2\roundbrack{\frac{d(d+1)(2d+1)\log^2d}{6d^2}}} \nonumber
        \\ &\leq \sqrt{2d + \roundbrack{d\log^2d}} &\text{(since $d \geq 6$)} \nonumber
        \\ &\leq \sqrt{d\log^2d +  d\log^2d} &\text{(since $d \geq 6$)} \nonumber
        \\ &= \sqrt{2d}\ \log d. \label{eqn:distUBa}
    \end{align}
    \item $f^* = 0$.
    
    \item The gradients of $f$ are $L$-Lipschitz with $L = 2$. To see this, first notice that the gradients of $g$ are $2$-Lipschitz since they are $2$-Lipschitz in each of the pieces in the definition~\eqref{eqn:gDefinition}, and the derivatives are continuous. Then 
    \begin{align*}
        \inner{\nabla f(\x) - \nabla f(\y), \x - \y} &= \sum_{i=1}^d (\nabla g(\x_{(i)}) -\nabla  g(\y_{(i)}))(\x_{(i)} - \y_{(i)}) 
        \\ &\leq \sum_{i=1}^d 2(\x_{(i)} - \y_{(i)})^2 &\text{($\nabla g$ is $2$-Lipschitz)}
        \\ &= 2\esqnorm{\x - \y}.
    \end{align*}
    \item $f$ is $\mu$-\PKL for $\mu = 2/3d^2$. To see this, first observe that 
    \[
    \frac{\esqnorm{\nabla f(\x)}}{2(f(\x) - f^*)} = \frac{\esqnorm{\nabla f(\x)}}{2f(\x)} \geq \min_i\roundbrack{\min_{\x_{(i)} \in (0, \max_i(\x_0)_{(i)}]} \frac{(\nabla g(\x_{(i)}))^2}{2g(\x_{(i)})}}
     = \min_{x \in (0, \max_i(\x_0)_{(i)}]}\frac{(\nabla g(x))^2}{2g(x)}.\]
     We bound the final quantity for each piece in the definition~\eqref{eqn:gDefinition} where $g(x) \neq 0$:
     \begin{itemize}
         \item $x \in (0, 0.5]$. $g(x) = x^2$, $(\nabla g(x))^2 = 4x^2$. Thus $(\nabla g(x))^2/2g(x) = 2 \geq \mu$. 
         \item $x \in [0.5, 1- \delta]$. $g(x) \leq 1$, $(\nabla g(x))^2 \geq (2\delta)^2 = 4\delta^2$. Thus $(\nabla g(x))^2/2g(x) \geq 2\delta^2 \geq \mu$.
         \item $x \in [1 - \delta, \gamma]$. Here, $(\nabla g(x))^2 = 4\delta^2$, and 
         
         \begin{align*}
             \max_{x \in [1-\delta, \gamma ]} g(x) = g(\gamma) = &= (0.5 - \delta^2) + 12\delta\log(1/2\delta)
             \\ &\leq 0.5 + 12\log(d/2)/d 
             \\ &\leq 3 &\text{(for all $x$, $12\log(x/2)/x  \leq 2.5$)}.
         \end{align*}
         Thus, $(\nabla g(x))^2/2g(x) \geq 2\delta^2/3 = \mu$
         
         \item $x \in [\gamma, \infty)$. $(\nabla g(x))^2 = 4\beta^2x^2$, and $g(x) = (\alpha + \beta x)^2$. The ratio is minimized at $x = \gamma$, where $g(x) = (0.5 - \delta^2) + 12\delta\log(1/2\delta)$:
         \begin{align*}
             \frac{4\beta^2\gamma^2 }{g(\gamma)} &= \frac{2\delta^2 }{0.5 + 12\log(d/2)/d}
             \\ &\geq \frac{2\delta^2 }{3} &\text{(for all $x$, $12\log(x/2)/x  \leq 2.5$)}
             \\ &= \mu.
         \end{align*}
     \end{itemize}
     \item The condition number of $f$, $\nu = L/\mu \leq 3d^2$.
\end{enumerate}
Consider the time interval $[0, t_1]$ where $t_1$ is given by,
\[
t_1 = \frac{\log \roundbrack{1/2\delta}}{2}.
\]
We make the following computations to determine the value of $\x_{t_1}$:
\begin{enumerate}[label=(2.\arabic*)]
    \item $(\x_{t_1})_{(1)}$: The flow for $(\x_t)_{(1)}$ in the interval $[0, 0.5]$ is given as $(\x_t)_{(1)} = 0.5e^{-2t}$. Thus
    \[
        (\x_{t_1})_{(1)} = 0.5e^{-\log(1/2\delta)} = \delta.
    \]
    \item $(\x_{t_1})_{(2)}$: The flow for $(\x_t)_{(2)}$ in the interval $[0.5, 1 - \delta]$ is given as $(\x_t)_{(2)} = 1 - \delta e^{2t}$. As computed below, $(\x_{t})_{(2)}$ decreases from  $(1 - \delta)$ to $0.5$ for $t \in [0, t_1]$, and achieves the value $0.5$ at $t_1$:
    \[
        (\x_{t_1})_{(2)} = 1 - \frac{\delta}{2\delta} = 0.5.
    \]
    \item $(\x_{t_1})_{(i)}$, for $i \geq 3$: The flow for $(\x_t)_{(i)}$ in the interval $[1 - \delta, \infty)$ is given as $(\x_t)_{(i)} = (1 - \delta) + \delta(i - 2)\log\roundbrack{1 + 2\delta^2} - 2\delta t$. Thus,
    \begin{align*}
        (\x_{t_1})_{(i)} &= (1 - \delta) + \delta(i - 2)\log\roundbrack{1/2\delta} - 2\delta t_1
        \\ &= (1 - \delta) + \delta(i - 3)\log\roundbrack{1/2\delta}.
    \end{align*}
\end{enumerate}
Given this, first we lower bound the path length for the interval $[0, t_1]$, the path length is at least:
\[
\enorm{\x_0 - \x_{t_1}} \geq (\x_0)_{(2)} - (\x_{t_1})_{(2)} = 0.5 - \delta \geq 0.3.  
\]
Next we perform the same computations for the interval $[t_1, 2t_1]$. In observations (2.1), (2.2), (2.3) we obtained 
\[(\x_{t_1})_{(i)} =
\left\{
	\begin{array}{ll}
		\delta  & \mbox{if } i = 1 \\
		0.5  & \mbox{if } i = 2 \\
		(1 - \delta) + \delta(i - 3)\log\roundbrack{1/2\delta} & \mbox{if } i > 2.
	\end{array}
\right.
\]
Compare this to $\x_0$. Observe that $(\x_{t_1})_{(1)} \leq \delta$, and that for $i > 1$, $(\x_{t_1})_{(i)} = (\x_{0})_{(i-1)}$. Thus, for the time interval $[t_1, 2t_1]$, $((\x_{t})_{(2)}, (\x_{t})_{(3)}, \ldots (\x_{t})_{(d)})$ follow the exact same dynamics as did $((\x_{t})_{(1)}, (\x_{t})_{(2)}, \ldots (\x_{t})_{(d-1)})$ for the time interval $[0, t_1]$.
Continuing in this fashion, more generally for every $s \in \{0, 1, \ldots d-2\}$, the same computations for the interval $[st_1, (s+1)t_1]$ lead to a path length which is at least:
\[
\enorm{\x_{st_1} - \x_{(s+1)t_1}} \geq (\x_{st_1})_{(s+2)} - (\x_{(s+1)t_1})_{(s+2)} = 0.5 - \delta \geq 0.3.  
\]
Adding up all path length lower bounds for $s \in \{0, 1, \ldots, d-2\}$ we obtain a lower bound on the overall path length:
\begin{align}
    \pathlength &\geq 0.3(d-1)  \nonumber
    \\ &\geq \roundbrack{\frac{\sqrt{d}}{6\log d}}\roundbrack{\sqrt{2d}\log d} &\text{(since $d \geq 6$)} \nonumber
    \\ &\geq \roundbrack{\frac{\sqrt{d}}{6\log d}}\dist{\x_0}{\X^*}, \label{eqn:finalLowerBoundGF}
\end{align}
where the last inequality uses Equation~\eqref{eqn:distUBa}. This concludes the proof for the \PKL lower bound in the \GF case. 

\subsection{GD Analysis}
\label{subsec:pkl-lb-gd}
Consider the iterates $k \in \{1, 2, \ldots k_1\}$. As noted previously, $k_1$ is given by
\[
k_1 = \log_{1 + 2\eta}\roundbrack{1/2\delta}.
\]
First, we make the following observations analogous to the ones made in the \GF case (observations 1.1--1.5). The details can be found in the \GF analysis.
\begin{enumerate}[label=(3.\arabic*)]
    \item The distance between the initial and the optimal set is bounded as, 
    \begin{align}
        \dist{\x_0}{\X^*} = \enorm{\x_0 - \mathbf{0}} &\leq \sqrt{\sum_{i = 1}^d \roundbrack{1 + (2i\delta k_1)}^2} \nonumber
        \\ &= \sqrt{2d + 36\sum_{i = 1}^d (i\log(d)/d)^2} &\text{(since $k_1 \leq 3\log(1/2\delta))$} \nonumber
        \\ &= \sqrt{2d +  36\roundbrack{\frac{d(d+1)(2d+1)\log^2d}{6d^2}}}\nonumber
        \\ &\leq \sqrt{2d + \roundbrack{13d\log^2d}} &\text{(since $d \geq 6$)}\nonumber
        \\ &\leq 4\sqrt{d}\ \log d. \label{eqn:distUB}
    \end{align}
    \item $f^* = 0$.
    \item The gradients of $f$ are $L$-Lipschitz with $L = 2$. This is the same observation as (1.3). 
    \item $f$ is $\mu$-\PKL for $\mu = 2/3d^2$. This is the same observation as (1.4). 
     \item The condition number of $f$, $\nu = L/\mu \leq 3d^2$.
\end{enumerate}
We make the following computations to determine the value of $\x_{k_1}$:
\begin{enumerate}[label=(4.\arabic*)]
    \item $(\x_{k_1})_{(1)}$: 
    \[
        (\x_{k_1})_{(1)} \leq (\x_1)_{(1)} = 0.5 - \eta \leq 0.25.
    \]
    \item $(\x_{k_1})_{(2)}$: The iterates for $(\x_k)_{(2)}$ for $k \in \{1, 2, \ldots k_1\}$ are $(\x_k)_{(2)} = 1 - (1 + 2\eta)^k\delta$. As computed below, $(\x_{k})_{(2)}$ decreases from  $(1 - \delta)$ to $0.5$ for $k \in \{1, 2, \ldots k_1\}$ and achieves the value $0.5$ at $k_1$:
    \[
        (\x_{k_1})_{(2)} = 1 - \frac{\delta}{2\delta} = 0.5.
    \]
    \item $(\x_{k_1})_{(i)}$, for $i \geq 3$: The updates for $(\x_k)_{(i)}$ in the interval $[1 - \delta, \infty)$ are given as $(\x_k)_{(i)} = (1 - \delta) + 2\eta\delta(i - 2)k_1 - 2\eta\delta k$. Thus,
    \begin{align*}
        (\x_{k_1})_{(i)} &= (1 - \delta) + 2\eta\delta(i - 2)k_1 - 2\eta\delta k_1
        \\ &= (1 - \delta) + 2\eta\delta(i - 3)k_1.
    \end{align*}
\end{enumerate}
Given this, first we lower bound the path length for the iterates $\{1, 2, \ldots k_1\}$. The path length is at least:
\[
\enorm{\x_0 - \x_{k_1}} \geq (\x_0)_{(2)} - (\x_{k_1})_{(2)} = 0.5 - \delta \geq 0.3.  
\]
Next we perform the same computations for the interval $[t_1, 2t_1]$. Through observations (4.1), (4.2), (4.3) we obtained $(\x_{k_1})_{(1)} \in [0, 0.5 - \eta]$ and 
\[(\x_{k_1})_{(i)} =
\left\{
	\begin{array}{ll}
		0.5  & \mbox{if } i = 2 \\
		(1 - \delta) + 2\eta\delta(i - 3)k_1 & \mbox{if } i > 2.
	\end{array}
\right.
\]
Compare this to $\x_0$. Observe that for $i > 1$, $(\x_{k_1})_{(i)} = (\x_{0})_{(i-1)}$. Thus, for the iterates $\{k_1 + 1,\ldots  2k_1\}$, $((\x_{k})_{(2)}, (\x_{k})_{(3)}, \ldots (\x_{k})_{(d)})$ follow the exact same dynamics as did $((\x_{k})_{(1)}, (\x_{k})_{(2)}, \ldots (\x_{k})_{(d-1)})$ for the time interval $\{1, 2, \ldots k_1\}$. 
Continuing in this fashion, more generally for every $s \in \{0, 1, \ldots d-2\}$, the same computations for the interval $\{sk_1 + 1, sk_1 + 2, \ldots (s+1)k_1\}$ lead to a path length bound at least as large as:
\[
\enorm{\x_{sk_1} - \x_{(s+1)k_1}} \geq (\x_{sk_1})_{(s+2)} - (\x_{(s+1)k_1})_{(s+2)} = 0.5 - \delta \geq 0.3.  
\]
Adding up all path length lower bounds for $s \in \{0, 1, \ldots, d-2\}$ we obtain a lower bound on the overall path length:
\begin{align}
    \pathlength_\eta &\geq 0.3(d-1) \nonumber
    \\ &\geq \roundbrack{\frac{\sqrt{d}}{16\log d}}\roundbrack{4\sqrt{d}\ \log d} \nonumber
    \\ &\geq \roundbrack{\frac{\sqrt{d}}{16\log d}}\dist{\x_0}{\X^*}, \label{eqn:finalLowerBoundGD}
\end{align} 
where the last inequality uses the bound~\eqref{eqn:distUB}. This concludes the proof for the \PKL lower bound in the \GD case. 

\rev{\subsection{Lower Bound Under Linear Convergence}
\label{subsec:lb-linconv}
We compute the linear convergence constant $c$ for the function $f$ and relate it to the path length lower bounds shown in Section~\ref{subsec:pkl-lb-gf} (\GF) and Section~\ref{subsec:pkl-lb-gd} (\GD). }

\textbf{\GF analysis.} Suppose $\{x_t\}_{t\in\Real_0^+}$ follows the dynamics $\vel{x} = -\nabla g(x)$ with some initial point. Note that the optimal set is $X^*= [-\infty, 0]$. Then for any $x > 0$, 
\[
\frac{\vel{x}}{\dist{x}{X^*}} = \frac{\vel{x}}{x}  = -\frac{\nabla g(x)}{x} \geq -\min(2, 2\delta, 2\delta/\gamma, 2\beta).
\]
This corresponds to the definition $\nabla g(x)$ in different regions. The minimum above is given by $2\delta/\gamma$ since $\delta \leq 1$ and $\gamma > 1$ for $d = 6$. We compute $2\delta/\gamma = 2d^{-1}(1 + 6 \log(d/2))^{-1} \geq (4d\log d)^{-1}$. Thus linear convergence holds for every component of $f$ with $A = 1$ and $c = (4d\log d)^{-1}$; consequently linear convergence also holds for $f$ with constants $(1, c)$. For the $\x_0$ chosen in Section~\ref{subsec:pkl-lb-x_0}, we showed a lower bound on the path length in terms of $d$ (Section~\ref{subsec:pkl-lb-gf}) that can be translated into a lower bound in terms of $c$ as follows: 
\[
\zeta \geq \roundbrack{\frac{\sqrt{d}}{6\log d}} \dist{\x_0}{\X^*} \geq \roundbrack{\frac{\sqrt{1/c}}{12\log^{1.5}(1/c)}}\dist{\x_0}{\X^*}.
\]
This concludes the analysis in terms of the linear convergence constant $c$ for \GF. Since $d \geq 6$, this construction admits $c \in (0, (4\cdot 6\log 6)^{-1}) \supseteq (0, 0.023)$.

\textbf{\GD analysis.} As identified in Section~\ref{subsec:pkl-lb-x_0}, $\eta \geq 1/4$. Consider any $x > 0$ (corresponding to one component of the iterates) and note that the optimal set is $X^*= [-\infty, 0]$. Then for an update $x^+ \leftarrow x - \eta \nabla g(x)$ we have 
\[
\frac{x^+ - x}{\dist{x}{X^*}} = \frac{- \eta \nabla g(x)}{x}  \geq -\eta(4d\log d)^{-1} \geq (16d\log d)^{-1} .
\]
(The lower bound for $\frac{\nabla g(x)}{{x}}$ is shown in the \GF computation above and the final inequality uses $\eta \geq 1/4$.)
Thus linear convergence holds for every component of $f$ with $c = (16d\log d)^{-1}$; consequently linear convergence also holds for $f$ with this value of $c$. For the $\x_0$ chosen in Section~\ref{subsec:pkl-lb-x_0}, we showed a lower bound on the path length in terms of $d$ (Section~\ref{subsec:pkl-lb-gd}) that can be translated into a lower bound in terms of $c$ as follows: 
\[
\zeta \geq \roundbrack{\frac{\sqrt{d}}{16\log d}} \dist{\x_0}{\X^*} \geq \roundbrack{\frac{\sqrt{1/c}}{64\log^{1.5}(1/c)}}\dist{\x_0}{\X^*}.
\]
This concludes the analysis in terms of the linear convergence constant $c$ for \GD. Since $d \geq 6$, this construction admits $c \in (0, (4\cdot 6\log 6)^{-1}) \supseteq (0, 0.0058)$.
 
\hfill\BlackBox

\section{Proof of Theorem~\ref{thm:PLlowerBoundQuadratic}}
\label{appsec:proofLowedQuadratic}
For any dimension $d$, we will construct a separable quadratic function in $d$ dimensions: 
\[
f(\x) = \frac{1}{2}\sum_{i=1}^d a_i \x_{(i)}^2,
\]
where $a_i = \omega^{d-i}$ for $\omega = 11$. Since $f$ is a separable quadratic function, it is also a separable quasiconvex function. The condition number of $f$ is $\nu = \omega^{d-1}$, so that $\log \nu = (d-1)(\log\omega)$ and in particular $\sqrt{\log \nu} \leq \sqrt{d(\log\omega)}$. Also $\X^* = \{\mathbf{0}\}$. In what follows we will identify an $\x_0$ such that the path length for \GF satisfies $\pathlength \geq 0.7 \sqrt{d}\ \dist{\x_0}{\X^*}$ and the path length for \GD for $\eta = 1/2a_1 = 1/2L$ satisfies $\pathlength_\eta \geq 0.5 \sqrt{d}\ \dist{\x_0}{\X^*}$. Before we illustrate this construction, we prove that such a construction would lead to the results in Equations~\eqref{eqn:quadraticLowerGF} and~\eqref{eqn:quadraticLowerGD}. 

For the given value of $\kappa$ and $d$, we consider two cases. Suppose $\log \kappa \geq (d-1)(\log \omega)$. Then, the above function with condition number $\nu$ satisfies $\nu \geq \kappa$ so that $f \in \fancyQ_\kappa$. The construction ensures that
\[
\pathlength \text { (or $\pathlength_\eta$) } \geq c\sqrt{d}\ \dist{\x_0}{\X^*},
\]
for appropriate values of $c$ which implies Equations~\eqref{eqn:quadraticLowerGF} and~\eqref{eqn:quadraticLowerGD}. 

On the other hand, suppose $\log \kappa \leq (d-1)(\log \omega)$. Then we identify the largest $2 \leq d' \leq d$ such that $\log \kappa \geq (d'-1)(\log \omega)$ (since $\kappa \geq 5$, this is possible). Note that this means $\log \kappa \leq d'(\log \omega)$. Next we instantiate the construction for dimension $d'$ instead of $d$, that is
\[
f(\x) = \frac{1}{2}\sum_{i=1}^{d'} a_i \x_{(i)}^2,
\]
with $a_i = \omega^{d' - i}$. For this $f$, the condition number $\omega^{d'-1}$ is smaller than $\kappa$ so that $f \in \fancyQ_\kappa$. The path length is at least 
\begin{align*}
    \pathlength \text { (or $\pathlength_\eta$) } &\geq c\sqrt{d'}\ \dist{\x_0}{\X^*} 
    \\ &\geq c\sqrt{\log \kappa/\log \omega}\ \dist{\x_0}{\X^*} 
    \\ &= (c/\sqrt{\log \omega})\sqrt{\log \kappa}\ \dist{\x_0}{\X^*}.
\end{align*}
As we will prove, in the $\GF$ case, $c  = 0.7$ so that $c/\sqrt{\log 11} \geq 0.45$, and in the $\GD$ case, $c  = 0.5$ so that $c/\sqrt{\log 11} \geq 0.3$. This leads to the bounds in Equations~\eqref{eqn:quadraticLowerGF} and~\eqref{eqn:quadraticLowerGD}.

Now we analyze the path length for $f$ in $d$ dimensions and show the bound $\pathlength \text { (or $\pathlength_\eta$) } \geq c\sqrt{d}\ \dist{\x_0}{\X^*}$ for a specific $\x_0$.

\subsection{\GF Analysis}
Define $\x_0 = \mathbf{1}_d$ so that $\dist{\x_0}{\X^*} = \sqrt{d}$. Set $\delta = 0.07$. Observe that the \GF dynamics lead to $(\x_t)_{(i)} = e^{-a_i t}$. Consider the time steps $\{t_i\}_{i=0}^d$ where $t_0 = 0$ and $t_i = \log (1/\delta)/a_i$. Observe the following: 
\begin{enumerate}
    \item For every $i \in [d]$, $(\x_{t_i})_{(i)} = \delta$. 
    \item For every $i \in [d-1]$, 
    \begin{align*}
        (\x_{t_i})_{(i + 1)} &= e^{-a_{i+1} t_i}
       \\ &= e^{-\log(1/\delta)/\omega}
        \\ &= \delta^{1/\omega}.
    \end{align*}
\end{enumerate}
Thus splitting the path length integral in these time steps, 
\begin{align*}
    \pathlength &= \int_0^\infty \enorm{\vel{\x}_t}\ dt
    \\ &\geq \sum_{i=1}^d \int_{t_{i-1}}^{t_{i}}
    \enorm{\vel{\x}_t}\ dt
    \\ &\geq \sum_{i=1}^d \enorm{\x_{t_i} - \x_{t_{i-1}}}
    \\ &\geq \sum_{i=1}^d (\x_{t_{i-1}} - \x_{t_i} )_{(i)}
    \\ &\geq \sum_{i=1}^d (\delta^{1/\omega} - \delta )
    \\ &= d(\delta^{1/\omega} - \delta )
    \\&= \sqrt{d}\ (\delta^{1/\omega} - \delta )\ \dist{\x_0}{\X^*}
    \\&\geq 0.7\sqrt{d}\ \dist{\x_0}{\X^*} &\text{(plugging in values of $\delta$ and $\omega$)}.
\end{align*}

\subsection{\GD Analysis}
Define $\x_0 = \mathbf{1}_d$ so that $\dist{\x_0}{\X^*} = \sqrt{d}$. Set $\delta = e^{-3}$. Let $\eta = 1/2a_1 = 1/2L$. Observe that the \GD iterates are $(\x_k)_{(i)} = (1 - \eta a_i)^k$. Consider the iterates $\{k_i\}_{i=0}^d$ where $k_0 = 0$ and $k_i = a_1\log (1/\delta)/a_i$ which are integers. Observe the following: 
\begin{enumerate}
    \item For every $i \in [d]$, \begin{align*}
        (\x_{t_i})_{(i)} &= (1 - \eta a_i)^{k_i}
        \\ &\leq e^{-k_i\eta a_i}
        \\ &\leq \sqrt{\delta}. 
    \end{align*}
    \item For every $i \in [d-1]$, 
    \begin{align*}
        (\x_{t_i})_{(i + 1)} &= (1 - \eta a_{i+1})^{k_i}
        \\ &\geq e^{-2a_{i+1} k_i} &\text{($1 - x \geq e^{-2x}$ for $x \leq 0.5$)}
       \\ &= e^{-\log(1/\delta)/\omega}
        \\ &= \delta^{1/\omega}.
    \end{align*}
\end{enumerate}
Thus splitting the path length sum in these iterates, 
\begin{align*}
    \pathlength_\eta &= \sum_{k=0}^\infty \enorm{\x_k - \x_{k+1}}
    \\ &\geq \sum_{i=0}^{d-1}\sum_{k=k_i}^{k_{i+1}-1} \enorm{\x_k - \x_{k+1}}
    \\ &\geq \sum_{i=0}^{d-1} \enorm{\x_{k_i} - \x_{k_{i+1}}}
    \\ &\geq \sum_{i=0}^{d-1} (\x_{k_{i}} - \x_{k_{i+1}} )_{(i+1)}
    \\ &\geq \sum_{i=0}^{d-1} (\delta^{1/\omega} - \sqrt{\delta} )
    \\ &= d(\delta^{1/\omega} - \sqrt{\delta} )
    \\&= \sqrt{d}\ (\delta^{1/\omega} - \sqrt\delta )\ \dist{\x_0}{\X^*}
    \\&\geq 0.5\sqrt{d}\ \dist{\x_0}{\X^*}, 
\end{align*}
plugging in values of $\delta$ and $\omega$.
\hfill\BlackBox

\rev{
\section{Lower Bound Simulations}
In this section, we empirically simulate the lower bound constructions shown in Section~\ref{sec:lower-bounds} and draw some insights into the tightness of these bounds. }
\subsection{\PKL}
\label{appsec:PL-simulation}
In this simulation, we assess the \PKL lower bound construction of Theorem~\ref{thm:PLlowerBound}. 
For the constructed lower bound function $f$ in the proof, it is not entirely evident if the computations are tight and that 
the final path length bound is $\Omega(\kappa^{1/4}/\log\kappa)$ (perhaps it has a larger dependence like $\Omega(\sqrt{\kappa})$). To assess this, we simulate \GD with the $f$ described in Section \ref{subsec:pkl-lb-fn} and numerically compute the path length. By varying the dimension $d \in [e^2, e^9]$, we obtain a range of values for the condition number of $f$: $\kappa \in [28.6, 3.29\cdot 10^7]$. For each of these functions with different $\kappa$ values, the step-size for \GD and the initialization point are set as described in Section~\ref{subsec:pkl-lb-x_0}. 
The path length is computed by adding the lengths of all the updates, until $\enorm{\x_k} \leq 10^{-6}$. At this point the remaining distance to the origin $\enorm{\x_k}$ is added to the path length computation. Finally, $\mu$ is computed as the `effective' PKL constant from the iterates actually seen while running the simulation: 
\[
\mu = \max_{t \in \{0, 1, \ldots k\}} \frac{\esqnorm{\nabla f(\x_t)}}{2f(\x_t)}.
\]
Consequently, since $L  = 2$,  we obtain $\kappa = 2/\mu$. The observed path length ratio $\pathlength_\eta/\dist{\x_0}{\X^*}$ is plotted against the effective $\kappa$ in Figure~\ref{fig:PLGD-simulation}. 

The X-axis in the figure corresponds to $\kappa \in [28.6, 3.29\cdot 10^7]$. From the figure, we observe that the path length ratio is $\approx 3\kappa^{1/4}/\log\kappa$ across various values of the effective $\kappa$. Note that for the given range of $\kappa$, $\log\log\kappa \in [1.21, 2.85]$. Thus if we were to modify the X-axis with a $\log\log\kappa$ factor, the dependence would no longer be linear. We infer visually that $\Omega(\kappa^{1/4}/\log\kappa)$ is the right dependence on $\kappa$ including $\log\log\kappa$ factors. However, the constants in the lower bound can be improved since Theorem~\ref{thm:PLlowerBound} only shows $\kappa \geq \kappa^{1/4}/16\log\kappa$. 

\begin{figure}
    \centering
    \includegraphics[width=0.6\linewidth]{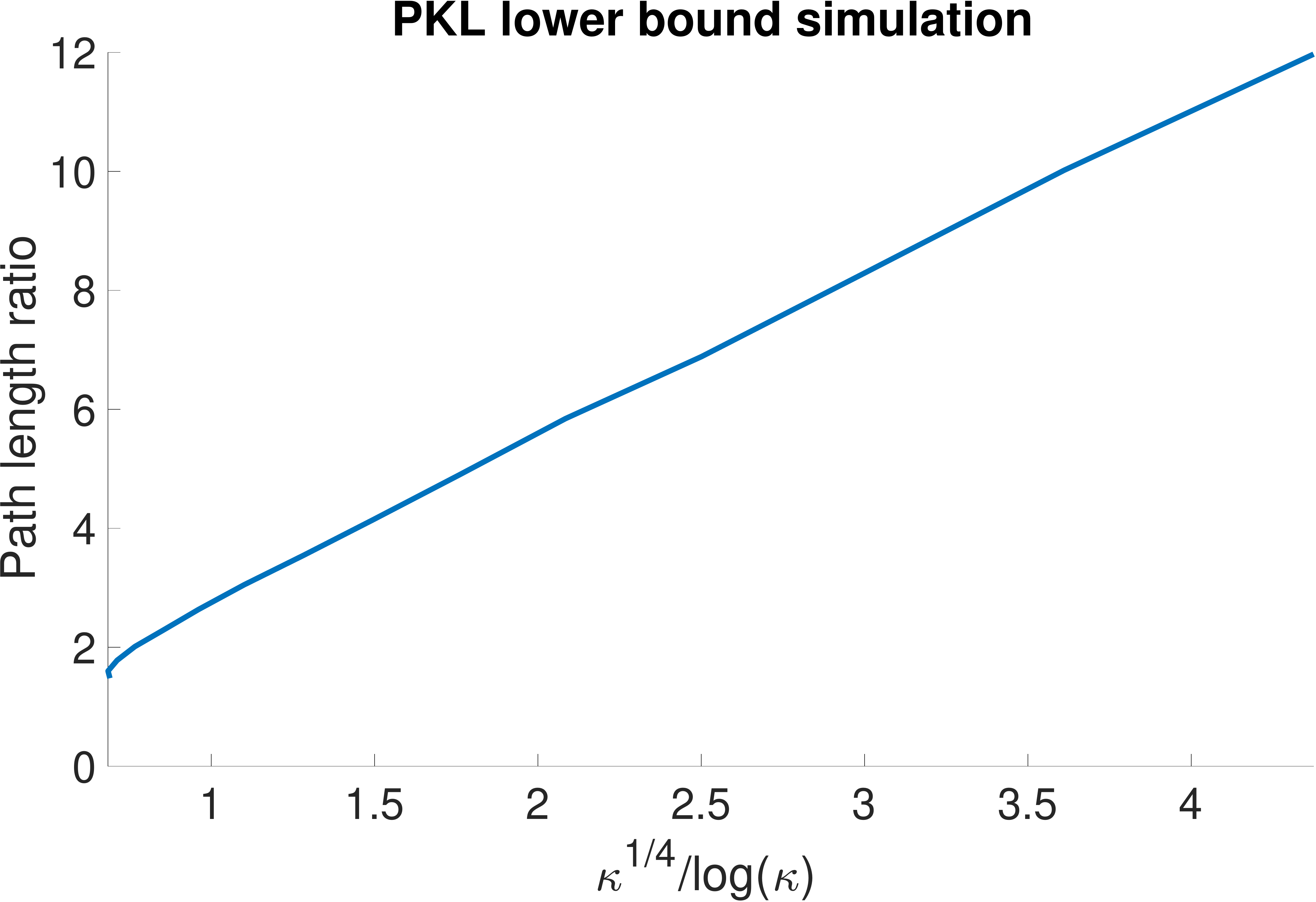} 
    \caption{Dependence of the path length ratio $\pathlength_\eta/\dist{\x_0}{\X^*}$ on the condition number for the PKL lower bound construction in the proof of Theorem~\ref{thm:PLlowerBound}. We infer that the dependence is $\Omega(\kappa^{1/4}/\log\kappa)$, including $\log\log\kappa$ factors.}
    \label{fig:PLGD-simulation}
\end{figure}

\subsection{Quadratics}
\label{appsec:quadratic-simulation}
For quadratics, we showed a 
$\Theta(\log(\sqrt{\kappa}))$ upper bound (Theorem~\ref{thm:QUADGF}) and lower bound (Theorem~\ref{thm:PLlowerBoundQuadratic}). We simulate the lower bound example described in the proof of Theorem~\ref{thm:PLlowerBoundQuadratic} to assess the constants in these bounds. Consider gradient descent and gradient flow with 
\[
f(\x) = \frac{1}{2} \sum_{i=1}^da_i \x_{(i)}^2,
\]
where $d = 150$ and $a_i$ is set similar to the construction in the proof of Theorem~\ref{thm:PLlowerBoundQuadratic}: for every $i \in [d]$, $a_i = \omega^{d-i}$. To vary the condition number $\kappa = \omega^{d-1}$, we vary $\omega \in [1.00008, 2]$ (instead of varying $d$ as we did in the proof of Theorem~\ref{thm:PLlowerBoundQuadratic}). This leads to $\kappa \in [9.553 \cdot 10^{4}, 6.2807 \cdot 10^{11}]$. For the initialization point, we set $(\x_0)_i = 1$ for every $i \in [d]$.

\begin{figure}
    \centering
	\includegraphics[width=0.48\linewidth]{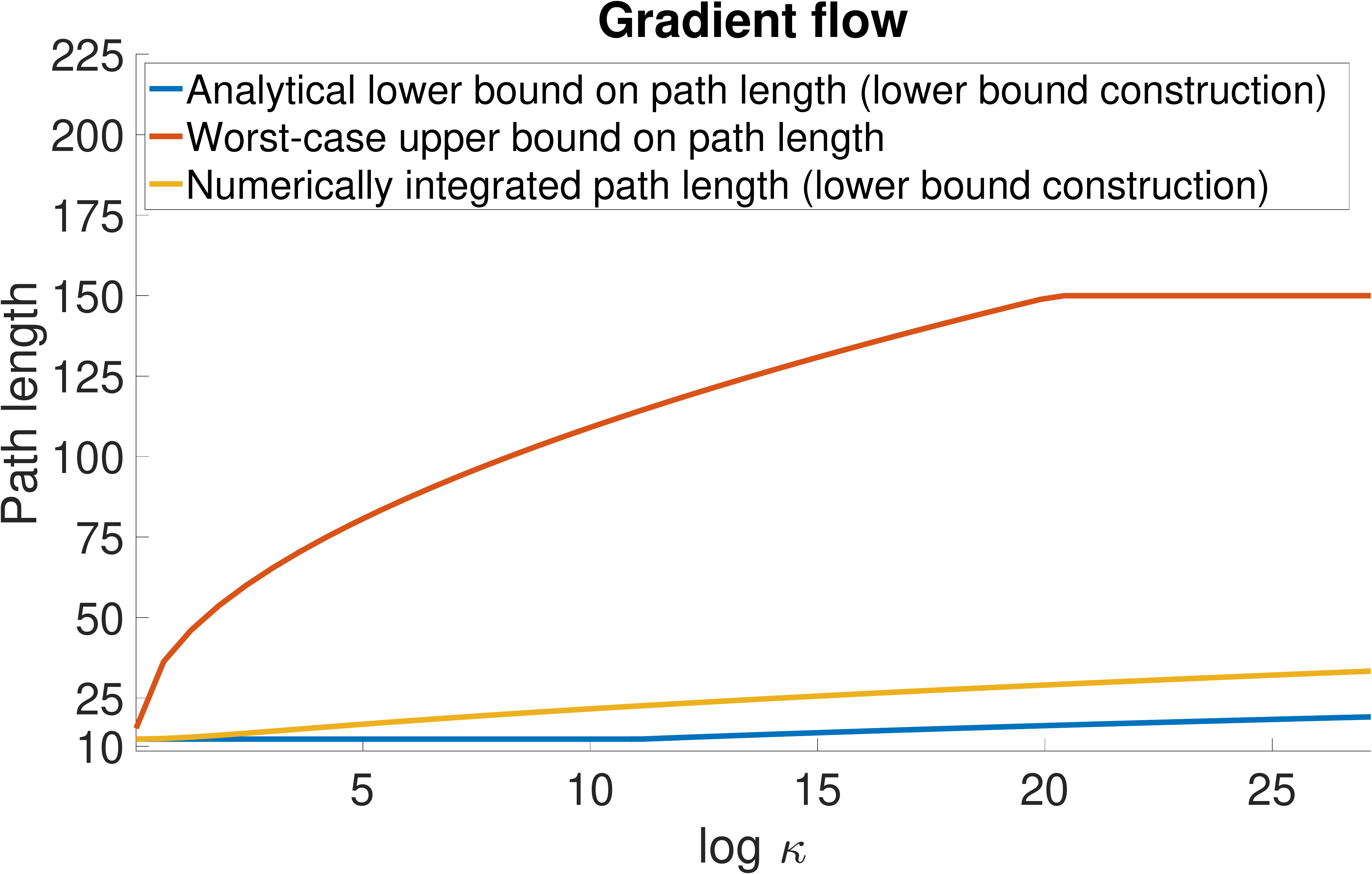}
	\includegraphics[width=0.48\linewidth]{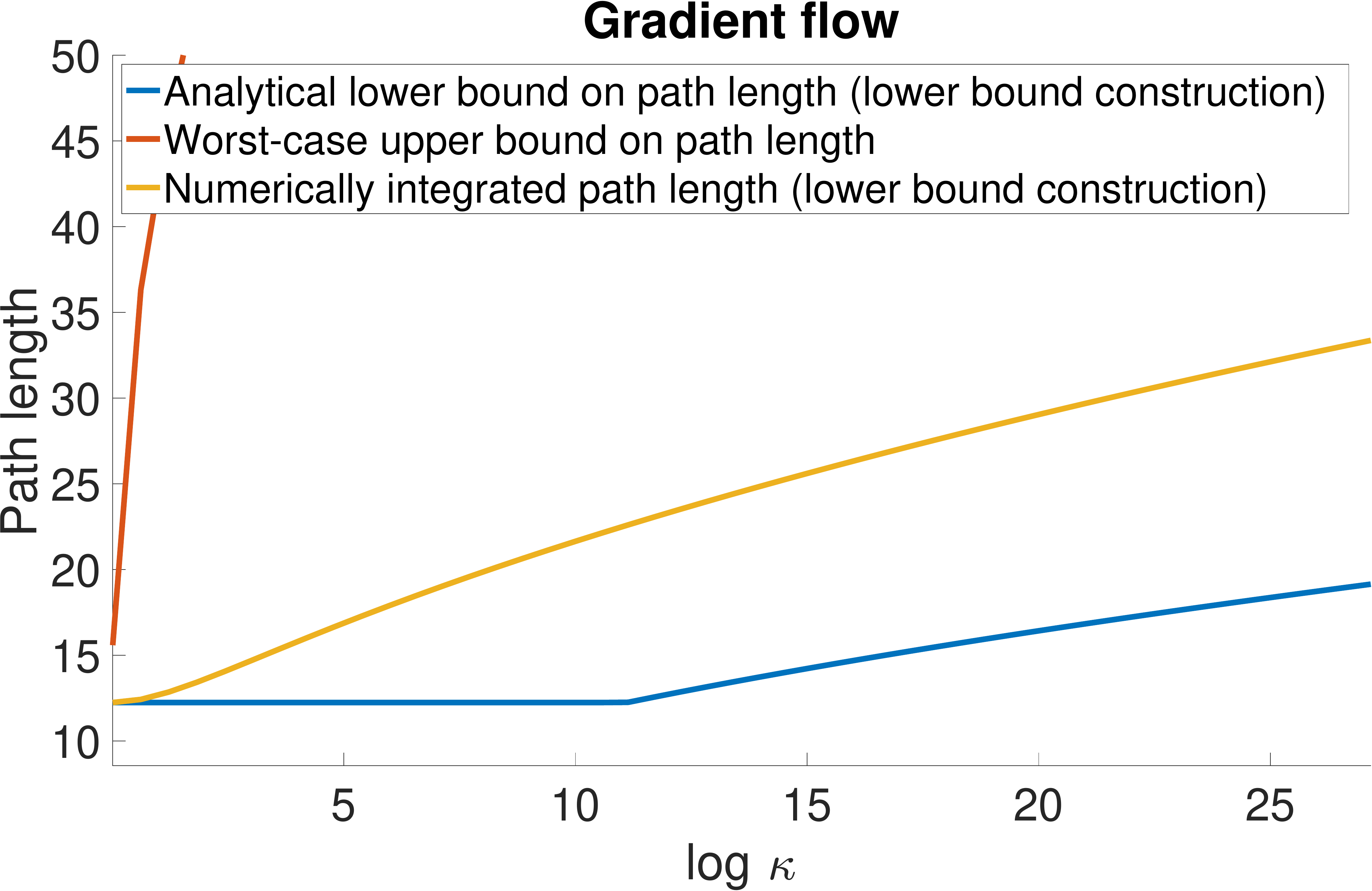}\\
    \vspace{0.2cm}
    \includegraphics[width=0.48\linewidth]{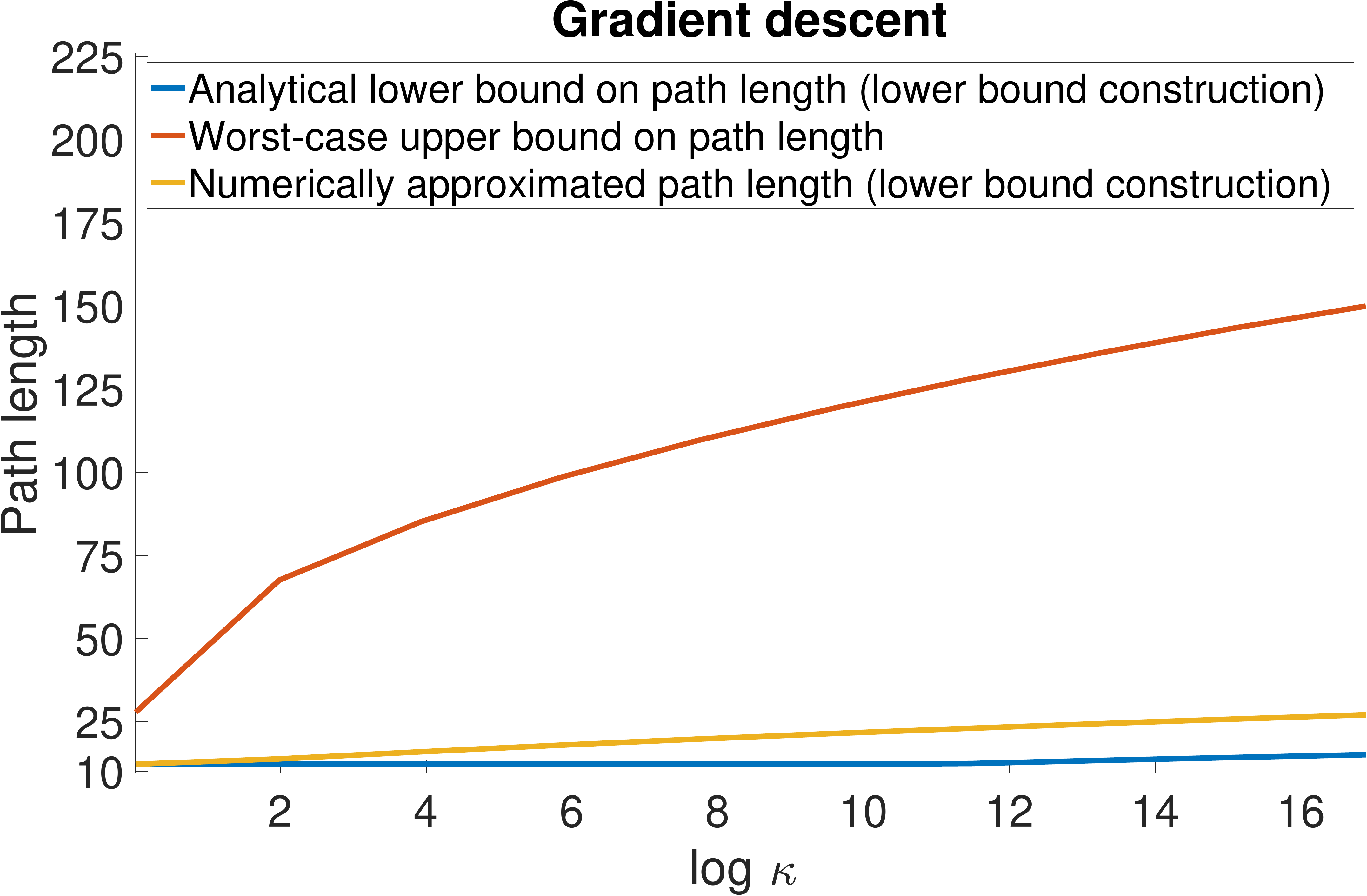} 
   \includegraphics[width=0.48\linewidth]{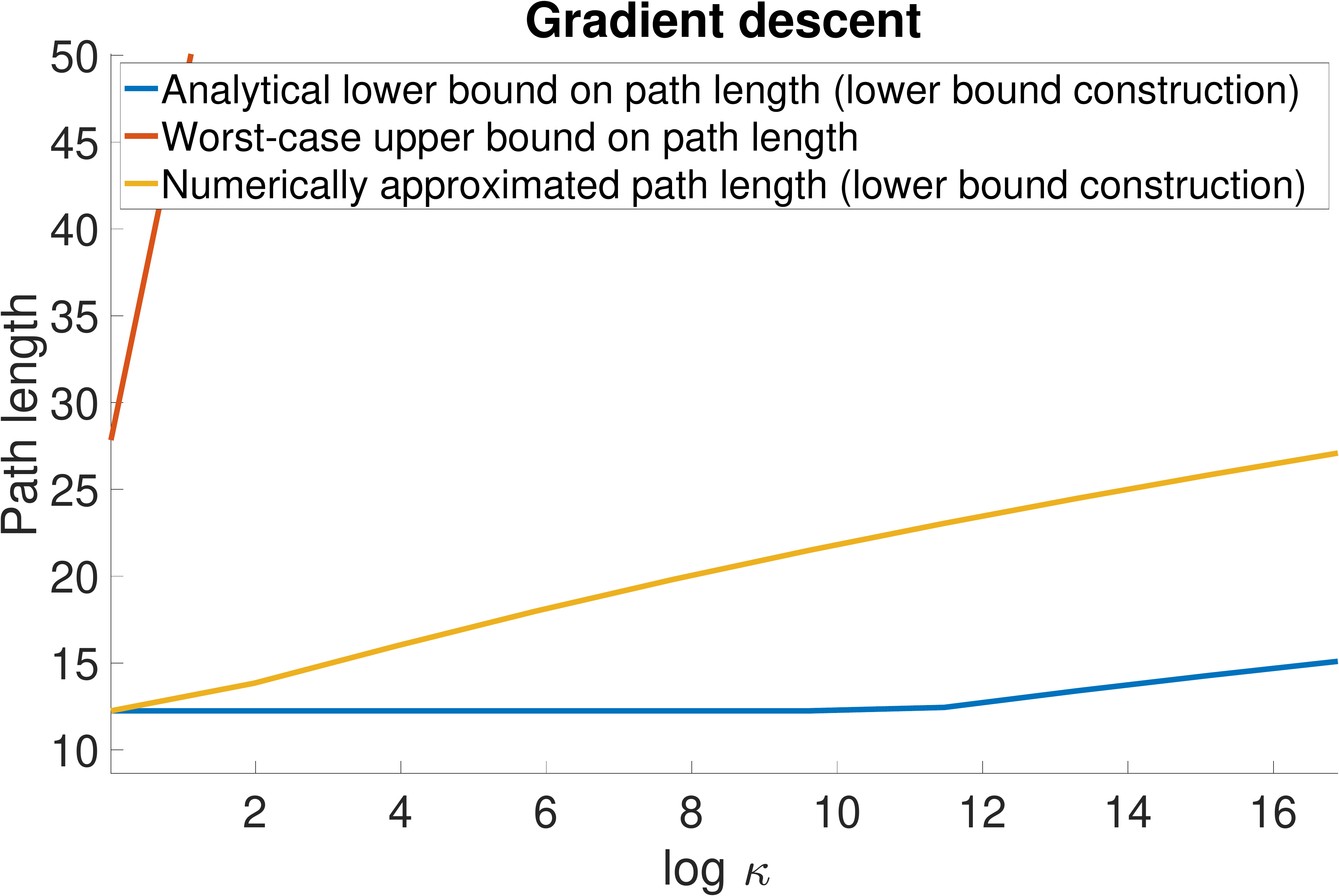}
    \caption{Path length of \GF (top) and \GD (bottom) for quadratic objectives. The lower limit on the X-axis is the same for all plots. The plots on the right are `zoomed in' versions of the plots on the left; they focus on a smaller range for the Y-axis. The lower bound refers to the specific construction in Theorem~\ref{thm:PLlowerBoundQuadratic}, described in Section~\ref{appsec:quadratic-simulation}. The shortest path has length $\sqrt{d} = \sqrt{150} \approx 12.25$, and the $\kappa$-independent upper bound on the path length is $d$.}
    \label{fig:quadratic-simulation}
\end{figure}


 The \GF path length is computed by performing the path length integral numerically (since we know $\x_t$ at each point in the case of quadratics). We use MATLAB's numerical integration function \texttt{integral}, with the \texttt{AbsTol} parameter set to $10^{-50}$. The \GD path length is computed by simulating \GD with step size $\eta = 1/2a_1$ and adding the lengths of all the updates, until every component except the last one is smaller than $10^{-2}$, ie $\max_{i \neq d} (\x_{k})_{(i)} < 10^{-2}$. At this point the remaining distance to the origin $\enorm{\x_k}$ is added to the path length computation. We use this heuristic since \GD convergence is quite slow when $\kappa$ is very large (the asymptotic convergence rate with respect to $\kappa$ is $\mathcal{O}(\kappa)$). Further, we limit the largest $\kappa$ in the \GD plot to around $2\cdot 10^7$. The results are plotted in Figure~\ref{fig:quadratic-simulation}. 
 
Further, to verify that our lower bound construction is non-trivial, we compare the path length of the construction in Theorem~\ref{thm:PLlowerBoundQuadratic} with a randomized construction that has the $a_i$ values sampled as follows: $a_1 = 1, a_d = 1/\kappa, a_i \sim \text{Unif}(1/\kappa,1)$, where $\kappa = \omega^{d-1}$ for $\omega \in [1.00008, 2]$ as in the previous construction. For the initialization point, every $(\x_0)_i$ is sampled independently from $\text{Unif}(0, 1)$ and then normalized so that $\enorm{\x_0} = \sqrt{d}$. The results are plotted in Figure~\ref{fig:quadratic-simulation-random}. 
\begin{figure}
    \centering
    \includegraphics[width=0.48\linewidth]{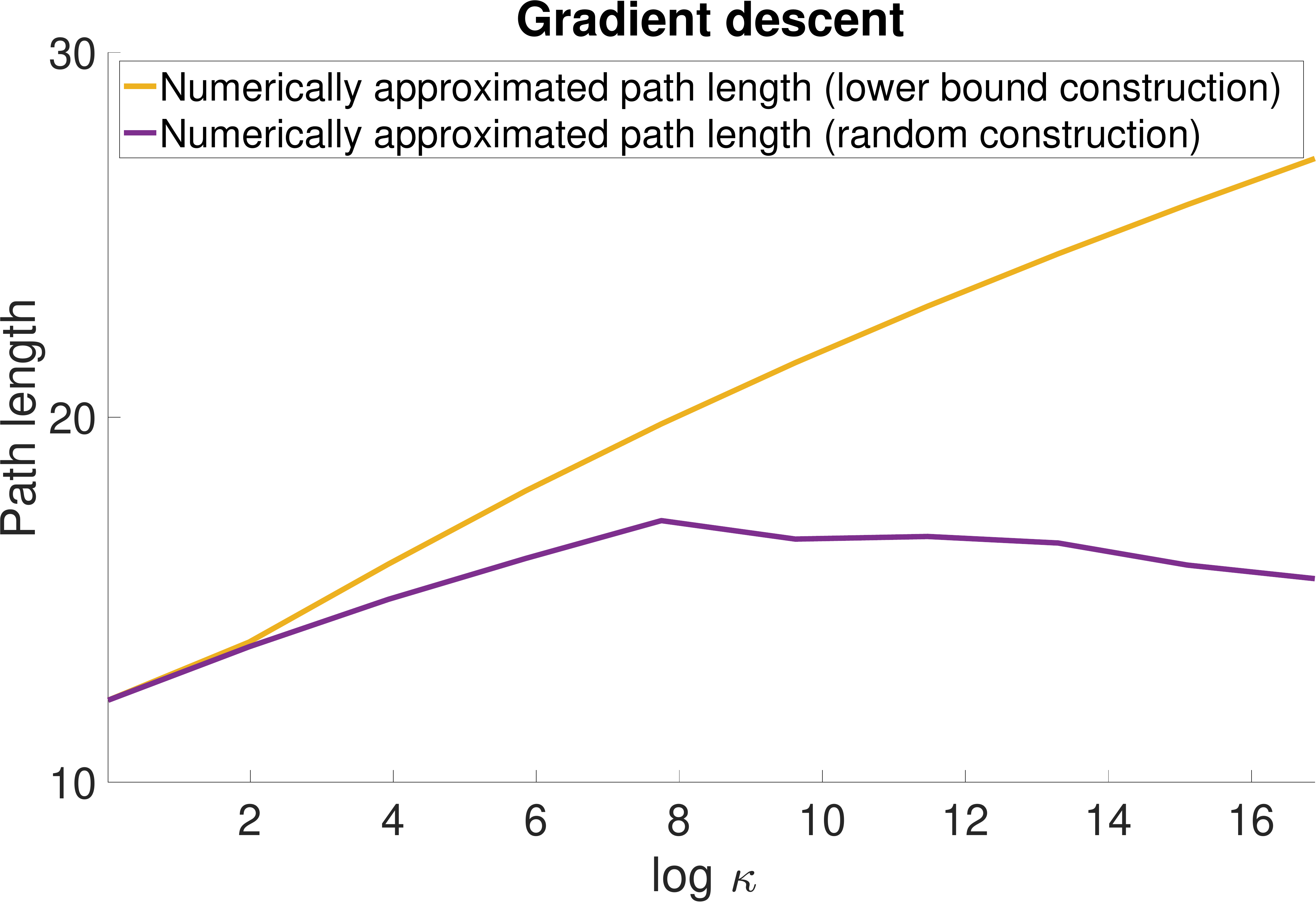} 
     \includegraphics[width=0.48\linewidth]{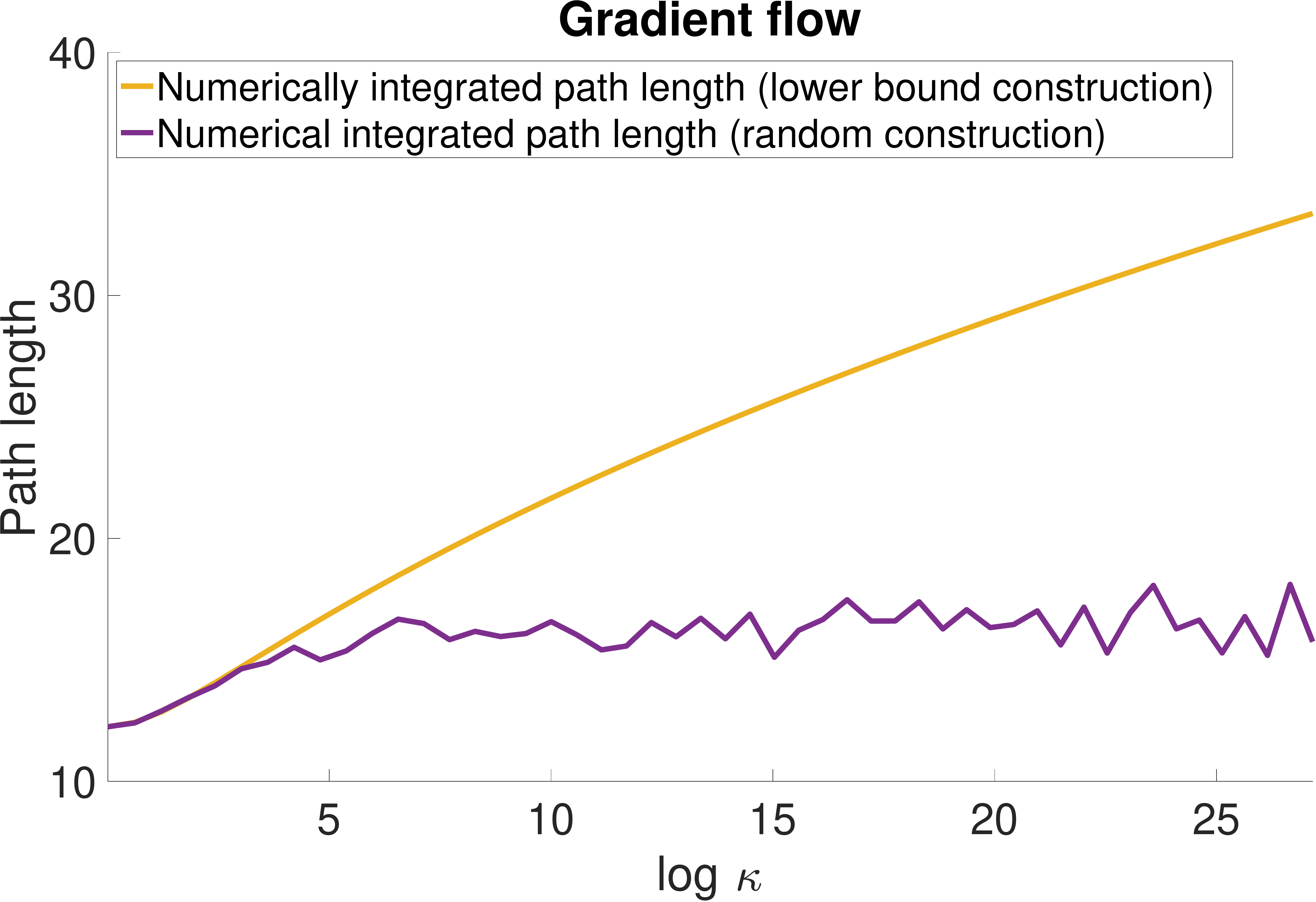} 
    \caption{The lower bound construction of Theorem~\ref{thm:PLlowerBoundQuadratic} has larger path length than the randomized construction defined in Appendix~\ref{appsec:quadratic-simulation}. }
    \label{fig:quadratic-simulation-random}
\end{figure} 
 
We do not show the results with averaging across multiple runs for the randomized construction, but we observed behavior similar to Figure~\ref{fig:quadratic-simulation-random} across runs. Thus the randomized quadratic construction does not have a $\sqrt{\log \kappa}$ dependence for its path length. This indicates that the lower bound construction in Theorem~\ref{thm:PLlowerBoundQuadratic} is non-trivial. At the same time, Figure~\ref{fig:quadratic-simulation} shows that the upper bound overestimates the path length of this lower bound construction. Given these  results, we conjecture that the constants in the upper bound can be improved.

\end{document}